\newcommand{\cmark}{\ding{51}}%
\newcommand{\xmark}{\ding{55}}%
\newtheorem{assumption}{Assumption}
\newtheorem{theorem}{Theorem}
\newtheorem{lemma}{Lemma}
\newtheorem{corollary}{Corollary}
\crefname{assumption}{Assumption}{Assumptions}
\newtheorem{definition}{Definition}
\newtheorem{remark}{Remark}
\newtheorem{proposition}{Proposition}
\title{DeCAF: Decentralized Consensus-And-Factorization for Low-Rank Adaptation of Foundation Models
% DLoRA: Decentralized Low-Rank Adaptation of Vision and Language Models
}
\begin{document}
\author[1]{Nastaran Saadati, Zhanhong Jiang, Joshua R. Waite, Shreyan Ganguly, Aditya Balu}
\author[2]{Chinmay Hegde}
\author[1]{Soumik Sarkar}
\affil[1]{Iowa State University, Ames, IA, USA\\ \texttt{\{nsaadati, zhjiang, jrwaite, shreyang, baditya, soumiks\}@iastate.edu}}
\affil[2]{New York University, New York, NY, USA\\ \texttt{chinmay.h@nyu.edu}}

\maketitle

\begin{abstract}
Low-Rank Adaptation (LoRA) has emerged as one of the most effective, computationally tractable fine-tuning approaches for training Vision-Language Models (VLMs) and Large Language Models (LLMs).  
LoRA accomplishes this by freezing the pre-trained model weights and injecting trainable low-rank matrices, 
%While LoRA has been integrated into federated learning settings to address data privacy concerns, federated learning often relies on a central server, introducing a single point of failure and potential privacy risks.
allowing for efficient learning of these foundation models even on edge devices. However, LoRA in decentralized settings still remains under-explored, particularly for the theoretical underpinnings due to the lack of smoothness guarantee and model consensus interference (defined formally below). 
This work improves the convergence rate of decentralized LoRA (DLoRA) to match the rate of decentralized SGD by ensuring gradient smoothness. We also introduce DeCAF, a novel algorithm integrating DLoRA with truncated singular value decomposition (TSVD)-based matrix factorization to resolve consensus interference.
% In this paper, we improve the existing convergence rate of the decentralized LoRA (DLoRA) to match the best available one as in regular decentralized SGD, ensuring the smoothness guarantee. Subsequently, to address the model consensus interference issue, we propose a novel algorithm integrating DLoRA with matrix factorization, termed Decentralized Consensus-And-Factorization (DeCAF), resorting to truncated singular value decomposition (TSVD). 
Theoretical analysis shows TSVD’s approximation error is bounded and consensus differences between DLoRA and DeCAF vanish as rank increases, yielding DeCAF’s matching convergence rate.
% Our theoretical analysis quantifies the approximation error due to TSVD and reveals that the difference of consensus between DLoRA and DeCAF vanishes, establishing the same convergence rate for DeCAF.
% In this paper, we introduce Decentralized LoRA (DLoRA), a novel approach for fine-tuning VLMs and LLMs using LoRA in a fully decentralized manner. 
% Such a framework eliminates the need for a central server by leveraging peer-to-peer communication and decentralized consensus protocols. This design improves scalability and minimizes communication overhead compared to traditional decentralized fine-tuning methods that utilize the entire parameter set. 
% We also present a variant of our algorithm (dubbed DLoRA-FA) that freezes the randomly initialized non-zero matrix of LoRA, $\mathbf{A}$, while exclusively fine-tuning the zero-initialized matrix, $\mathbf{B}$ to streamline the communication process and enhance efficiency. 
%Within our framework, each agent trains on its local data and periodically communicates its LoRA weights with neighboring agents.
% We provide theoretical analysis of the convergence rate for proposed methods, revealing the relationship between the low rank and error bounds. 
% For validation, we conduct comprehensive empirical evaluations across diverse VLM and LLMs tasks using multiple datasets. Our empirical results confirm our theoretical claims, showing that our algorithms outperform local training and achieves performance comparable to federated learning in both IID and non-IID data scenarios. (\textcolor{red}{Probably need simplification})
Extensive experiments across vision/language tasks demonstrate our algorithms outperform local training and rivals federated learning under both IID and non-IID data distributions.
\end{abstract}

\section{Introduction}
% (\textcolor{black}{\textbf{Storyline:} 1. We revisit DLoRA and present theoretical analysis by improving its convergence rate (\textcolor{blue}{done}); 2. we identify the model consensus interference issue due to individual consensus in DLoRA and many existing distributed LoRA algorithms and use this as motivation (\textcolor{blue}{done}); 3. we propose one variant of DLoRA with consensus-and-factorization and use SVD (not yet); 4. we analyze the approximation error due to SVD and consensus error between the vanilla DLoRA and DLoRA with factorization (\textcolor{blue}{not yet}); 5. Empirical results validate our theoretical claims and surprisingly show the robustness of DLoRA with factorization in the ring topology (probably some numerical results can be used for justification) (\textcolor{blue}{partially done}); 6. we may also need to be upfront about the SVD as reviewers will ask what extra computational overhead is caused. Probably some empirical results can be shown for this (\textcolor{blue}{not yet}).})

\label{sec:intro}

Foundation models have gained widespread popularity, with notable examples such as CLIP~\cite{radford2021learningtransferablevisualmodels} and GPT \cite{openai2024gpt4technicalreport,10.1007/s11023-020-09548-1} showcasing exceptional performance across a wide range of tasks. However, fine-tuning foundation models for personalized downstream tasks requires immense computational resources \cite{rae2022scalinglanguagemodelsmethods, hoffmann2022trainingcomputeoptimallargelanguage, shukor2024skippingcomputationsmultimodalllms}. Low-rank adaptation (LoRA) has become a favored solution to address this challenge, and significantly lowers the number of learnable parameters during fine-tuning. LoRA achieves this by freezing the pre-trained model weights and injecting trainable low-rank matrices \cite{hu2021loralowrankadaptationlarge}. But crucially, LoRA requires access to the \emph{entire} fine-tuning dataset. 

Decentralized learning~\cite{beltran2023decentralized,hallaji2024decentralized} is a training paradigm where data is partitioned across multiple devices communicating over an arbitrary  network topology. Each device maintains its local dataset and trains a personalized model while collaborating with other devices in the network. Decentralized learning enables model training without centralized data aggregation, preserving both data locality and ensuring personalization while benefiting from collaborative knowledge sharing~\cite{el2021collaborative}. A recent work~\cite{ghiasvand2025decentralized} has attempted to study LoRA in decentralized settings and proposed Dec-LoRA to address the issue of scalable LLM fine-tuning in decentralized environments. Despite its established rigorous convergence for smooth non-convex objectives, the rate remains $\mathcal{O}(1/T^{1/3})$ (where $T$ is the number of iterations), which is worse than the standard one (i.e., $\mathcal{O}(1/\sqrt{T})$) attained for various federated or decentralized SGD methods~\cite{haddadpour2019convergence,koloskova2020unified}. 
Also, Dec-LoRA adopts the individual consensus update for low-rank matrices, leaving the so-called \textit{model consensus interference}~\cite{sun2024improving,wang2024flora,guo2024selective,zhu2024deer} issue unsolved. In the theoretical analysis, a smoothness assumption was made on the objective w.r.t full parameters. However, such an assumption fails to guarantee the smoothness of the objective w.r.t low-rank adapters~\cite{sun2024improving}.
Additionally, the empirical insights are limited due to the lack of comparison between Dec-LoRA and other distributed or decentralized counterparts, focusing only on the evaluation of language models. Thus, a question naturally arises:
    \textit{Can we develop a decentralized LoRA algorithm to address model consensus interference issue, while matching its convergence rate for non-convex and smooth objectives to $\mathcal{O}(1/\sqrt{T})$?}

%In the context of large foundation models, decentralized learning can enable collaborative fine-tuning while  maintaining data privacy~\cite{dong2024fine}. 
\textbf{Contributions.} To this end, we first revisit DLoRA (this essentially is Dec-LoRA, and we use it for distinction) and improve its convergence rate, by imposing an assumption (Assumption~\ref{assum_3} part 1) and deriving a new smoothness constant for characterizing the convergence analysis. Our analysis shows the precise relationship between the intrinsic rank in LoRA and convergence rate in the decentralized setting.
% and illustrates how rank selection influences the trade-off between model capacity and collaborative training dynamics. 
% In this paper, we introduce Decentralized LoRA (\textbf{DLoRA}), a novel approach for training large vision and language models using Low-Rank Adaptation in a \textit{fully} decentralized setting. See  \cref{fig:dlora_diagram}. 
We also develop a parameter-efficient variant of DLoRA (by freezing one of the low-rank factors, \textbf{DLoRA-FA}) to further enhance the tradeoff between performance and efficiency.
\begin{figure*}
    \centering
    \includegraphics[width=0.8\linewidth]{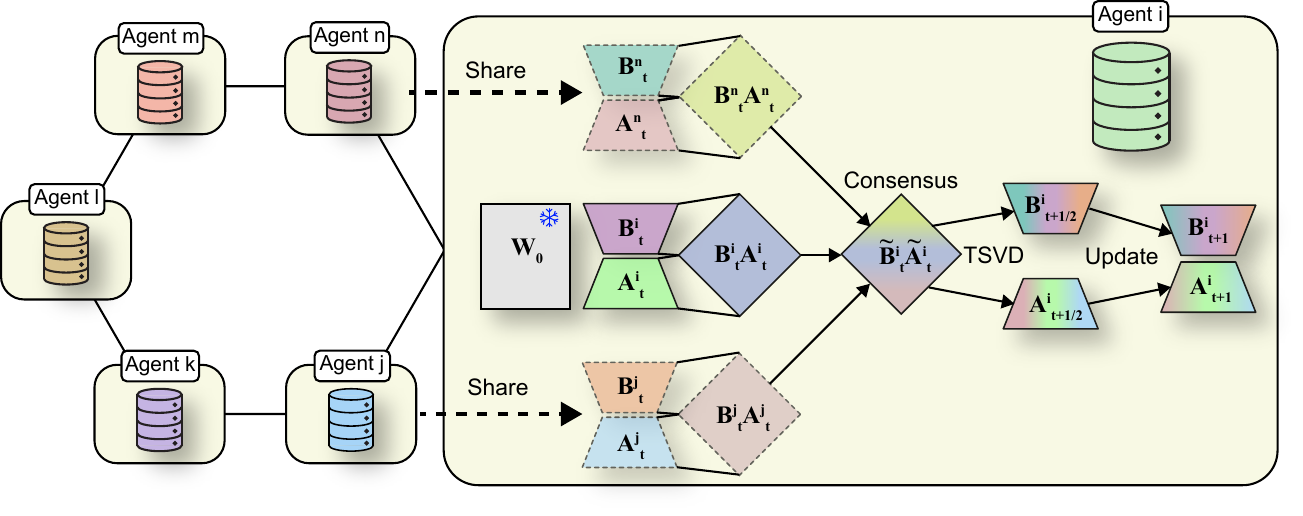}
    \caption{DeCAF with ring topology: At step $t$, agent $i$ exchanges $\mathbf{A}^i$ and $\mathbf{B}^i$ with neighbors, performs consensus on their products, applies truncated SVD, and updates using local data. Pre-trained weights $\mathbf{W}_0$ remain frozen.}
    % In the DLoRA-FA variant, only the $\mathbf{B}$ matrix is shared and updated; the $\mathbf{A}$ matrix remains fixed. See Fig.~\ref{fig:dlora_fa_diagram} for additional details.
    \label{fig:dlora_diagram}
\end{figure*}
% We support DLoRA with extensive theoretical and experimental analysis. We first derive convergence error bounds for the algorithm in decentralized settings, providing rigorous guarantees. Our analysis shows the precise relationship between the intrinsic rank of low-rank matrices and convergence rate in the decentralized setting, and illustrates how rank selection influences the trade-off between model capacity and collaborative training dynamics.  
%Further, we provide an approximate solution to federated/decentralized learning settings when LoRA is implemented with model consensus or averaging. 
% Further, we also show a key challenge in decentralized LoRA training is the interference between model consensus and local updates, which affects both communication efficiency and model performance. 
% We present a novel framework that maintains trainable low-rank matrices while achieving efficient consensus. 
To address the challenging model consensus interference issue, we propose a novel algorithm, termed Decentralized Consensus-And-Factorization (\textbf{DeCAF}), resorting Truncated Singular Value Decomposition (TSVD)~\cite{deng2024fast} to decompose the product consensus back to individual local updates for low-rank matrices (see  \cref{fig:dlora_diagram}). We theoretically analyze the difference of consensus between DLoRA and DeCAF, surprisingly disclosing that the difference vanishes along with the number of iterations. This formally justifies even with model consensus interference, DLoRA and its federated counterparts are still being utilized in a variety of applications. Furthermore, DeCAF enjoys the same convergence rate as DLoRA.
We also provide experimental results on distinct fine-tuning tasks involving VLMs such as CLIP, and LLMs such as LLAMA2-7B ~\cite{touvron2023llama2openfoundation} to support our theoretical claims. 

To summarize our contributions:
    1) We improve the convergence rate for smooth non-convex objectives in DLoRA for from $\mathcal{O}(1/T^{1/3})$ to $\mathcal{O}(1/\sqrt{T})$ in decentralized environments and also develop a parameter-efficient variant of DLoRA (termed DLoRA-FA, see analysis in Supplementary Materials (SM)~\ref{analysis_dlora_fa}).
    % \item We propose a decentralized protocol for fine-tuning large vision and language models that operates without central coordination, efficiently adapting model parameters with DLoRA and its variant (termed DLoRA-FA). 
    % and its variant by freezing one low-rank matrix (\textbf{DLoRA-FA}).
    %Our approaches enable participants with heterogeneous and limited resources to minimize communication overhead and computational demands.
    % \item   We theoretically analyze the convergence rate for our algorithms and reveal the relationship between the  adapter rank and the optimization error. Please see \cref{tab:comparison} for more methodological comparisons.
    2) To address the model consensus interference, we propose DeCAF, a novel variant of DLoRA, by leveraging TSVD, and theoretically study the difference of consensus between DLoRA and DeCAF to reveal the convergence rate for DeCAF. Please see \cref{tab:comparison} for more methodological comparisons.
    % Our method ensures that data remains on local devices, mitigating privacy risks. Additionally, the absence of a central server eliminates the single point of failure, increasing system reliability.
    3) We implement our frameworks, DeCAF and DLoRA-FA, and evaluate them across various tasks involving VLMs and LLMs. Our findings demonstrate that our approaches rivals federated learning while outperforming local fine-tuning. 
    % Additionally, they provide significant improvements in both privacy and resource utilization.

% \textbf{Notation.} For an integer $n\in\mathbb{N}$, we denote by $[n]$ the set $\{1,2,...,n\}$. We resort to standard big-oh notation and write $a\lesssim b$ as shorthand for $a=\mathcal{O}(b)$.

% By advancing towards a fully decentralized training paradigm, we aim to make VLM and LLM development more accessible, cost-effective, and secure. This approach has the potential to democratize AI by enabling broader participation in model training or fine-tuning without the need for substantial infrastructure investments.
\begin{table*}[ht!]
 \caption{\small{ Comparison between different methods. $N$: \# of agents, $T$: the number of iterations, $r$: the low rank satisfying $r\ll\textnormal{min}\{d,k\}$, $d$: the input dimension, $k$: the output dimension, Comm.: cost per communication round, $p$: the partial participation percentage in $(0,1]$, $N_b$:  total number of non-zero elements in the mixing matrix $\mathbf{\Pi}$ (defined in Assumption~\ref{assum_5}), Smoo.Gua.: smoothness guarantee. The full table is shown in SM.}}
  \label{tab:comparison}
  \centering
  \begin{tabular}{ccccc}
    \toprule
    Method & Rate & LLM/VLM & Comm. & Smoo.Gua.\\
    \midrule
    % OpenFedLLM~\cite{ye2024openfedllm} & N/A & \xmark & \cmark/\xmark & $\mathcal{O}((pN+1)kd)$ & \xmark\\
    pFedLoRA~\cite{yi2023fedlora} & $\mathcal{O}(1/\sqrt{T})$ &  \xmark/\cmark & $\mathcal{O}((pN+1)(k+d)r)$ & \xmark\\
    % FedIT~\cite{zhang2024towards}&N/A & \xmark & \cmark/\xmark & $\mathcal{O}((pN+1)kd)$ &\xmark\\
    FLoRA~\cite{wang2024flora} & N/A  & \cmark/\xmark & $\mathcal{O}((pN+1)(k+d)r)$ & \xmark\\
    FFA-LoRA~\cite{sun2024improving} & N/A &  \cmark/\xmark & $\mathcal{O}((pN+1)(k+d)r)$ & \cmark \\
    \hline
    Mix-of-Show~\cite{gu2024mix} & N/A &  \xmark/\cmark & $\mathcal{O}((pN+1)(k+d)r)$ & \xmark\\
    PCFT~\cite{wagner2024personalized}&N/A &  \cmark/\xmark & $\mathcal{O}(N_b(k+d)r)$ & \xmark\\
    Dec-LoRA~\cite{ghiasvand2025decentralized} & $\mathcal{O}(1/T^{1/3})$ & \cmark/\xmark & $\mathcal{O}(N_b(k+d)r)$ & \xmark\\
    \hline
    \textbf{DLoRA} & $\mathcal{O}(1/\sqrt{T})$  & \cmark/\cmark & $\mathcal{O}(N_b(k+d)r)$ &\cmark\\
    \textbf{DeCAF} & $\mathcal{O}(1/\sqrt{T})$  & \cmark/\cmark & $\mathcal{O}(N_b(k+d)r)$ &\cmark\\
    % \textbf{DLoRA-FA} & $\mathcal{O}(\frac{1}{\sqrt{T}})$ & \cmark & \cmark/\cmark & $\mathcal{O}(N_bdr)$ & \cmark\\
    % \textsc{DIMAT-Adam} & $\mathcal{O}(\frac{d^{1.5}}{\sqrt{NK}}+\frac{dN}{(1-\sqrt{\rho'})^2K}+\frac{N^{1.5}\sqrt{d}}{K^{1.5}}+\frac{\sqrt{N}d^2}{(1-\sqrt{\rho'})K}+\frac{Nd^{1.5}}{(1-\sqrt{\rho'})K^{1.5}})$ \\
    \bottomrule
  \end{tabular}
\end{table*}

\section{Related Works}
\label{sec:relatedworks}
\noindent\textbf{Low-Rank Adaptation (LoRA).}
Parameter-efficient fine-tuning (PEFT) methods, such as adapter-based or prompt-based approaches~\cite{rebuffi2017learningmultiplevisualdomains,liu2022fewshotparameterefficientfinetuningbetter,li2021prefixtuningoptimizingcontinuousprompts}, have become popular strategies for adapting pre-trained models. In this context, LoRA~\cite{hu2021loralowrankadaptationlarge} addresses the computational cost of fine-tuning by freezing the pre-trained model weights and introducing trainable low-rank matrices. 
Recent advancements in LoRA have aimed at further improving efficiency by introducing modifications to the low-rank matrices. One notable example is AFLoRA (Adaptive Freezing of Low-Rank Adaptation)~\cite{liu2024afloraadaptivefreezinglow}, which enhances PEFT by progressively freezing the low-rank matrices during fine-tuning based on a novel freezing score. To further reduce the computational demand, LoRA-FA~\cite{zhang2023lora} was developed to freeze one low-rank matrix and only train another one. Another approach is to quantize the pretrained model when backpropagating gradients into LoRA~\cite{li2023loftq,dettmers2024qlora,xu2023qa}, thus saving the memory usage and improving the sustainability. Recent works have also explored LoRA in the context of few-shot vision-language adaptation. For instance, CLIP-LoRA~\cite{zanella2024low} applies low rank adaptation for vision language models across multiple tasks by adapting both vision and language encoders. 

\noindent\textbf{Decentralized Learning.}
Most decentralized learning research has focused on small models and classification tasks~\cite{saadati2024dimat, FOTOUHI202425, jiang2017collaborative, esfandiari2021cross, assran2019stochastic}. One recent decentralized learning effort integrates LoRA for collaboratively fine-tuning LLMs~\cite{wagner2024personalizedcollaborativefinetuningondevice}. This work explores on-device, self-supervised collaborative fine-tuning of LLMs, specifically focusing on next-token prediction. While it leverages LoRA to reduce communication overhead by only exchanging LoRA weight updates, their algorithm is fundamentally different from ours—they focus on trust-based communication among agents, whereas we emphasize product-based consensus in a fully decentralized setup. Notably, their work does not provide any formal convergence guarantees. Another work~\cite{gu2024mix} proposed a method called Mix-of-Show utilizing an embedding-decomposed LoRA for each agent tuning and gradient fusion for the center node to preserve in-domain knowledge. Though they claimed the method decentralized, there was no communication among diverse agents. Essentially, their method still resembles a federated learning setup. A related effort, Dec-LoRA~\cite{ghiasvand2025decentralized}, has also explored LoRA in decentralized settings. As discussed in the introduction, despite its convergence analysis, it suffers from suboptimal rates and limitations in handling model consensus interference, and shows weak empirical performance in non-IID settings. Recent efforts have also explored decentralized LLMs in multi-robot systems~\cite{yu2024mhrc, chen2024scalable}. Another line of research is integrating federated learning (FL) with LoRA. However, FL often depends on a central server to coordinate training, which can become a bottleneck and introduce a single point of failure~\cite{9441499, bonawitz2019federatedlearningscaledesign}. This centralization may also need to address privacy concerns, as aggregated updates could still reveal sensitive information~\cite{li2020federated}. A more extensive literature survey is included in the SM~\ref{related_work_fl}.
To the best of our knowledge, no prior work offers comprehensive theoretical and empirical studies of LoRA in fully decentralized settings with rigorous convergence guarantees and strong non-IID performance.
% \textcolor{red}{ZJ: Might need to add a couple more new references to reflect the SOTA.}
\section{Preliminaries and Problem Formulation}
\label{sec:prelim}
% In this section, we lay out the foundation of the theoretical analysis by giving some background knowledge and then formulate the problem. 
% Training large-scale deep neural networks such as LLMs from scratch is computationally prohibitive. To address this issue, the recently developed LoRA has been adopted widely in many practical use cases. 
% We first present the preliminaries for LoRA.

\textbf{LoRA.}
The forward pass of a deep neural network involves weight matrix multiplications in numerous dense layers. Though these matrices are full-rank, when adapting the model to specific tasks, the authors in~\cite{aghajanyan2020intrinsic} show that a pre-trained LLM has a low "intrinsic dimension" which maintains the efficient learning even if it is projected into a smaller subspace. Inspired by this, Hu et al.~\cite{hu2021lora} proposed the vanilla LoRA, which implies that during fine-tuning, the updates to weights also have a low "intrinsic rank". 
% Since its emergence, LoRA has set the state-of-the-art (SOTA).
Considering a pre-trained model, we denote by $\mathbf{W}_0\in\mathbb{R}^{d\times k}$ its weight matrix. The updates to $\mathbf{W}_0$ is denoted as $\Delta\mathbf{W}$ and can be represented by a low-rank decomposition between two matrices, $\mathbf{A}\in\mathbb{R}^{r\times k}$, $\mathbf{B}\in\mathbb{R}^{d\times r}$. $r$ is the rank in this context satisfies $r\ll\textnormal{min}\{d,k\}$. Then, we get:
\begin{equation}\label{eq_1}
    \mathbf{W} = \mathbf{W}_0 + \Delta\mathbf{W} = \mathbf{W}_0 + \frac{\eta}{r}\mathbf{B}\mathbf{A},
\end{equation}
where $\eta>0$ is the scaling factor for the rank stabilization. Eq.~\ref{eq_1} has been adopted in~\cite{malinovsky2024randomized} and shown more advanced capability than the vanilla LoRA in~\cite{hu2021lora}.
As LoRA only conducts fine-tuning, $\mathbf{W}_0$ is \textit{frozen} and does not get involved in the gradient update during backpropagation, while $\mathbf{A}$ and $\mathbf{B}$ matrices consist of trainable parameters. The initialization for these two matrices is critical as it should not change $\mathbf{W}$ at the beginning. Hence, each element of $\mathbf{A}_0$ is sampled from a Gaussian distribution, i.e., $a_{ij,0}\sim\mathcal{N}(0,\sigma^2)$ (where $\sigma >0$), and each element of $\mathbf{B}_0$ is set 0, i.e., $b_{ij,0}=0$.
LoRA significantly mitigates the computational complexity issue by reducing the number of trainable parameters from $\mathcal{O}(dk)$ to $\mathcal{O}((d+k)r)$.
% To comply with the tradition of optimization and ease the analysis, in the following problem formulation, we use vectors to represent decision variables in the optimization problem.

\textbf{Problem Formulation.}
Consider a networked system involving $N$ nodes (aka agents), represented by $\mathcal{G}=(\mathcal{V},\mathcal{E})$, where $\mathcal{V}=\{1,2,...,N\}:=[N]$ is a set of nodes and $\mathcal{E}=\{(i,j),i\in\mathcal{V},j\in\mathcal{V}\}$ is a set of edges. Without loss of generality, $\mathcal{G}$ is assumed to be \textit{undirected} and \textit{connected}. 
We also denote by $Nb(i)$ the neighborhood of a node $i$ such that $Nb(i):=\{j\in\mathcal{V}|(i,j)\in\mathcal{E} \;\textnormal{or}\;i=j\}$.
In this context, each node is an LLM that is parameterized by $\mathbf{W}\in\mathbb{R}^{d\times k}$ (representing all weight matrices in multiple layers) and fine-tuned over a local dataset $\mathcal{D}_i$. 
% Note that we resort to a vector $\bm{\theta}$ instead of $\mathbf{W}$ (representing all weight matrices in multiple layers) for problem formulation and convergence analysis, unless $\mathbf{W}$ is used for a specific purpose somewhere throughout the rest of the paper. 
% $\mathbf{W}$ in practice can be flattened as $\bm{\theta}$. 
% Thereby, the $N$ agents jointly solve the following consensus optimization problem:
% \begin{equation}\label{eq_2}
%     \underset{\mathbf{\mathbf{W}}\in\mathbb{R}^{d\times k}}{\text{min}} f(\mathbf{W}) = \frac{1}{N}\sum_{i=1}^N\mathbb{E}_{\xi_i\sim\mathcal{D}_i}[\mathcal{F}^i(\mathbf{W};\xi_i)],
% \end{equation}
% where $f^i(\mathbf{W}):=\mathbb{E}_{\xi_i\sim\mathcal{D}_i}[\mathcal{F}^i(\mathbf{W};\xi_i)]$ ($f^i:\mathbb{R}^{d\times k}\to\mathbb{R}$) are smooth non-convex functions with different data distributions $\mathcal{D}_i$. 
% We denote by $\bm{g}^i:=\frac{1}{|\mathcal{S}_i|}\sum_{s\in\mathcal{S}_i}\nabla f^i_s(\mathbf{W}^i)$ the mutually independent unbiased stochastic gradients calculated at local copy $\mathbf{W}^i$ such that $\nabla f^i(\mathbf{W}^i)=\mathbb{E}[\bm{g}^i]$, where $\mathcal{S}_i$ represents a mini-batch from $\mathcal{D}_i$. 
% Eq.~\ref{eq_2} is a \textit{conventional} empirical risk minimization such that directly solving it can incur extremely high computational costs. 
Combining LoRA in Eq.~\ref{eq_1} with the decentralized learning formulation,
% Thus, the correspondingly \textit{conventional} empirical risk minimization is formulated as
% \begin{equation}\label{eq_9}
% \textnormal{min}_{\bm{\theta}\in\mathbb{R}^m}f(\bm{\theta}):=\frac{1}{S}\sum_{s\in\mathcal{D}}f_s(\bm{\theta}),
% \end{equation}
% where $f:\mathbb{R}^m\to\mathbb{R}$ is the global loss, $f_s$ is the loss corresponding to the sample $s$. Throughout the paper, we assume $f$ is continuously differentiable, coercive, and bounded below, i.e., $f\geq f^*>-\infty$. We have known that when $m$ is extremely large, training the model is also computationally expensive.
denoting by $\bm{w}$ ($\bm{w}:=(\mathbf{A},\mathbf{B})$) the low-rank adapter, we obtain the consensus optimization problem:
\begin{equation}\label{eq_3}
\textnormal{min}_{\mathbf{A}\in\mathbb{R}^{r\times k}, \mathbf{B}\in\mathbb{R}^{d\times r}} f(\bm{w},\mathbf{W}_0) = \frac{1}{N}\sum_{i=1}^N\mathbb{E}_{\xi_i\sim\mathcal{D}_i}[\mathcal{F}^i(\bm{w},\mathbf{W}_0;\xi_i)],
\end{equation}
where $f^i(\bm{w},\mathbf{W}_0):=\mathbb{E}_{\xi_i\sim\mathcal{D}_i}[\mathcal{F}^i(\bm{w},\mathbf{W}_0;\xi_i)]$ ($f^i:\mathbb{R}^{(d+k)r}\times\mathbb{R}^{d\times k}\to\mathbb{R}$) are smooth non-convex functions.
% We notice that the dimension of $\bm{v}$ is different from that of $\bm{\theta}_0$, since in Eq.~\ref{eq_3} we only optimize the parameters of low-rank matrices. 
One scrutiny is that in Eq.~\ref{eq_3} pre-trained model parameters are frozen during fine-tuning. However, it may vary slightly different if some more parameters are further frozen in $\bm{w}$. 
% For instance, one LoRA variant called LoRA-FA~\cite{zhang2023lora} requires another definition for the frozen $\mathbf{A}$ matrix. 
Another implication is that we do not search for the optimal product of low-rank matrices, i.e., $(\mathbf{BA})^*$. Instead, we are interested in $\mathbf{B}^*\mathbf{A}^*$ due to the separate updates for them. Therefore, throughout the analysis, we define the optimal objective loss $f^*:=f(\bm{w}^*,\mathbf{W}_0)>-\infty$, where $\bm{w}^*=(\mathbf{A}^*,\mathbf{B}^*)=\textnormal{argmin}_{\mathbf{A}\in\mathbb{R}^{r\times k},\mathbf{B}\in\mathbb{R}^{d\times r}}f(\bm{w},\mathbf{W}_0)$.

\section{Main Results}\label{main_results}
% In this section, we first present the algorithmic framework and convergence analysis for DLoRA. We defer all proofs into SM~\ref{additional_analysis}.
Denote by $\bm{g}^{i}=\frac{1}{|\mathcal{S}_i|}\sum_{s\in\mathcal{S}_i}\nabla f^i_s(\bm{w}^i,\mathbf{W}_0)$ ($\bm{w}^i=(\mathbf{A}^i,\mathbf{B}^i$)) the stochastic gradient for the $i$-th node, where $\mathcal{S}_i$ is a mini-batch randomly sampled from $\mathcal{D}_i$. $\bm{g}^i$ is assumed to be an unbiased estimate of $\nabla f^i(\bm{w}^i,\mathbf{W}_0)$, i.e., $\nabla f^i(\bm{w}^i,\mathbf{W}_0)=\mathbb{E}[\bm{g}^i]$. We defer all proofs into SM~\ref{additional_analysis}.
% Therefore, we can obtain that $\bm{g}^i:=[\bm{g}^{i,\mathbf{A}};\bm{g}^{i,\mathbf{B}}]$, where $\bm{g}^{i,\mathbf{*}}=\frac{1}{|\mathcal{S}_i|}\sum_{s\in\mathcal{S}_i}\nabla_\mathbf{*} f^i_s(\bm{v}^i,\mathbf{W}_0)$ (Some more technical detail is included as a supplement for justifying the validity). 

\textbf{DLoRA.}
Algorithm~\ref{alg:dlora} shows the algorithmic framework for fine-tuning the LoRA matrices $\mathbf{A}$ and $\mathbf{B}$ using SGD (extension to momentum-based SGD and Adam is included in SM). 
%To show how LoRA is particularly executed in the decentralized learning setting, we fall back to the matrix form. 
\begin{algorithm}
  \caption{\textsc{DLoRA} (\textcolor{blue}{DeCAF})}
  \label{alg:dlora}
  \SetKwInOut{Input}{Input}
  \SetKwInOut{Output}{Output}
  \Input{mixing matrix $\mathbf{\Pi}=[\pi_{ij}]_{i,j\in N}$, the \# of iterations $T$, initialization of $\mathbf{A}_1^i, \mathbf{B}_1^i, \forall i\in\mathcal{V}$, step size $\alpha$, $\mathbf{W}_0$, $\mathcal{D}_i, i\in\mathcal{V}$, communication frequency $\tau$, TSVD operator $\mathcal{T}(\cdot)$}
  \Output{$\{\mathbf{A}^i_T, \mathbf{B}^i_T\}_{i=1}^N$}  
  \BlankLine
  \For{ $t$ in $1:T$ }
  { 
    \For{each agent $i\in\mathcal{V}$}
    { 
    Calculate the stochastic gradients $\bm{g}^{i,\mathbf{\bullet}}_t=\frac{1}{|\mathcal{S}_i|}\sum_{s\in\mathcal{S}_i}\nabla_\mathbf{\bullet} f^i_s(\bm{w}^i_t,\mathbf{W}_0), \bullet\in\{\mathbf{A},\mathbf{B}\}$\;
    \eIf{$t$ mod $\tau$=0}
    {     Broadcast the low-rank matrices $\mathbf{A}^i_t, \mathbf{B}^i_t$ to and receive $\mathbf{A}^j_t, \mathbf{B}^j_t$ from nodes in $Nb(i)$\;
    \# Individual consensus in DLoRA\;
    $\mathbf{A}^i_{t+1/2}=\sum_{j\in Nb(i)}\pi_{ij}\mathbf{A}^j_{t}$\;
    $\mathbf{B}^i_{t+1/2}=\sum_{j\in Nb(i)}\pi_{ij}\mathbf{B}^j_{t}$\;
    \textcolor{blue}{
    \# Product consensus in DeCAF\;
    $\tilde{\mathbf{B}}^i_t\tilde{\mathbf{A}}^i_t=\sum_{j\in Nb(i)}\pi_{ij}\mathbf{B}^j_t\mathbf{A}^j_t$\;
    $\mathbf{A}^i_{t+1/2}, \mathbf{B}^i_{t+1/2} = \mathcal{T}(\tilde{\mathbf{B}}^i_t\tilde{\mathbf{A}}^i_t)$\;
    }
    }{$\mathbf{A}^i_{t+1/2}=\mathbf{A}^i_{t}$\;
    $\mathbf{B}^i_{t+1/2}=\mathbf{B}^i_{t}$\;
    }
    $\mathbf{A}^i_{t+1}=\mathbf{A}^i_{t+1/2}-\alpha \bm{g}^{i,\mathbf{A}}_t$\;
    $\mathbf{B}^i_{t+1}=\mathbf{B}^i_{t+1/2}-\alpha \bm{g}^{i,\mathbf{B}}_t$\;    }
  }
\end{algorithm}
Line 3 in Algorithm~\ref{alg:dlora} manifests the mini-batch stochastic gradient w.r.t. both low-rank matrices $\mathbf{A}^i$ and $\mathbf{B}^i$ only. $\mathbf{W}_0$ is still necessary to fulfill the loss value calculation in practice. Line 4 implies that the communication step can be implemented periodically, therefore reducing the number of communication rounds (In this work, we study the worst case $\tau=1$).
Line 5 is unique in LoRA as the communication is $\mathbf{A}$ and $\mathbf{B}$ instead of $\mathbf{BA}$, thus reducing the overhead per communication round from $\mathcal{O}(kd)$ to $\mathcal{O}((k+d)r)$. 
Lines 6-8 show the individual consensus for both $\mathbf{A}$ and $\mathbf{B}$, resulting in model consensus interference defined in Definition~\ref{definition_1}.
% For example, if we consider a fully-connected network, then the model consensus interference is $\frac{1}{N^2}\sum_{i=1}^N\mathbf{B}^i_t\sum_{i=1}^N\mathbf{A}^i_t\neq \frac{1}{N}\sum^N_{i=1}\mathbf{B}^i_t\mathbf{A}^i_t$. 
% Though individual consensus is mathematically incorrect, empirically it does not significantly affect the fine-tuning performance. Conducting consensus separately also supplies us with analysis techniques already available in decentralized learning area. 
To further mitigate the challenging model consensus interference issue, in Line 10, DeCAF conducts consensus in product by taking parameter changes $\Delta\mathbf{W}^j_t$. However, since the local updates to $\mathbf{A}^i_t$ and $\mathbf{B}^i_t$ are separate, a decomposition is required in this context. We resort to a popular low-rank approximation technique (TSVD) and will show its induced approximation error.
% Though the low-rank matrices are communicated separately to decrease the cost, the consensus cannot be done separately for either $\mathbf{A}$ or $\mathbf{B}$ due to the \textit{model consensus interference} introduced in~\cite{sun2024improving,wang2024flora,guo2024selective,zhu2024deer} (we include more details in SM~\ref{model_consensus_inter}). 
% For example, if we consider a fully-connected network, then $\pi_{ij}=\frac{1}{N}$, and the model consensus interference is $\frac{1}{N^2}\sum_{i=1}^N\mathbf{B}^i_t\sum_{i=1}^N\mathbf{A}^i_t\neq \frac{1}{N}\sum^N_{i=1}\mathbf{B}^i_t\mathbf{A}^i_t$.
Another popular scheme is to freeze $\mathbf{A}$ matrix, which further enables the communication reduction from $\mathcal{O}((k+d)r)$ to $\mathcal{O}(dr)$, but possibly with the compromise of model performance. To show the tradeoff between the performance and efficiency, we also develop a parameter-efficient variant of DLoRA by freezing $\mathbf{A}$ matrix (DLoRA-FA detailed in Algorithm~\ref{alg:dlora_fa}) and provide theoretical analysis in SM~\ref{analysis_dlora_fa} for completeness. Lines 15-16 are local updates for both $\mathbf{A}^i$ and $\mathbf{B}^i$, reducing the computational cost from $\mathcal{O}(dk)$ to $\mathcal{O}((d+k)r)$.
Existing analysis has achieved the convergence rate of $\mathcal{O}(1/T^{1/3})$, whereas there is a gap between this result and the best available rate for regular decentralized SGD~\cite{saadati2024dimat}, $\mathcal{O}(1/\sqrt{T})$. Additionally, empirical results in LoRA have revealed a clear relationship between the rank $r$ and accuracy, but the convergence error bound in~\cite{ghiasvand2025decentralized} fails to show this explicitly.
% we are not aware of any existing results reporting the convergence error bound, or whether it can still maintain the similar convergence rate as $\mathcal{O}(\frac{1}{\sqrt{NT}})$~\cite{saadati2024dimat}. 
Thereby, we will also study how the rank $r$ impacts the convergence error bound. 
% To this end, we start with the following assumption.
\begin{assumption}\label{assum_1}
    There exists a constant $L>0$ such that $\|\nabla f^i(\mathbf{W})-\nabla f^i(\mathbf{W}')\|_F\leq L\|\mathbf{W}-\mathbf{W}'\|_F$, for all $\mathbf{W},\mathbf{W}'\in\mathbb{R}^{d\times k}$ and $i\in\mathcal{V}$. $\|\cdot\|_F$ is the Frobenius norm.
\end{assumption}
This assumption implies that $f^i(\mathbf{W})$ is $L$-smooth without low-rank decomposition and has generically been applied in numerous existing works~\cite{zeng2018nonconvex,ge2023gradient,francis2023decentralized}. Nevertheless, a recent work~\cite{sun2024improving} has reported that \textit{with low-rank adaptation, even if $f^i(\mathbf{W})$ is smooth, there is no guarantee for the smoothness of $f^i(\bm{w},\mathbf{W}_0)$}. We include a counter-example for this with detailed proof in SM~\ref{smooth_issue}. 
% Albeit the authors in~\cite{sun2024improving} resorted to freezing $\mathbf{A}$ matrix to address the problem, this will most likely result in the additional model approximation error. 
To circumvent this,
% However, with low-rank decomposition, we have to make sure the smoothness condition holds for $\mathbf{W}^i=\mathbf{W}_0+\mathbf{B}^i\mathbf{A}^i$. 
we derive a new smoothness constant for DLoRA. Before presenting a key lemma to characterize the new smoothness constant, we impose two assumptions.
\begin{assumption}\label{assum_2}
    For any $\mathbf{A}^i$ and $\mathbf{B}^i$, for all $i\in\mathcal{V}$ induced by Algorithm~\ref{alg:dlora}, their largest singular values i.e., $\sigma_1(\mathbf{A}^i),\sigma_1(\mathbf{B}^i)$, satisfy the condition:
$ 
        0<\textnormal{max}\{\sigma_1(\mathbf{A}^i),\sigma_1(\mathbf{B}^i)\}\leq c,
$
where $c>0$.
\end{assumption}
Assumption~\ref{assum_2} empirically makes sense as the optimization problem on $\mathbf{A}^i$ and $\mathbf{B}^i$ should be well-posed such that the optimal solution can be reached. Thus, both low-rank matrices naturally satisfy the condition in the assumption. We present another assumption on any two parameters $\mathbf{W}$ and $\mathbf{W}'$.
\begin{assumption}\label{assum_3}
    1) There exists a constant $C>0$ such that for any parameters $\mathbf{W}$, $\mathbf{W}'$, $\mathbf{A}$, $\mathbf{A}'$, $\mathbf{B}$, $\mathbf{B}'$, $\|\mathbf{W}-\mathbf{W}'\|_F\leq C(\|\mathbf{A}-\mathbf{A}'\|_F+\|\mathbf{B}-\mathbf{B}'\|_F)$; 2) $f^i$ is Lipschitz continuous at $\mathbf{W}\in\mathbb{R}^{d\times k}$, i.e., there exists a constant $G>0$, $\|f^i(\mathbf{W}')-f^i(\mathbf{W})\|_F\leq G\|\mathbf{W}'-\mathbf{W}\|_F$ for all $\mathbf{W}', \mathbf{W}\in\mathbb{R}^{d\times k}$.
\end{assumption}
%Part 1 of Assumption~\ref{assum_3} resembles a similar Triangle inequality among model parameters. 
We leverage two instances for justification of Part 1). They are $\mathbf{A}=\mathbf{A}'$ and $\mathbf{B}=\mathbf{B}'$, respectively. In these two scenarios, either $\mathbf{A}$ or $\mathbf{B}$ is frozen. Then $C$ can be quickly determined as $\frac{\eta c}{\sqrt{r}}$, as for any matrix $\mathbf{Z}\in\mathbb{R}^{r\times k} \;\text{or}\;\mathbb{R}^{d\times r}$, $\|\mathbf{Z}\|_F=\sqrt{\sum_l\sigma_l^2(\mathbf{Z})}\leq \sqrt{r}\sigma_1(\mathbf{Z})$ and the fact that $\|\mathbf{W}-\mathbf{W}'\|_F=\frac{\eta}{r}\|\mathbf{BA}-\mathbf{B'A'}\|_F$. In a generalized scenario, one can set a sufficiently large constant $C$ to ensure the condition to hold, though it leads to a looser bound.
Part 2 in Assumption~\ref{assum_3} is also generic in decentralized learning~\cite{zhou2020bypassing,jiang2017collaborative}. Furthermore, this assumption does not apply to the gradient of $f^i$ w.r.t. low-rank matrix $\mathbf{A}$ or $\mathbf{B}$, primarily helping characterize the new smoothness constant.
\begin{lemma}\label{lemma_1}
    Let Assumptions~\ref{assum_1},~\ref{assum_2},~\ref{assum_3} hold. Suppose that the parameter for an agent $\mathbf{W}$ satisfies LoRA, i.e., $\mathbf{W}=\mathbf{W}_0+\frac{\eta}{r}\mathbf{B}\mathbf{A}$. Then, for any given $\bm{w}, \bm{w}'$, we have the following relationship:
$
        \|\nabla f^i(\bm{w},\mathbf{W}_0)-\nabla f^i(\bm{w}',\mathbf{W}_0)\|_F\leq \hat{L}\|\bm{w}-\bm{w}'\|_F, \hat{L}=\frac{\eta(2LC\sqrt{r}c+G)}{r}.
$
\end{lemma}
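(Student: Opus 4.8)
The plan is to view $f^i$ as a function of the adapter $\bm{w}=(\mathbf{A},\mathbf{B})$ through the composition $\mathbf{W}=\mathbf{W}_0+\frac{\eta}{r}\mathbf{B}\mathbf{A}$, bound the two blocks of $\nabla_{\bm w}f^i$ separately, and add them. First I would write out the chain rule. Since $\mathbf{W}_0$ is frozen and the maps $\mathbf{A}\mapsto\mathbf{W}$, $\mathbf{B}\mapsto\mathbf{W}$ are linear, one gets $\nabla_{\mathbf{A}}f^i=\frac{\eta}{r}\mathbf{B}^\top\nabla_{\mathbf{W}}f^i$ and $\nabla_{\mathbf{B}}f^i=\frac{\eta}{r}(\nabla_{\mathbf{W}}f^i)\mathbf{A}^\top$, with $\nabla_{\mathbf{W}}f^i$ evaluated at the corresponding $\mathbf{W}$. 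This reduces the whole estimate to controlling differences of products of a low-rank factor ($\mathbf{B}$ or $\mathbf{A}$) with the full-parameter gradient.

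For the $\mathbf{A}$-block I would form $\nabla_{\mathbf{A}}f^i(\bm w)-\nabla_{\mathbf{A}}f^i(\bm w')$, insert the cross term $\mathbf{B}^\top\nabla_{\mathbf{W}}f^i(\mathbf{W}')$, and split the difference into $\mathbf{B}^\top(\nabla_{\mathbf{W}}f^i(\mathbf{W})-\nabla_{\mathbf{W}}f^i(\mathbf{W}'))$ and $(\mathbf{B}-\mathbf{B}')^\top\nabla_{\mathbf{W}}f^i(\mathbf{W}')$. The first piece is handled by Frobenius submultiplicativity $\|\mathbf{B}^\top M\|_F\le\|\mathbf{B}\|_F\|M\|_F$ together with the singular-value bound $\|\mathbf{B}\|_F\le\sqrt{r}\,\sigma_1(\mathbf{B})\le\sqrt{r}\,c$ from Assumption~\ref{assum_2}, then the $L$-smoothness of Assumption~\ref{assum_1} and the comparison inequality of Assumption~\ref{assum_3}(1) to turn $\|\mathbf{W}-\mathbf{W}'\|_F$ into $C(\|\mathbf{A}-\mathbf{A}'\|_F+\|\mathbf{B}-\mathbf{B}'\|_F)$; this produces the $\sqrt{r}\,cLC$ term. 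The second piece uses submultiplicativity again plus $\|\nabla_{\mathbf{W}}f^i\|_F\le G$, which I would first justify as a consequence of Assumption~\ref{assum_3}(2) (a $G$-Lipschitz scalar function has gradient Frobenius norm at most $G$), giving a $G\|\mathbf{B}-\mathbf{B}'\|_F$ term. The $\mathbf{B}$-block is symmetric: interchange $\mathbf{A}$ and $\mathbf{B}$ and replace left-multiplication by right-multiplication by $\mathbf{A}^\top$, yielding a $\sqrt{r}\,cLC$ term plus $G\|\mathbf{A}-\mathbf{A}'\|_F$.

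Adding the two block bounds, the smoothness terms combine to $2\sqrt{r}\,cLC(\|\mathbf{A}-\mathbf{A}'\|_F+\|\mathbf{B}-\mathbf{B}'\|_F)$ and the Lipschitz terms to $G(\|\mathbf{A}-\mathbf{A}'\|_F+\|\mathbf{B}-\mathbf{B}'\|_F)$, so after factoring out $\eta/r$ I recover $\hat{L}=\frac{\eta(2LC\sqrt{r}c+G)}{r}$ against $\|\bm w-\bm w'\|_F=\|\mathbf{A}-\mathbf{A}'\|_F+\|\mathbf{B}-\mathbf{B}'\|_F$.

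I expect the main obstacle to be bookkeeping rather than depth: choosing norm inequalities that yield \emph{exactly} the claimed $\sqrt{r}$ dependence (applying $\|\cdot\|_F$ submultiplicativity with $\|\mathbf{B}\|_F\le\sqrt{r}\,\sigma_1(\mathbf{B})$ rather than the tighter operator-norm bound $\sigma_1(\mathbf{B})\|M\|_F$, which would drop the $\sqrt{r}$), and fixing the convention that the pair-norms $\|\bm w-\bm w'\|_F$ and $\|\nabla f^i\|_F$ denote the sums of the block Frobenius norms so the cross-terms assemble cleanly. The one genuinely non-mechanical point to state carefully is the gradient-norm bound $\|\nabla_{\mathbf{W}}f^i\|_F\le G$ extracted from Assumption~\ref{assum_3}(2), since it is what closes the estimate for the second piece in each block.
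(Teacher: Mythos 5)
Your proposal is correct and follows essentially the same route as the paper's own proof: the identical chain-rule expressions $\nabla_{\mathbf{A}}f^i=\frac{\eta}{r}\mathbf{B}^\top\nabla_{\mathbf{W}}f^i$, $\nabla_{\mathbf{B}}f^i=\frac{\eta}{r}\nabla_{\mathbf{W}}f^i\mathbf{A}^\top$, the same cross-term insertion splitting each block into a smoothness piece (bounded via Assumption~\ref{assum_1}, $\|\mathbf{Z}\|_F\le\sqrt{r}\,\sigma_1(\mathbf{Z})\le\sqrt{r}c$ from Assumption~\ref{assum_2}, and Assumption~\ref{assum_3}(1)) and a Lipschitz piece (bounded by $G$ from Assumption~\ref{assum_3}(2)), with the same sum-of-block-norms convention producing $\hat{L}=\frac{\eta(2LC\sqrt{r}c+G)}{r}$. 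The points you flag as delicate — using the $\sqrt{r}\,\sigma_1$ bound rather than the tighter operator-norm bound, and extracting $\|\nabla_{\mathbf{W}}f^i\|_F\le G$ from the Lipschitz assumption — are exactly the choices the paper makes.
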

% We refer interested readers to SM~\ref{proof_lemma_1} for a remark on the implication from Lemma~\ref{lemma_1}.
% \begin{remark}
%     From Lemma~\ref{lemma_1}, it can be observed that all the losses $f^i$ by using LoRA is \textit{smooth} with a new constant $2LC\sqrt{r}\sigma+G$. Compared to the original smoothness constant $L$, with the low-rank decomposition and tuning both $\mathbf{A}^i$ and $\mathbf{B}^i$ yields a larger smoothness constant if the rank $r$ is upper bound by $4C^2c^2\eta^2$. This intuitively implies that the loss $f^i$ with low-rank decomposition becomes less smooth, but instead the gradient may change faster. This can assist in convergence for the early phase of the optimization, particularly if the gradient is large at a point. Since a larger smoothness constant implies more dramatic changes in gradients, which benefits the gradient decaying. However, less smooth objective function may slow down the convergence as well, especially in the later phase, thus negatively affecting the convergence error. 
% \end{remark}
% To establish the convergence for DLoRA, we need to impose assumptions on the stochastic gradient variance and gradient diversity due to different data distributions from each agent.
\begin{assumption}\label{assum_4} There exist $\zeta, \kappa > 0$ such that
    a) The variance of stochastic gradient in each node is bounded: $\mathbb{E}_{\xi\sim\mathcal{D}_i}[\|\nabla \mathcal{F}^i(\bm{w},\mathbf{W}_0;\xi)-\nabla f^i(\bm{w},\mathbf{W}_0)\|^2_F]\leq \zeta^2, i=1,2,...,N$; b) The gradient diversity is uniformly bounded, i.e., $\frac{1}{N}\sum_{i=1}^N\|\nabla f^i(\bm{w},\mathbf{W}_0)-\nabla f(\bm{w},\mathbf{W}_0)\|^2_F\leq \kappa^2, \forall \bm{w}, i=[N]$.
\end{assumption}
The above assumptions have been used in~\cite{li2019convergence,zhang2012communication,stich2018local,stich2018sparsified,yu2019parallel},
% The smoothness in Assumption~\ref{assum_3} is quite generic in decentralized learning algorithms as it provides the guarantee of loss descent for the analysis. 
signifying the bounded noise for local agents and quantifying the gradient diversity among agents, due to the different data distributions. 
% One can also use the bounded second moment of stochastic gradients assumption~\cite{stich2018local}, which is stronger and results in a looser error bound. 
The next assumption for matrix $\mathbf{\Pi}$ has been utilized frequently in existing works~\cite{esfandiari2021cross,jiang2017collaborative,saadati2024dimat}. 
\begin{assumption}\label{assum_5} $\mathbf{\Pi}\in\mathbb{R}^{N\times N}$ is a symmetric doubly stochastic matrix satisfying $\lambda_1(\mathbf{\Pi})=1$ and
   $
       \textnormal{max}\{|\lambda_2(\mathbf{\Pi})|, |\lambda_N(\mathbf{\Pi})|\}\leq \sqrt{\rho}<1,
   $
where $0<\rho<1$, $\lambda_l(\cdot)$ is the $l$-th largest eigenvalue of $\mathbf{\Pi}$.
\end{assumption}
% \noindent\textbf{Conversion to vectors.} So far, the analysis we have conducted is based on the matrix form. However, in numerous existing works, learnable parameters are always shown in a vector format. 
To comply with the tradition of optimization and ease the analysis for convergence, we use vectors for each matrix-wise variable. Without loss of generality, we denote by $\bm{\theta}_0\in\mathbb{R}^{dk}$ and $\bm{v}\in\mathbb{R}^{n} (n=(d+k)r)$ the frozen model parameters and the low-rank adapter. 
% In this context, we regard $\bm{v}$ as a combination of the parameters in $\mathbf{A}$ and $\mathbf{B}$ since in Algorithm~\ref{alg:dlora} their updates are separate. 
In practice, this can be accomplished through the operation of flattening. We will now present the first result in the following.
% To consider multiple agents, we will have the expanded mixing matrix of the Kronecker product between $\mathbf{\Pi}$ and $\mathbf{I}_n$, $\mathbf{P}=\mathbf{\Pi}\otimes \mathbf{I}_n$, but the magnitudes of eigenvalues of $\mathbf{P}$ remain the same through Theorem~\ref{kronecker_prod} presented in the Supplementary Materials and the fact that eigenvalue of $\mathbf{I}_n$ is 1.
% \begin{theorem}\cite{schacke2004kronecker}\label{kronecker_prod}
%     Let $\mathbf{C}\in\mathbb{R}^{N\times N}$ and $\mathbf{D}\in\mathbb{R}^{n\times n}$, with eigenvalue $\lambda\in s(\mathbf{C})$ with corresponding eigenvector $x\in\mathbb{C}^{N}$, and $\mu\in s(\mathbf{D})$ with corresponding eigenvector $y\in\mathbb{C}^{n}$, where $s(\cdot)$ signifies the spectrum of a matrix. Then $\lambda\mu$ is an eigenvalue of $\mathbf{C}\otimes \mathbf{D}$ with corresponding eigenvector $x\otimes y\in\mathbb{C}^{nN}$. Any eigenvalue of $\mathbf{C}\otimes \mathbf{D}$ arises as such a product of eigenvalues of $\mathbf{C}$ and $\mathbf{D}$.
% \end{theorem}
% Therefore, one subsequent outcome is that $\textnormal{max}\{|\lambda_2(\mathbf{P})|, |\lambda_{nN}(\mathbf{P})|\}\leq \sqrt{\rho}<1$.
\begin{theorem}\label{dlora-theo}
    Let all assumptions hold. If the step size $\alpha\leq \frac{1-\sqrt{\rho}}{4\sqrt{2}\hat{L}}$ in Algorithm~\ref{alg:dlora}, then for all $T\geq 1$, the relationship holds true:
$
        \frac{1}{T}\sum_{t=1}^T\mathbb{E}[\|\nabla f(\bar{\bm{v}}_t,\bm{\theta}_0)\|^2]\leq \frac{2D}{\alpha T}+\frac{\hat{L}\alpha\zeta^2}{N}+\frac{\hat{L}^2\sum_{i=1}^N\|\bm{v}_0^i\|^2}{TN(1-\rho)}+\frac{16\alpha^2\kappa^2\hat{L}^2}{(1-\sqrt{\rho})^2}+\frac{4\alpha^2\zeta^2\hat{L}^2}{1-\rho},
$
where $D=f(\bar{\bm{v}}_0,\bm{\theta}_0)-f^*$, $\hat{L}=\frac{\eta(2LC\sqrt{r}c+G)}{r}$, $\bar{\bm{v}}_t=\frac{1}{N}\sum_{i=1}^N\bm{v}^i_t$, $\|\cdot\|$ is Euclidean norm.
\end{theorem}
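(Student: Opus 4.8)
The plan is to adapt the canonical non-convex decentralized SGD analysis, substituting the effective smoothness constant $\hat{L}$ from \cref{lemma_1} for the usual gradient-Lipschitz modulus. First I would exploit that $\mathbf{\Pi}$ is doubly stochastic (\cref{assum_5}): averaging the per-agent update $\bm{v}^i_{t+1}=\sum_{j}\pi_{ij}\bm{v}^j_t-\alpha\bm{g}^i_t$ over $i$ annihilates the mixing and gives the clean recursion $\bar{\bm{v}}_{t+1}=\bar{\bm{v}}_t-\alpha\bar{\bm{g}}_t$, with $\bar{\bm{g}}_t=\frac1N\sum_i\bm{g}^i_t$. Thus the mean iterate behaves exactly like centralized SGD driven by the averaged stochastic gradient, and the whole difficulty is reduced to (i) a descent inequality for the mean and (ii) control of how far the local iterates drift from the mean.

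For step (i) I would apply the $\hat{L}$-smooth descent inequality to $f(\bar{\bm{v}}_{t+1},\bm{\theta}_0)$ and take conditional expectation. Unbiasedness of $\bm{g}^i$ together with independence of mini-batch noise across agents splits the quadratic term into a $\zeta^2/N$ variance part (\cref{assum_4}a) and $\|\frac1N\sum_i\nabla f^i(\bm{v}^i_t,\bm{\theta}_0)\|^2$; the inner-product term is treated by inserting $\pm\nabla f^i(\bar{\bm{v}}_t,\bm{\theta}_0)$ and using $\hat{L}$-smoothness, which trades the gap between the local gradients at $\bm{v}^i_t$ and the global gradient at $\bar{\bm{v}}_t$ for the consensus error $\Phi_t:=\frac1N\sum_i\mathbb{E}\|\bm{v}^i_t-\bar{\bm{v}}_t\|^2$. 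Collecting terms yields a one-step bound of the schematic form $\mathbb{E}[f(\bar{\bm{v}}_{t+1},\bm{\theta}_0)]\le\mathbb{E}[f(\bar{\bm{v}}_t,\bm{\theta}_0)]-\tfrac{\alpha}{2}\mathbb{E}\|\nabla f(\bar{\bm{v}}_t,\bm{\theta}_0)\|^2+\tfrac{\hat{L}\alpha^2\zeta^2}{2N}+c_1\alpha\hat{L}^2\Phi_t$, where the step-size cap $\alpha\le\frac{1-\sqrt{\rho}}{4\sqrt2\,\hat{L}}$ guarantees the gradient coefficient remains strictly negative.

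The crux---and the step I expect to be the main obstacle---is bounding $\Phi_t$. I would write the iteration in stacked form $\mathbf{V}_{t+1}=\mathbf{\Pi}\mathbf{V}_t-\alpha\mathbf{G}_t$ and subtract the mean, so that $\mathbf{V}_{t+1}-\bar{\mathbf{V}}_{t+1}=(\mathbf{\Pi}\mathbf{V}_t-\bar{\mathbf{V}}_t)-\alpha(\mathbf{G}_t-\bar{\mathbf{G}}_t)$, where $\bar{\mathbf{V}}_t$ and $\bar{\mathbf{G}}_t$ stack the respective means. The spectral-gap hypothesis $\max\{|\lambda_2(\mathbf{\Pi})|,|\lambda_N(\mathbf{\Pi})|\}\le\sqrt{\rho}$ (\cref{assum_5}) gives the contraction $\|\mathbf{\Pi}\mathbf{V}_t-\bar{\mathbf{V}}_t\|_F\le\sqrt{\rho}\|\mathbf{V}_t-\bar{\mathbf{V}}_t\|_F$; applying Young's inequality with $\beta=\tfrac{1-\sqrt{\rho}}{\sqrt{\rho}}$ turns this into the stable linear recursion $\Phi_{t+1}\le\sqrt{\rho}\,\Phi_t+\tfrac{\alpha^2}{N(1-\sqrt{\rho})}\mathbb{E}\|\mathbf{G}_t-\bar{\mathbf{G}}_t\|_F^2$. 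The delicate part is bounding the driving term uniformly: $\|\mathbf{G}_t-\bar{\mathbf{G}}_t\|_F^2$ is the diversity of the \emph{stochastic} gradients, which I would split into a variance piece governed by $\zeta^2$ and a true-diversity piece governed by $\kappa^2$ (\cref{assum_4}), while carefully retaining the average gradient norm so it can later be reabsorbed on the left-hand side. Unrolling the geometric recursion from initialization and summing produces $\frac1T\sum_t\Phi_t$ bounded by a $\frac{1}{1-\sqrt{\rho}}$-type factor times the $\zeta^2$ and $\kappa^2$ contributions, plus the transient $\frac{\sum_i\|\bm{v}^i_0\|^2}{TN(1-\rho)}$ generated by the nonzero initial spread $\Phi_0\le\frac1N\sum_i\|\bm{v}^i_0\|^2$.

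Finally I would telescope the one-step descent over $t=1,\dots,T$, identify the boundary contribution with $D=f(\bar{\bm{v}}_0,\bm{\theta}_0)-f^*$, substitute the averaged bound on $\Phi_t$, and divide through by $\alpha T/2$. Invoking the step-size condition once more to ensure the self-referential gradient term inherited from $\Phi_t$ carries a small enough coefficient to be moved to the left, and tracking how the $\frac{1}{1-\sqrt{\rho}}$ factors combine (noting $1-\rho=(1-\sqrt{\rho})(1+\sqrt{\rho})$) to route the diversity term into $(1-\sqrt{\rho})^{-2}$ and the variance term into $(1-\rho)^{-1}$, would deliver exactly the five stated terms with $\hat{L}=\frac{\eta(2LC\sqrt{r}c+G)}{r}$ throughout.
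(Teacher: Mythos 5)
Your proposal is correct and follows the same overall architecture as the paper's proof: by double stochasticity the averaged iterate obeys $\bar{\bm{v}}_{t+1}=\bar{\bm{v}}_t-\alpha\frac{1}{N}\sum_{i=1}^N\bm{g}^i_t$, the descent inequality is applied with the LoRA smoothness constant $\hat{L}$ from Lemma~\ref{lemma_1}, the inner product is exchanged for the consensus error via the polarization identity and smoothness, the stochastic noise enters as $\zeta^2/N$ through Lemma~\ref{lemma_2}, and the self-referential averaged-gradient term is absorbed using the step-size cap $\alpha\leq\frac{1-\sqrt{\rho}}{4\sqrt{2}\hat{L}}$ --- all exactly as in the paper. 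The one substantive divergence is how the consensus error is controlled (the paper's Lemma~\ref{lemma_3}): the paper unrolls the stacked iteration explicitly, $\mathbf{V}_t=\mathbf{P}^t\mathbf{V}_0-\alpha\sum_{\tau}\mathbf{P}^{t-1-\tau}\mathbf{G}_\tau$, applies $\mathbf{I}-\mathbf{Q}$, and splits $\mathbf{G}_\tau$ into the noise part $\mathbf{G}_\tau-\mathbf{H}_\tau$ and the true-gradient part $\mathbf{H}_\tau$ (the terms $T_1$ and $T_2$), each bounded through the $\rho^{(t-\tau)/2}$ decay of $(\mathbf{I}-\mathbf{Q})\mathbf{P}^{t-\tau}$, following the technique of Yu et al.; you instead derive the one-step contraction $\Phi_{t+1}\leq\sqrt{\rho}\,\Phi_t+\frac{\alpha^2}{N(1-\sqrt{\rho})}\mathbb{E}\|\mathbf{G}_t-\bar{\mathbf{G}}_t\|_F^2$ via Young's inequality and unroll it geometrically. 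These two techniques are interchangeable here and produce the same $\zeta^2/(1-\rho)$ and $\kappa^2/(1-\sqrt{\rho})^2$ structure, so your route does prove a bound of the stated form. One caveat: your method does not give the transient term with the constant you claim. Young's inequality degrades the homogeneous contraction of the squared consensus error from $\rho$ to $\sqrt{\rho}$ per step, so unrolling yields $\sum_t\rho^{t/2}\Phi_0\leq\Phi_0/(1-\sqrt{\rho})$ rather than the $\Phi_0/(1-\rho)$ you wrote --- the paper obtains $1/(1-\rho)$ precisely because its direct unrolling keeps the exact decay $\|(\mathbf{I}-\mathbf{Q})\mathbf{P}^t\mathbf{V}_0\|_F^2\leq\rho^t\|(\mathbf{I}-\mathbf{Q})\mathbf{V}_0\|_F^2$. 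Since $1-\rho=(1-\sqrt{\rho})(1+\sqrt{\rho})\leq 2(1-\sqrt{\rho})$, this costs at most a factor of $2$ on that single term; to recover the paper's constant verbatim, treat the homogeneous (initialization) part of the linear recursion exactly, without Young, and apply the Young-based recursion only to the zero-initialized driven part --- which is in effect what the paper's explicit unrolling accomplishes.
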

% Additionally, in numerous decentralizing learning works~\cite{yu2019linear,zeng2018nonconvex,el2021collaborative}, the intermediate consensus at time step $T$ has been adopted widely as the approximate solution for each agent. One may argue if this would cause an error for the convergence error bound, even ultimately leading to divergence. The answer to this question is negative as 

% Theorem~\ref{dlora-theo} suggests that the asymptotic convergence rate with DLoRA is $\mathcal{O}(\frac{1}{T})$, with the convergence to a neighborhood of $\bm{v}^*$ whose radius is $\frac{\hat{L}\alpha\zeta^2}{N}+\frac{16\alpha^2\kappa^2\hat{L}^2}{(1-\sqrt{\rho})^2}+\frac{4\alpha^2\zeta^2\hat{L}^2}{1-\rho}$, if the learning rate $\alpha$ satisfies a condition. 
It is immediately observed that the error bound in Theorem~\ref{dlora-theo} is primarily attributed to the network error caused by local gradient variance and global gradient diversity. The term related to $\sum_{i=1}^N\|\bm{v}^i_0\|^2$ is due to the non-zero initialization of matrix $\mathbf{A}$.
When the topology is \textit{dense}, like a fully connected network, where $\rho$ is smaller, the error bound is smaller. On the contrary, if the topology is \textit{sparse}, such as a ring, the error bound is larger. 
We also include the analysis for the regularly centralized setting in
% Another observation from the conclusion in Theorem~\ref{dlora-theo} is that if $N=1$, we can obtain the convergence error bound for the regularly centralized setting with the new smoothness constant, which is of independent interest. We present it formally in 
SM~\ref{centralized_analysis}. 
% This result also resembles the existing convergence error in~\cite{garrigos2023handbook}.
% The initialization error $D$ and the variance of $\bm{g}$ also influence the error bound. Both a good initialization and a large batch are able to reduce the convergence error bound, making it reach an "approximate" critical point faster. 
In a non-asymptotic convergence point of view, DLoRA is supposed to converge to a desired accuracy after a certain number of epochs, which is shown as below.
% $\varepsilon$-critical point such that $\frac{1}{T}\sum_{t=1}^T\mathbb{E}[\|\nabla f(\bar{\bm{v}}_t,\bm{\theta}_0)\|^2]\leq\varepsilon$. 
% The following result summarizes the non-asymptotic convergence rate when $\alpha$ satisfies a certain condition.
\begin{corollary}\label{corollary_1}
    Let $\alpha\lesssim\sqrt{\frac{N}{T}}$. With Theorem~\ref{dlora-theo}, then for all $T\geq\frac{32N\hat{L}^2}{(1-\sqrt{\rho})^2}$, we have
$    
        \frac{1}{T}\sum_{t=1}^T\mathbb{E}[\|\nabla f(\bar{\bm{v}}_t,\bm{\theta}_0)\|^2]\lesssim\sqrt{\frac{1}{NT}}+\sqrt{\frac{1}{NTr}}+\frac{N}{(1-\rho)Tr}+\frac{N}{(1-\sqrt{\rho})^2Tr}.
$ $\lesssim$ is the shorthand for standard big-oh notation, i.e., $a=\mathcal{O}(b)$.
\end{corollary}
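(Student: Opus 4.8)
The plan is to obtain Corollary~\ref{corollary_1} as a direct specialization of Theorem~\ref{dlora-theo}, by substituting the prescribed step size $\alpha\lesssim\sqrt{N/T}$ and then tracking the dependence on the rank $r$ through the smoothness constant $\hat{L}$. First I would verify that the chosen step size is admissible, i.e., that it respects the constraint $\alpha\leq\frac{1-\sqrt{\rho}}{4\sqrt{2}\hat{L}}$ required by Theorem~\ref{dlora-theo}. Setting $\alpha=\sqrt{N/T}$ and squaring the constraint gives $N/T\leq(1-\sqrt{\rho})^2/(32\hat{L}^2)$, which rearranges to exactly $T\geq\frac{32N\hat{L}^2}{(1-\sqrt{\rho})^2}$. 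This shows the lower bound on $T$ in the corollary is not an extra hypothesis but precisely the admissibility region of the step size.

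The second step is to record the scaling of $\hat{L}$ in $r$. Writing $\hat{L}=\frac{\eta(2LC\sqrt{r}c+G)}{r}=\frac{2\eta LCc}{\sqrt{r}}+\frac{\eta G}{r}$ and treating $\eta,L,C,c,G$ as $r$-independent constants, the first summand dominates for large $r$, so $\hat{L}=\Theta(1/\sqrt{r})$ and consequently $\hat{L}^2=\Theta(1/r)$. With $\alpha=\sqrt{N/T}$ (hence $\alpha T=\sqrt{NT}$ and $\alpha^2=N/T$), I would substitute into each of the five terms of the Theorem~\ref{dlora-theo} bound: the optimization term $2D/(\alpha T)$ becomes $\mathcal{O}(\sqrt{1/(NT)})$; the variance term $\hat{L}\alpha\zeta^2/N$ becomes $\mathcal{O}(\hat{L}/\sqrt{NT})=\mathcal{O}(\sqrt{1/(NTr)})$; the gradient-diversity term $16\alpha^2\kappa^2\hat{L}^2/(1-\sqrt{\rho})^2$ becomes $\mathcal{O}(N\hat{L}^2/((1-\sqrt{\rho})^2 T))=\mathcal{O}(N/((1-\sqrt{\rho})^2 Tr))$; and the noise term $4\alpha^2\zeta^2\hat{L}^2/(1-\rho)$ becomes $\mathcal{O}(N/((1-\rho)Tr))$. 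These four contributions reproduce the four summands claimed in the corollary.

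It then remains to dispose of the initialization term $\hat{L}^2\sum_i\|\bm{v}_0^i\|^2/(TN(1-\rho))$. Using $\mathbf{B}_0=0$ and the bounded Gaussian initialization of $\mathbf{A}_0$, I would treat $\sum_i\|\bm{v}_0^i\|^2=\mathcal{O}(N)$, so this term is $\mathcal{O}(\hat{L}^2/(T(1-\rho)))=\mathcal{O}(1/((1-\rho)Tr))$, which is dominated by the noise-induced term $\mathcal{O}(N/((1-\rho)Tr))$ and hence absorbed; collecting the survivors yields the stated bound. I expect the only genuine subtlety to be the bookkeeping of the $r$-scaling — in particular the decision to treat $C$ as an $r$-independent constant (the ``generalized'' regime in the discussion of Assumption~\ref{assum_3}), since had one instead used $C=\eta c/\sqrt{r}$, then $\hat{L}$ would scale as $\Theta(1/r)$ and the rank powers appearing in the corollary would change accordingly. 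Everything else is routine algebraic substitution and comparison of orders.
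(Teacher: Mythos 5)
Your proof is correct and takes essentially the same approach as the paper: the paper's own proof is a single sentence---substitute $\alpha\lesssim\sqrt{N/T}$ and $\hat{L}=\frac{\eta(2LC\sqrt{r}c+G)}{r}$ into Theorem~\ref{dlora-theo}---and your term-by-term bookkeeping (the admissibility check showing $T\geq\frac{32N\hat{L}^2}{(1-\sqrt{\rho})^2}$ is exactly the step-size constraint, the scaling $\hat{L}=\Theta(1/\sqrt{r})$, and the absorption of the initialization term) simply makes explicit what the paper leaves implicit. Your two flagged subtleties---treating $C$ as $r$-independent and taking $\sum_i\|\bm{v}_0^i\|^2=\mathcal{O}(N)$---are indeed necessary for the stated rank powers and are tacitly assumed by the paper.
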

% Given an $\varepsilon > 0$, we have that
% \begin{equation}
%     T=\mathcal{O}(\varepsilon^{-2}) \Rightarrow \textnormal{min}_{t=1:T}\mathbb{E}[\|\nabla f(\bm{v}_t,\bm{\theta}_0)\|^2] = \mathcal{O}(\varepsilon).
% \end{equation}
% Substituting $L=2LC\sqrt{r}\sigma + G$ into the conclusion of Theorem~\ref{theo_1} implies that the convergence rate has a correlation with the rank $r$ and the largest eigenvalue $\sigma$, i.e., \[\textnormal{min}_{t=1:T}\mathbb{E}[\|\nabla f(\bm{v}_t,\bm{\theta}_0)\|^2]\leq \mathcal{O}\bigg(\frac{r^{1/4}\sigma^{1/2}}{T^{1/2}}\bigg).\]

\noindent\textbf{Convergence rate.} To the best of our knowledge, this is the first result to show how the convergence rate evolves with the ``intrinsic rank" $r$ in the decentralized setting. When $r$ is larger to allow for more learnable parameters, the convergence error is smaller. This intuitively makes sense as a larger number of parameters make the model more expressive.
The dominated term on the right hand side of the last inequality is 
$\mathcal{O}(\sqrt{1/(NT)}+\sqrt{1/(NTr)})$ when $T$ is sufficiently large, implying the \textit{linear speedup} even with LoRA. Also, at the early phase of optimization, the larger rank suggests the faster convergence behavior. However, when the convergence is close to the optimal regime, $\mathcal{O}(\sqrt{1/(NT)})$ will stand out so that even distinct rank values result in close model performance. This phenomenon is validated correctly by empirical evidence of VLMs.
% We have noticed that recent studies~\cite{zhao2024galore,malinovsky2024randomized} reporting the similar relationship between the model performance and the rank in the centralized setting. Therefore, our finding matches the existing results to strengthen the impact of the low rank $r$ on the convergence.
% Additionally, the complexity for iterations is roughly $T=\mathcal{O}(1/\varepsilon^2)$ when it converges to a $\varepsilon$-critical point. With this, we can obtain the complexity for communication rounds is $\mathcal{O}(1/(\tau\varepsilon^2))$. 
% These results match the typical complexities in decentralized SGD~\cite{koloskova2020unified,yuan2022revisiting}, only differing in some constants.
Compared to~\cite{ghiasvand2025decentralized}, our work improves the convergence rate from $\mathcal{O}(1/T^{1/3})$ to $\mathcal{O}(1/T^{1/2})$ and shows explicit dependency of error bound on $r$.
\textbf{DeCAF.}
We will now study the impact of TSVD and quantify the concrete difference between individual consensus in DLoRA and product consensus in DeCAF. We first define formally the model consensus interference in the following.
\begin{definition}\label{definition_1} (Model Consensus Interference)
    Suppose that the parameter of agent $i$, $\mathbf{W}^i$ satisfies LoRA, i.e., $\mathbf{W}^i=\mathbf{W}_0+\frac{\eta}{r}\mathbf{B}^i\mathbf{A}^i$. In the decentralized learning setting, the product of individual consensus for $\mathbf{A}^i$ and $\mathbf{B}^i$ is not equal to the consensus of product, i.e., 
$
        \sum_{j\in Nb(i)}\pi_{ij}\mathbf{B}^j\mathbf{A}^j \neq \sum_{j\in Nb(i)}\pi_{ij}\mathbf{B}^j\sum_{j\in Nb(i)}\pi_{ij}\mathbf{A}^j.
$
\end{definition}
In SM~\ref{additional_analysis}, we detail how model consensus interference stems from the update of LoRA. Intuitively, $\mathbf{B}^i\mathbf{A}^i$ represents the full knowledge of parameter changes to the pretrained model $\mathbf{W}_0$ of agent $i$ such that $\sum_{j\in Nb(i)}\pi_{ij}\mathbf{B}^j\mathbf{A}^j$ precisely indicates the collaborative knowledge sharing from other agents in the neighborhood of agent $i$. 
Freezing $\mathbf{A}^j$ matrix on both sides can alternatively resolve the issue (cf. Theorem~\ref{model_interference_theo} in SM~\ref{analysis_dlora_fa}), but at the cost of performance drop, which is theoretically justified in Corollary~\ref{corollary_2} (in SM~\ref{analysis_dlora_fa}) and evidently validated in numerical results. In DeCAF, we keep the consensus of product and use a TSVD operator to acquire individual $\mathbf{A}^i$ and $\mathbf{B}^i$. As Lines 10 and 11 in Algorithm~\ref{alg:dlora} indicate, the TSVD operator adopts $\mathcal{T} (\cdot):=\mathbf{U}_r\Sigma_r\mathbf{V}^\top_r$ and produces the output matrices $\mathbf{A}^i_{t+1/2}$ and $\mathbf{B}^i_{t+1/2}$, where $\mathbf{A}^i_{t+1/2} = (\mathbf{V}_r\Sigma^{1/2})^\top$ and $\mathbf{B}^i_{t+1/2} = \mathbf{U}_r\Sigma_r^{1/2}$. Alternatively, one can equal $\mathbf{A}^i_{t+1/2}$ and $\mathbf{B}^i_{t+1/2}$ to $\mathbf{V}_r^\top$ and $\mathbf{U}_r\Sigma_r$, which yields analogous performance in practice.
However, one may wonder if the TSVD step results in approximation error and what the upper bound is for this. We now state the result in the sequel.
\begin{proposition}\label{prop_1}
    Let Assumption~\ref{assum_2} hold. Suppose that the parameter of agent $i$, $\mathbf{W}^i$ satisfies LoRA, i.e., $\mathbf{W}^i = \mathbf{W}_0+\frac{\eta}{r}\mathbf{B}^i\mathbf{A}^i$ and that the TSVD operator satisfies $\mathcal{T}(\tilde{\mathbf{B}}^i\tilde{\mathbf{A}}^i):=\mathbf{U}_r\Sigma_r\mathbf{V}_r^\top$, where 
    $\tilde{\mathbf{B}}^i\tilde{\mathbf{A}}^i=\sum_{j\in Nb(i)}\pi_{ij}\mathbf{B}^j\mathbf{A}^j$, $\mathbf{U}_r\in\mathbb{R}^{d\times r}$, $\Sigma\in\mathbb{R}^{r\times r}$, $\mathbf{V}_r\in\mathbb{R}^{k\times r}$. Then, in DeCAF, the approximation error between $\tilde{\mathbf{B}}^i_t\tilde{\mathbf{A}}^i_t$ and $\mathcal{T}(\tilde{\mathbf{B}}^i_t\tilde{\mathbf{A}}^i_t)$ satisfies the relationship $\|\tilde{\mathbf{B}}^i_t\tilde{\mathbf{A}}^i_t-\mathbf{B}^i_{t+1/2}\mathbf{A}^i_{t+1/2}\|_F\leq \sqrt{(|Nb(i)|-1)r}c^2$, for any $t\geq 0$.
\end{proposition}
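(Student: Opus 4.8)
The plan is to recognize that $\mathcal{T}(\tilde{\mathbf{B}}^i_t\tilde{\mathbf{A}}^i_t)=\mathbf{B}^i_{t+1/2}\mathbf{A}^i_{t+1/2}$ is exactly the best rank-$r$ approximation of $M:=\tilde{\mathbf{B}}^i_t\tilde{\mathbf{A}}^i_t$ in Frobenius norm (Eckart--Young--Mirsky), so the quantity to bound equals the tail of the singular-value spectrum: $\|M-\mathcal{T}(M)\|_F^2=\sum_{l>r}\sigma_l^2(M)$. The whole argument then reduces to controlling two things independently: how many terms appear in this tail sum, and how large each surviving $\sigma_l(M)$ can be.

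For the counting, I would use that $M=\sum_{j\in Nb(i)}\pi_{ij}\mathbf{B}^j\mathbf{A}^j$ is a sum of $|Nb(i)|$ matrices, each a product $\mathbf{B}^j\mathbf{A}^j$ with $\mathbf{B}^j\in\mathbb{R}^{d\times r}$, $\mathbf{A}^j\in\mathbb{R}^{r\times k}$ and hence of rank at most $r$. Subadditivity of rank gives $\operatorname{rank}(M)\leq|Nb(i)|\,r$, so at most $|Nb(i)|r-r=(|Nb(i)|-1)r$ singular values lie beyond index $r$; the tail sum therefore has at most $(|Nb(i)|-1)r$ nonzero terms. For the magnitude, I would bound the spectral norm by combining the triangle inequality, submultiplicativity, and Assumption~\ref{assum_2}: $\sigma_1(M)=\|\sum_j\pi_{ij}\mathbf{B}^j\mathbf{A}^j\|_2\leq\sum_j\pi_{ij}\|\mathbf{B}^j\|_2\|\mathbf{A}^j\|_2\leq c^2\sum_j\pi_{ij}$. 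Since $\mathbf{\Pi}$ is doubly stochastic (Assumption~\ref{assum_5}) with nonnegative entries, the neighborhood weights sum to one, $\sum_{j\in Nb(i)}\pi_{ij}=1$, giving $\sigma_1(M)\leq c^2$ and hence $\sigma_l(M)\leq c^2$ for every $l$. Combining the two facts, $\|M-\mathcal{T}(M)\|_F^2=\sum_{l>r}\sigma_l^2(M)\leq(|Nb(i)|-1)r\,c^4$, and taking square roots yields the claimed bound.

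The main obstacle is not any individual estimate but selecting the right way to split the bound. The naive route---invoking Eckart--Young optimality against a hand-picked rank-$r$ competitor such as the agent's own term $\pi_{ii}\mathbf{B}^i\mathbf{A}^i$, then applying the triangle inequality directly on Frobenius norms---produces a factor like $(1-\pi_{ii})\sqrt{r}\,c^2$ that fails to match the target. The tight constant $\sqrt{(|Nb(i)|-1)r}\,c^2$ emerges only from the \emph{product} of the two separate observations: the rank count bounds the number of tail terms, while the doubly-stochastic row sum collapses the spectral-norm triangle inequality to a single $c^2$, so each tail term is uniformly bounded by $c^4$. I would also note the degenerate case $\operatorname{rank}(M)\leq r$, where the truncation error is exactly zero and the inequality holds trivially.
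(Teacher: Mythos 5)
Your proposal is correct and follows essentially the same route as the paper's proof: both bound the TSVD truncation error by the tail of the singular-value spectrum, use rank subadditivity of the neighborhood sum to count at most $(|Nb(i)|-1)r$ tail terms, and bound each singular value by $\sigma_1(\tilde{\mathbf{B}}^i_t\tilde{\mathbf{A}}^i_t)\leq\sum_{j}\pi_{ij}\sigma_1(\mathbf{B}^j_t)\sigma_1(\mathbf{A}^j_t)\leq c^2$ via Assumption~\ref{assum_2} and the stochasticity of $\mathbf{\Pi}$. Your version is slightly more explicit in invoking Eckart--Young--Mirsky and in noting that the row sum $\sum_{j\in Nb(i)}\pi_{ij}=1$ is what collapses the triangle inequality to a single $c^2$, a step the paper uses implicitly.
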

% From Proposition~\ref{prop_1}, the matrix factorization step (Line 11 in Algorithm~\ref{alg:dlora}) causes approximation error impacting the convergence error bound, also leading to more computational overhead in practice. 
% Since the matrix factorization only happens in the consensus, setting a larger $\tau$ value can reduce the negative impact.
We remark on the memory and run time required by TSVD. For memory, since the rank in this context has been fixed as $r\ll\text{min}\{d,k\}$, storing $\mathbf{U}_r, \Sigma_r, \mathbf{V}_r$ takes $\mathcal{O}((d+k)r)$ space, matching the computational and communication cost. For run time, it is roughly $\mathcal{O}(rdk)$ per iteration and $\mathcal{O}(Trdk/\tau)$ during $T$ iterations, where $\tau$ assists in reducing run time for TSVD. Another interesting observation from Proposition~\ref{prop_1} is that the approximation error is related to the size of neighborhood of agent $i$, indicating that if more agents are involved, TSVD may lead to larger error. This is empirically supported in our results. The extreme case $|Nb(i)|=1$ implying no approximation error due to no connections contradicts the assumption of a connected graph such that the minimum for the approximation error is $\sqrt{r}c^2$.
While model consensus interference remains in DLoRA, numerous existing works~\cite{ghiasvand2025decentralized,ye2024openfedllm,zhang2024towards} have overlooked this issue and showed appealing empirical results. While the primary distinction between DLoRA and DeCAF lies in the use of consensus, DLoRA benefits from the best-known sublinear convergence rate. We then wonder if the difference in consensus can be quantified and also whether DeCAF enjoys the same convergence rate. 
% We present a formal result to shed light on these.

\begin{table*}[htbp]
    \centering
    \caption{\small{Test accuracy on IID and non-IID datasets using different algorithms with CLIP and LoRA (rank 2, 10 agents, 500 epochs, fully connected topology). Best and second-best are bolded and underlined.}}

    \label{tab:accuracy_results_updated}
    \begin{tabular}{lcccccc}
        \toprule
        \multirow{2}{*}{\textbf{Algorithm}} & \multicolumn{3}{c}{\textbf{IID}} & \multicolumn{3}{c}{\textbf{Non-IID}} \\
        \cmidrule(lr){2-4} \cmidrule(lr){5-7}
        & \textbf{Flowers} & \textbf{UCF} & \textbf{Food101} & \textbf{Flowers} & \textbf{UCF} & \textbf{Food101} \\
        \midrule
        \textbf{Centralized}~\cite{zanella2024low} & 98.0 & 86.7 & 84.2 & 98.0 & 86.7 & 84.2 \\
        \hline
        \textbf{Local} 
            & $93.20$ & $78.06$ & $78.96$ 
            & $39.69$ & $38.15$  & $46.41$ \\
         \textbf{Local-FA} 
            & $92.96$ & $81.10$ & $85.36$  
            & $43.79$ & $41.71$ & $49.87$ \\
        \textbf{FedAvg} 
        & \boldmath$98.69$ & \boldmath{$89.02$} & $86.63$ & \boldmath$96.41$ & \boldmath$84.78$ & $83.83$ \\
         \textbf{FedAvg-FA} 
         & $96.36$ & $84.87$ & \underline{$88.06$} 
         & $93.04$ & $82.53$ & \underline{$87.80$} \\
        \textbf{DLoRA} 
            & \boldmath{$98.69$} & \underline{$88.77$} & $86.58$
            & $94.92$ & $77.35$ & $81.89$ \\
        \textbf{DLoRA-FA (ours)} 
            & $96.23$ & $84.69$ & \underline{$88.06$} 
            & $92.47$ & $82.25$ & \boldmath{$87.90$} \\
        \textbf{DeCAF (ours)} 
            & \underline{$97.50$} & $86.50$ & \boldmath{$88.08$} & \underline{$95.33$} 
            & \underline{$84.31$} & \underline{$87.80$} \\
        \bottomrule
    \end{tabular}
    \label{tab:evaluation_vlm_all_updated}
\end{table*} 
\begin{table*}[htbp]
    \centering
    \caption{\small{F1 scores on WIC and BoolQ using LLAMA2-7B, trained with 10 agents over 150 batches on a Fully Connected topology. CNT denotes centralized training. Best and second-best scores are in bold and underlined.}}
    \label{tab:evaluation_metrics}
    \begin{tabular}{lc|ccccccc}
        \toprule
        \textbf{Data} & \textbf{CNT} &
        \textbf{Local} & \textbf{Local-FA} & \textbf{FedAvg} 
        & \textbf{FedAvg-FA} & \textbf{DLoRA} & \textbf{DLoRA-FA} 
        & \textbf{DeCAF} \\
        \midrule
        % \textbf{Centralized}  & 4.45~\cite{pan2024lisalayerwiseimportancesampling}{} & - & - & - & - & - & - & - & - \\
        % \hline
        \textbf{WIC}   & $63.0$  & $62.5$  &  $63.9$ & \underline{$64.4$}  & \underline{$64.4$}  & \boldmath{$64.6$}          & \underline{$64.4$}         & \underline{$64.4$} \\
        % \textbf{IMDB}   & $88.80$  & $87.40$  &  $80.50$ & \boldmath{$88.47$}  & $76.25$  & \underline{$87.80$}          & $76.47$          & $65.9$ \\
        \textbf{Boolq}   & $64$   & \underline{$62.0$}   & $58.0$ & \boldmath{$71.5$} & $60.8$  & \underline{$62.0$} & $60.0$ & \underline{$62.0$}  \\
        % \textbf{DLoRA-Ring (ours)}& 3.9  & 0.48 & 6.7  & 2.8  & 0.11          & \underline{4.7}          & 2.7  & 0.13          & 4.6  \\
        % \textbf{DLoRA-FC (ours)}  & \textbf{4.2} & \textbf{0.55} & \textbf{7.1} & \textbf{3.1} & \underline{0.15}          & \textbf{4.8} & \textbf{3.1} & \underline{0.14}          & \textbf{4.9} \\
        \bottomrule
    \end{tabular}
\end{table*}

\begin{theorem}\label{theorem_2}
    Denote by $\mathcal{E}_T=\mathbb{E}[\|\sum_{j\in Nb(i)}\pi_{ij}\mathbf{B}^j_T\mathbf{A}^j_T - \sum_{j\in Nb(i)}\pi_{ij}\mathbf{B}^j_T\sum_{j\in Nb(i)}\pi_{ij}\mathbf{A}^j_T\|_F]$ the consensus error between DLoRA and DeCAF at any $T>0$ such that the following relationship holds
    $
        \mathcal{E}_T\leq \frac{2\alpha G\eta c}{\sqrt{r}(1-\sqrt{\rho})} + (\rho^{\frac{T}{2}}c+\frac{\alpha\eta cG}{r(1-\sqrt{\rho})})\frac{\alpha \eta cG}{r(1-\sqrt{\rho})}.
    $
\end{theorem}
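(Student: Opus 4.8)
The plan is to recognize that the matrix inside $\mathcal{E}_T$ is exactly the model consensus interference of Definition~\ref{definition_1} evaluated at the time-$T$ iterates, and to control it through the consensus deviations of the two low-rank factors. Writing $\tilde{\mathbf{A}}^i_T=\sum_{j\in Nb(i)}\pi_{ij}\mathbf{A}^j_T$ and $\tilde{\mathbf{B}}^i_T=\sum_{j\in Nb(i)}\pi_{ij}\mathbf{B}^j_T$, and using that $\mathbf{\Pi}$ is doubly stochastic so $\sum_{j\in Nb(i)}\pi_{ij}=1$, I would first establish the algebraic identity
\begin{equation*}
\sum_{j\in Nb(i)}\pi_{ij}\mathbf{B}^j_T\mathbf{A}^j_T-\tilde{\mathbf{B}}^i_T\tilde{\mathbf{A}}^i_T=\sum_{j\in Nb(i)}\pi_{ij}(\mathbf{B}^j_T-\tilde{\mathbf{B}}^i_T)(\mathbf{A}^j_T-\tilde{\mathbf{A}}^i_T),
\end{equation*}
which rewrites the interference as a $\pi$-weighted covariance of per-agent disagreements. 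Taking Frobenius norms with the triangle inequality and submultiplicativity $\|\mathbf{X}\mathbf{Y}\|_F\le\sigma_1(\mathbf{X})\|\mathbf{Y}\|_F$ then bounds $\mathcal{E}_T$ by a weighted sum of $\mathbb{E}[\sigma_1(\mathbf{B}^j_T-\tilde{\mathbf{B}}^i_T)\,\|\mathbf{A}^j_T-\tilde{\mathbf{A}}^i_T\|_F]$, so the task reduces to bounding each factor's consensus deviation at time $T$.

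Next I would bound the gradients with respect to the low-rank factors. By the chain rule applied to $\mathbf{W}=\mathbf{W}_0+\frac{\eta}{r}\mathbf{B}\mathbf{A}$ we have $\nabla_{\mathbf{A}}f^i=\frac{\eta}{r}\mathbf{B}^\top\nabla_{\mathbf{W}}f^i$ and $\nabla_{\mathbf{B}}f^i=\frac{\eta}{r}\nabla_{\mathbf{W}}f^i\mathbf{A}^\top$. Combining $\sigma_1(\mathbf{A}^i),\sigma_1(\mathbf{B}^i)\le c$ (Assumption~\ref{assum_2}) with the $G$-Lipschitz bound $\|\nabla_{\mathbf{W}}f^i\|_F\le G$ implied by Assumption~\ref{assum_3} (part 2) yields $\|\bm{g}^{i,\mathbf{A}}_t\|_F,\|\bm{g}^{i,\mathbf{B}}_t\|_F\le\frac{\eta cG}{r}$ in expectation, as well as the looser estimate $\frac{\eta cG}{\sqrt{r}}$ obtained through $\|\cdot\|_F\le\sqrt{r}\,\sigma_1(\cdot)$, which I will need where a first-order residual has to be controlled.

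I would then set up contraction recursions for the disagreements using the consensus-plus-local-step updates of Algorithm~\ref{alg:dlora} together with Assumption~\ref{assum_5} (spectral gap $\|\mathbf{\Pi}-\mathbf{J}\|_2\le\sqrt{\rho}$, with $\mathbf{J}$ the averaging projector): each deviation obeys (deviation at $t{+}1$)$\,\le\sqrt{\rho}\,$(deviation at $t$)$\,+\,\alpha\cdot$(gradient bound). Unrolling from initialization and summing the geometric series contributes $\frac{\alpha\eta cG}{r(1-\sqrt{\rho})}$ from the accumulated gradients. The crucial asymmetry is the initialization: since $\mathbf{B}_0=0$, the $\mathbf{B}$-disagreement carries no initial term, giving $\mathbb{E}\sigma_1(\mathbf{B}^j_T-\tilde{\mathbf{B}}^i_T)\lesssim\frac{\alpha\eta cG}{r(1-\sqrt{\rho})}$, whereas the randomly initialized $\mathbf{A}$ (with $\sigma_1(\mathbf{A}_0)\le c$) retains a contracting term, giving $\mathbb{E}\|\mathbf{A}^j_T-\tilde{\mathbf{A}}^i_T\|_F\lesssim\rho^{T/2}c+\frac{\alpha\eta cG}{r(1-\sqrt{\rho})}$.

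Finally I would assemble the pieces. Multiplying the $\mathbf{A}$- and $\mathbf{B}$-disagreement bounds reproduces the second-order product term $\bigl(\rho^{T/2}c+\frac{\alpha\eta cG}{r(1-\sqrt{\rho})}\bigr)\frac{\alpha\eta cG}{r(1-\sqrt{\rho})}$, while the first-order term $\frac{2\alpha G\eta c}{\sqrt{r}(1-\sqrt{\rho})}$ arises from the residual that survives when the one-sided split $\sum_{j}\pi_{ij}(\mathbf{B}^j_T-\tilde{\mathbf{B}}^i_T)\mathbf{A}^j_T$ is bounded termwise—before the exact cancellation $\sum_{j}\pi_{ij}(\mathbf{B}^j_T-\tilde{\mathbf{B}}^i_T)=0$ is exploited—and is then controlled via $\sigma_1\le c$ (Assumption~\ref{assum_2}) and the $\sqrt{r}$ norm conversion. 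I expect the main obstacle to be the bookkeeping in the recursion step: correctly propagating the spectral-gap contraction through the two coupled recursions with their different initializations, and reconciling the local weighted averages $\tilde{\mathbf{A}}^i,\tilde{\mathbf{B}}^i$ appearing in the interference with the global average used in the standard consensus-deviation estimate, so that the constants collapse to exactly the stated first- and second-order terms.
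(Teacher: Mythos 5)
Your central device---the exact covariance identity
\begin{equation*}
\sum_{j\in Nb(i)}\pi_{ij}\mathbf{B}^j_T\mathbf{A}^j_T-\tilde{\mathbf{B}}^i_T\tilde{\mathbf{A}}^i_T=\sum_{j\in Nb(i)}\pi_{ij}\bigl(\mathbf{B}^j_T-\tilde{\mathbf{B}}^i_T\bigr)\bigl(\mathbf{A}^j_T-\tilde{\mathbf{A}}^i_T\bigr),
\end{equation*}
valid because row $i$ of the doubly stochastic $\mathbf{\Pi}$ sums to one---is genuinely different from what the paper does, and in principle tighter. The paper never cancels anything: it applies the crude triangle inequality $\mathcal{E}_T\leq\mathbb{E}\|\sum_j\pi_{ij}\mathbf{B}^j_T\mathbf{A}^j_T\|_F+\mathbb{E}\|\tilde{\mathbf{B}}^i_T\tilde{\mathbf{A}}^i_T\|_F$ and bounds the two norms \emph{separately}. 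The first norm is controlled by unrolling a TSVD-free surrogate of DeCAF in which consensus is run directly on $\Delta\mathbf{W}=\frac{\eta}{r}\mathbf{B}\mathbf{A}$ (its gradient split by the chain rule into $\frac{\eta}{r}(\mathbf{B}^i_t)^\top\bm{g}^{i,\mathbf{W}}_t+\frac{\eta}{r}\bm{g}^{i,\mathbf{W}}_t(\mathbf{A}^i_t)^\top$), producing exactly the first stated term $\frac{2\alpha G\eta c}{\sqrt{r}(1-\sqrt{\rho})}$; the second norm is controlled by unrolling each factor's own recursion with $\rho^{(T-t)/2}$-weighted gradient sums, where $\mathbf{B}_0=0$ kills the initial contribution for $\mathbf{B}$ and the random $\mathbf{A}_0$ leaves the $\rho^{T/2}c$ term---your initialization asymmetry, chain-rule gradient bounds, and geometric-series bookkeeping coincide with the paper's here. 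So the two stated terms in the theorem are, respectively, bounds on the \emph{magnitudes} of the two consensus expressions, not a first-order residual plus a second-order correction of a single cancellation argument.

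This is where your proposal has a genuine gap: the final assembly is self-contradictory. If you use the identity, the interference \emph{is} the weighted covariance, and there is no ``residual that survives when the one-sided split $\sum_j\pi_{ij}(\mathbf{B}^j_T-\tilde{\mathbf{B}}^i_T)\mathbf{A}^j_T$ is bounded termwise before the cancellation is exploited''---you cannot simultaneously invoke $\sum_j\pi_{ij}(\mathbf{B}^j_T-\tilde{\mathbf{B}}^i_T)=0$ to obtain the identity and decline to invoke it to manufacture the first stated term. Carried out consistently, your route yields a \emph{purely second-order} bound (a product of two disagreement bounds), and to conclude the theorem you would then need to argue that this tighter quantity is below the stated sum. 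That is plausible since the stated first term is nonnegative, but it is not automatic: replacing the local weighted averages $\tilde{\mathbf{A}}^i_T,\tilde{\mathbf{B}}^i_T$ by the global average (which is what the standard consensus-contraction estimate controls) costs roughly a factor of $2$ per factor, hence about $4$ on the product, and showing that this inflated second-order term is dominated by $\frac{2\alpha G\eta c}{\sqrt{r}(1-\sqrt{\rho})}$ plus the stated second-order term requires additional conditions relating $\alpha$, $c$, and $r$ that you have not supplied. Either finish the covariance argument with that domination step made explicit, or fall back to the paper's triangle-inequality split, which produces the two stated terms directly.
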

Theorem~\ref{theorem_2} signifies the upper bound for the difference between product of individual consensus and consensus of product for the first time, showing the impact of rank $r$, the spectral gap $1-\sqrt{\rho}$, and the step size $\alpha$ on the bound. When $T\to\infty$, the term $\rho^{T/2}$ vanishes, exhibiting that the error reduces if $r$ increases, which is evidently validated. We have known that when $r$ is large, $\mathbf{B}\mathbf{A}$ is more expressive in the parameter change to the frozen parameters $\mathbf{W}_0$. 
% Thus, for agent $i$, if the collaborative knowledge from its neighbors can only be shared through individual consensus, intuitively, it will lose less critical information of parameter changes from other agents. 
This intuitively implies that DLoRA can still be validly used to approximate DeCAF if setting a proper rank. Additionally, we are also interested in the decaying rate of $\mathcal{E}_T$ as it affects the ultimate convergence rate of DeCAF. Suppose that $\alpha$ follows the condition in Theorem~\ref{dlora-theo}, i.e, $\alpha\lesssim \sqrt{N/T}$. We have 
$
    \mathcal{E}_T\lesssim\frac{\sqrt{N}}{\sqrt{Tr}(1-\sqrt{\rho})}+\frac{\sqrt{N}\rho^{T/2}}{\sqrt{T}r(1-\sqrt{\rho})}+\frac{N}{Tr^2(1-\sqrt{\rho})^2}.
$
Since $0<\rho<1$, when $T$ is sufficiently large, the second term on the right hand side of the above inequality vanishes, and the first term dominates, leading to the rate of $\mathcal{O}(1/\sqrt{T})$. Combining the convergence rate of DLoRA yields that the convergence rate of DeCAF still remains $\mathcal{O}(1/\sqrt{T})$.

\begin{figure*}[htp]
    \centering
    % Subfigure for Topology
    \begin{subfigure}[b]{0.32\textwidth}
        \centering
        \includegraphics[width=\textwidth]{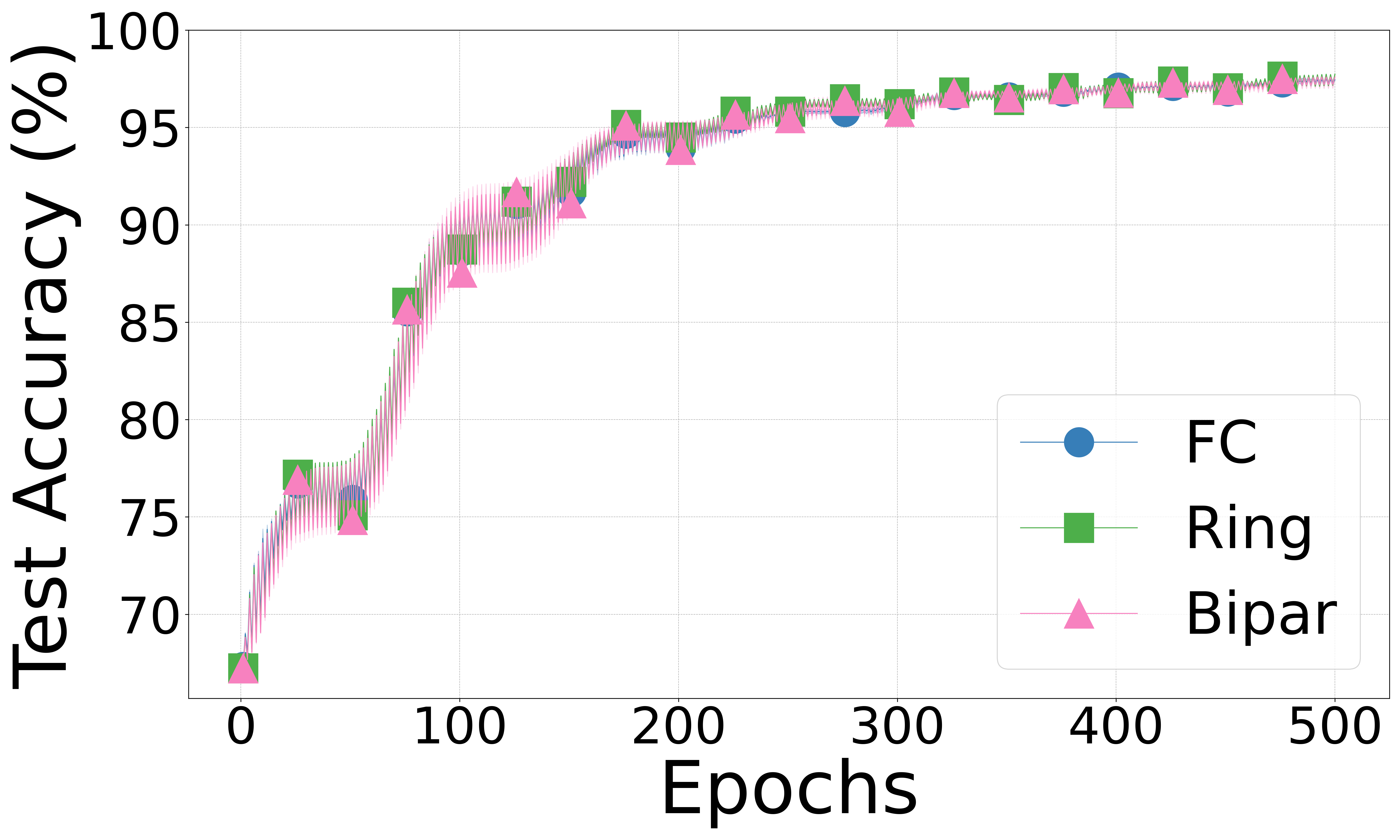}
        \caption{10-Agents, IID}
        \label{fig:topology_iid}
    \end{subfigure}
    \begin{subfigure}[b]{0.32\textwidth}
        \centering
        \includegraphics[width=\textwidth]{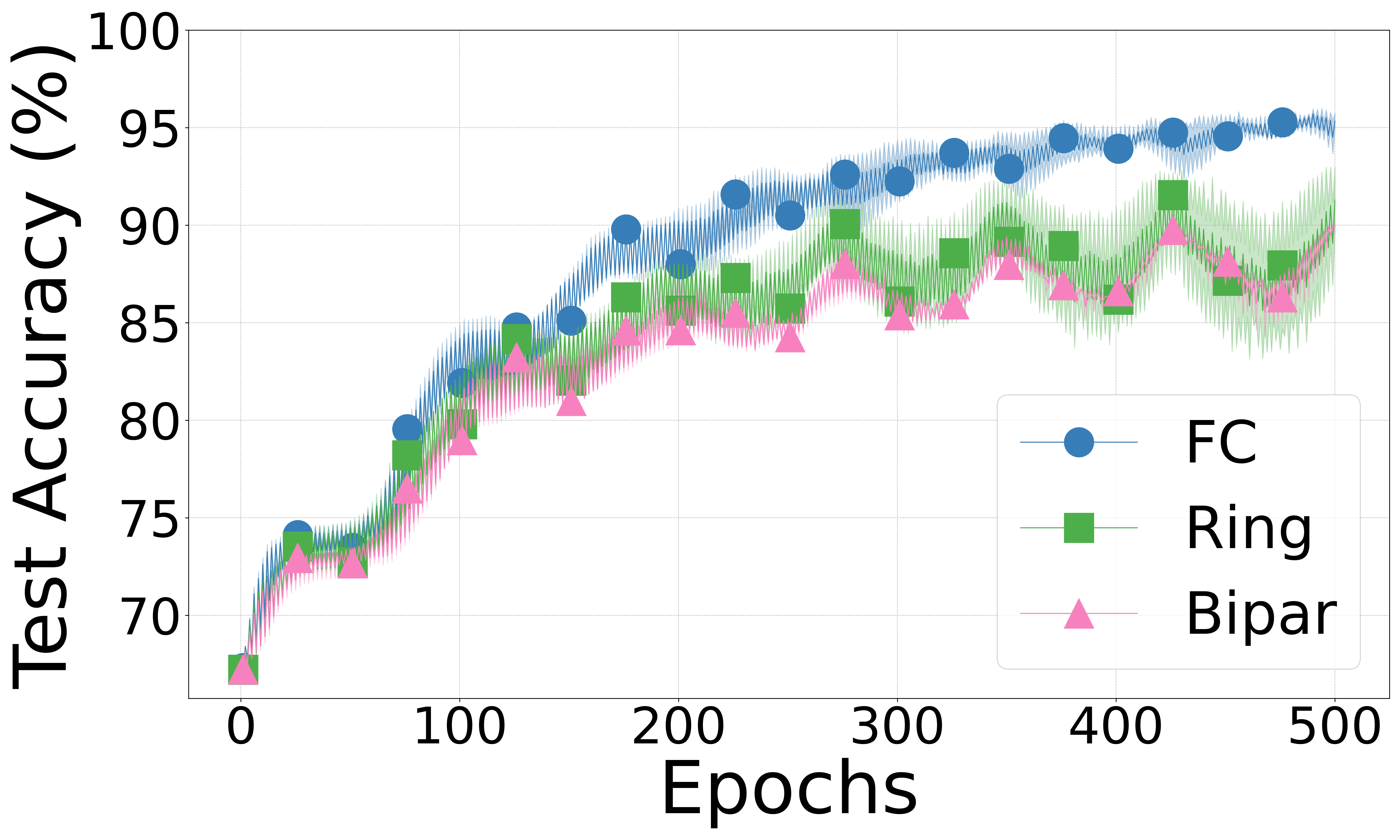}
        \caption{10-Agents, Non-IID}
        \label{fig:topology_non_iid}
    \end{subfigure}
     % Subfigure for Varying Algorithm
    \begin{subfigure}[b]{0.32\textwidth}
        \centering
        \includegraphics[width=\textwidth]{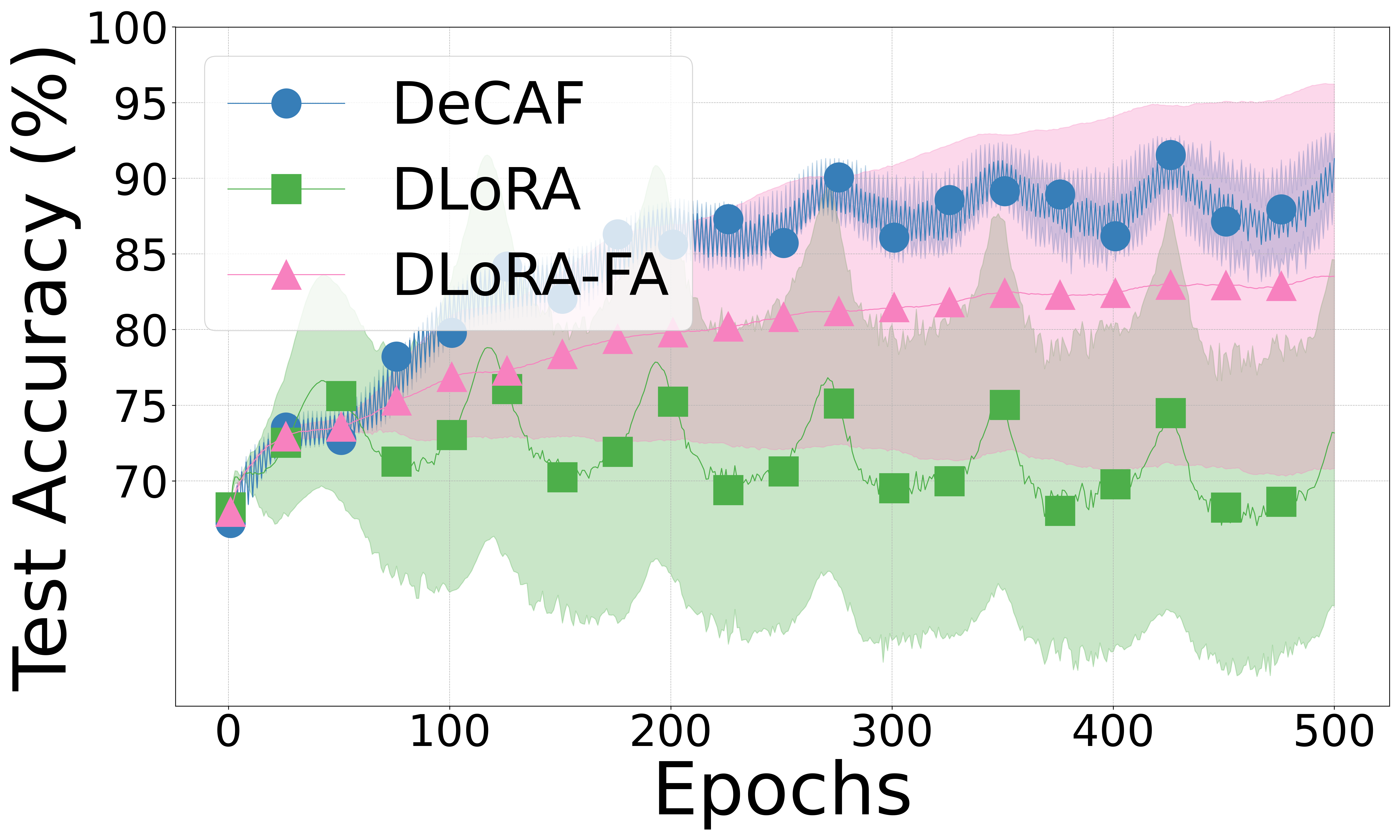}
        \caption{Algorithm (Ring, Non-IID)}
        \label{fig:algorithm}
    \end{subfigure}
    
    % Subfigure for Number of Shots
   \caption{Test accuracy on the \textbf{Flowers} dataset with \textbf{CLIP}: (\ref{fig:topology_iid}) IID and (\ref{fig:topology_non_iid}) non-IID topologies using DeCAF; (\ref{fig:algorithm}) decentralized algorithms on non-IID ring topology (40 shots/class, rank 2).}
    \label{fig:ablation_study}
\end{figure*}
\begin{figure*}[htp]
    \centering
    % Subfigure for Varying Low Rank Number
    \begin{subfigure}[b]{0.32\textwidth}
        \centering
        \includegraphics[width=\textwidth]{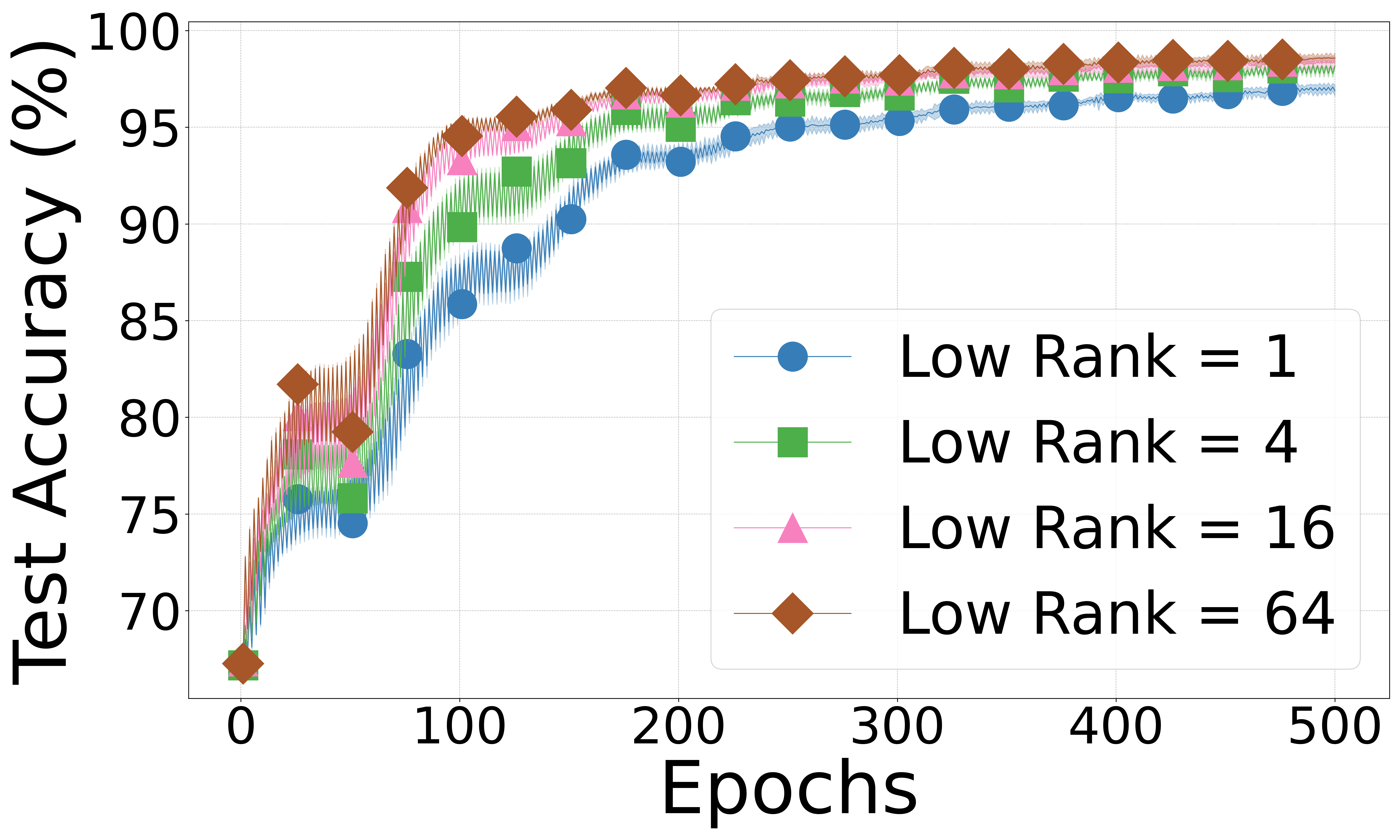}
        \caption{Low-Rank Variation (DeCAF)}
        \label{fig:low_rank}
    \end{subfigure}
    \begin{subfigure}[b]{0.32\textwidth}
        \centering
        \includegraphics[width=\textwidth]{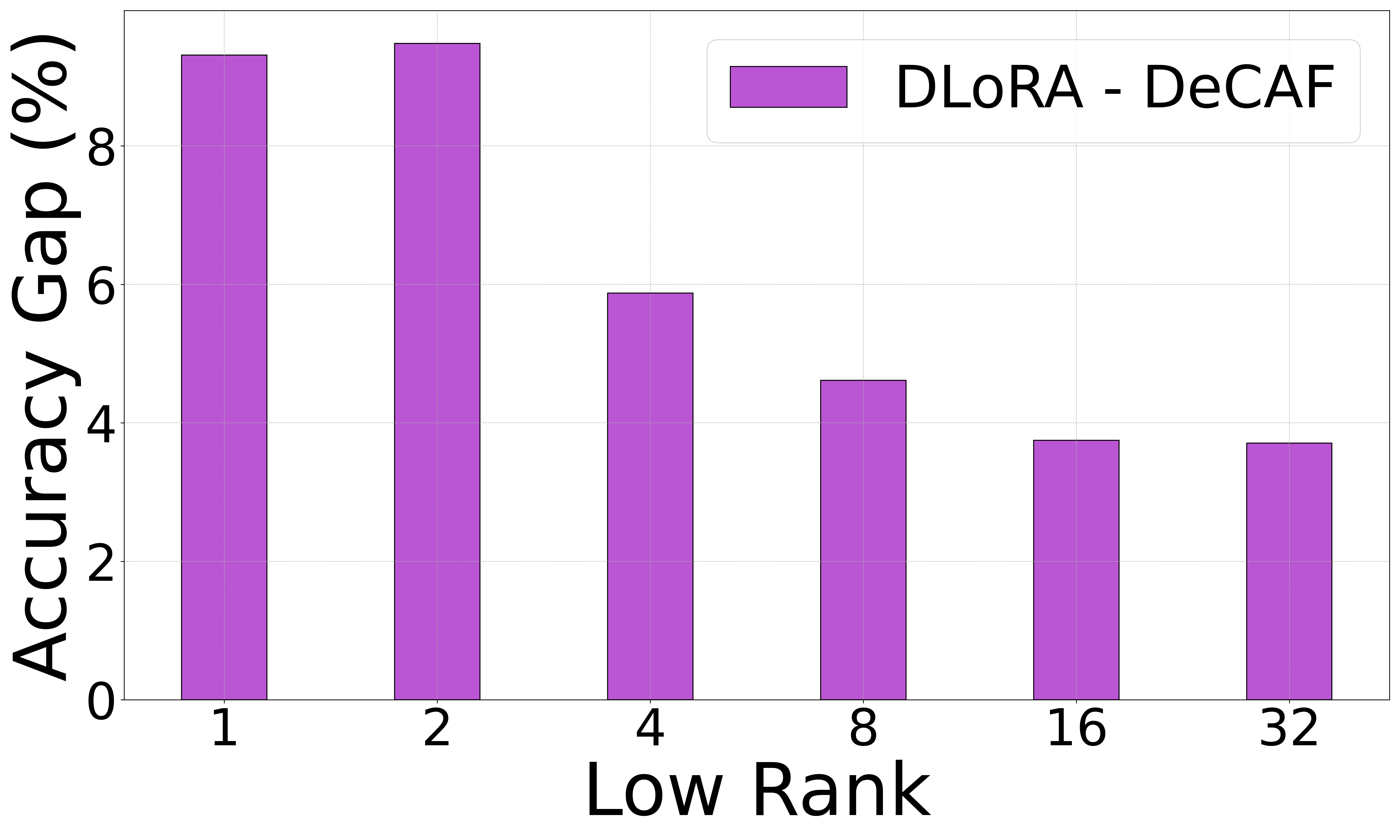}
        \caption{Accuracy gap}
        \label{fig:diff_acc_dl}
    \end{subfigure}
   \begin{subfigure}[b]{0.32\textwidth}
        \centering
        \includegraphics[width=\textwidth]{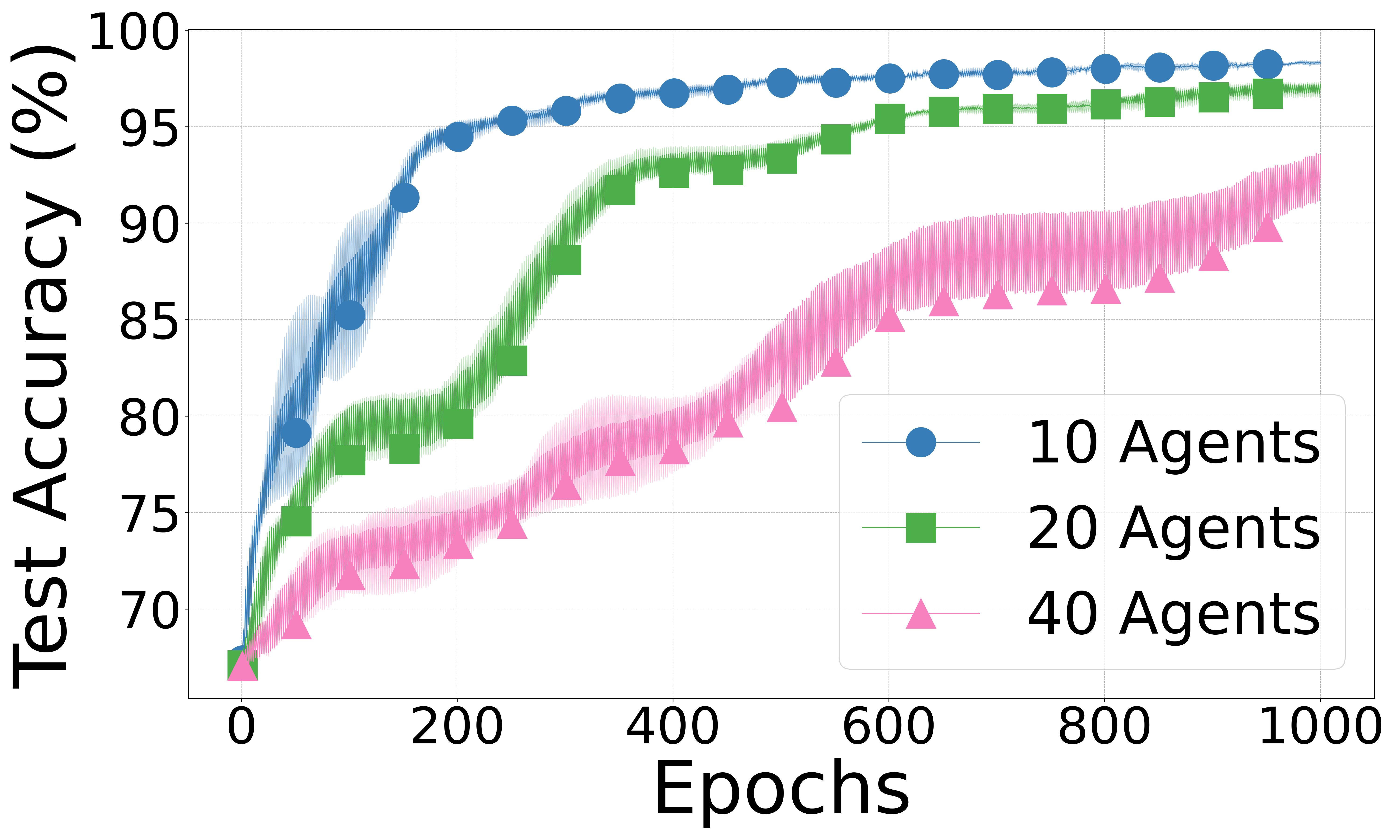}
        \caption{Scalability: FC, IID}
        \label{fig:scalability}
    \end{subfigure}

    \caption{Test accuracy on \textbf{Flowers} with \textbf{CLIP}: 
(\subref{fig:low_rank}) DeCAF with varying ranks (IID, FC); 
(\subref{fig:diff_acc_dl}) DeCAF vs. DLoRA accuracy gap; 
(\subref{fig:scalability}) DeCAF scalability with agent count (40 shots/class, rank 2).}
    \label{fig:comparison_study}
\end{figure*}
\section{Numerical Results}
\label{numerical_results}
In this section, we evaluate the performance of our proposed algorithms across various model architectures and datasets. We benchmark these algorithms against established baselines: Dec-LoRA (\textbf{DLoRA})~\cite{ghiasvand2025decentralized}, Federated Averaging (\textbf{FedAvg})~\cite{mcmahan2023communicationefficientlearningdeepnetworks} and Local Fine-tuning (\textbf{Local})~\cite{stich2018local}. While several variants of Federated LoRA exist (Table~\ref{tab:comparison}), we use the vanilla FedAvg as the primary baseline to evaluate the performance of the proposed decentralized algorithm against federated algorithms. Our evaluations consider different data distributions (IID and non-IID) and varying numbers of agents (10, 20, and 40).
For the VLM tasks, we adopt the data distribution framework from~\cite{saadati2024dimat}, which encompasses both IID and non-IID scenarios. Additional details regarding the construction of mixing matrices for different network topologies, extended LLM-based evaluation results, and a comprehensive analysis of computation and communication overhead are provided in the supplementary material (Section~\ref{additional_nu_results}).
% In the context of LLMs, we utilize the experimental setup described in~\cite{ghiasvand2025decentralized}, ensuring consistency and comparability in our evaluations.

\noindent\textbf{DeCAF for VLM.}
Table~\ref{tab:evaluation_vlm_all_updated} summarizes the test accuracy of various algorithms applied to the \textbf{Flowers}~\cite{Nilsback2008AutomatedFC}, \textbf{UCF}~\cite{soomro2012ucf101dataset101human}, and \textbf{Food101}~\cite{10.1007/978-3-319-10599-4_29} datasets under both IID and Non-IID distributions trained for 500 epochs, with a communication frequency of 1 epoch in federated and decentralized settings. The evaluation employs the \textbf{CLIP}~\cite{radford2021learningtransferablevisualmodels} model with a ViT-B/16 backbone. It has been fine-tuned using LoRA with a rank of 2, scaling factor $\eta=1$, and a dropout rate of 0.25, utilizing a batch size of 32 and a cosine learning rate scheduler. This follows the methodology outlined in the paper ~\cite{zanella2024low}. It is worth mentioning that the centralized results are directly sourced from ~\cite{zanella2024low} for 16 shots (\# of data samples per class). However, to maintain a true IID distribution, we used 40 shots for all other algorithms as the minimum number. Each setting receives a total of 40 labeled examples (shots) per class. For an IID distribution across 10 agents, this means each agent has 4 shots per class. In the Non-IID setting, each agent receives 40 shots from only 1 out of the 10 total classes.
In Table~\ref{tab:evaluation_vlm_all_updated}, DeCAF ranks as the best or the second-best performer, showcasing its effectiveness. The DLoRA-FA variant, though beneficial in reducing communication and computational overhead, comes with a trade-off in accuracy. This trend is similarly observed in FedAvg and Local baselines, 
% with sometimes significant drops in performance when the $\mathbf{A}$ matrix is frozen, 
except for the Food101 dataset, where even the centralized model shows limited performance gains, possibly due to the lack of regularization, when using  cross-entropy loss~\cite{zanella2024low}.% Overall these findings highlight the potential of LoRA in decentralized learning while emphasizing the balance between efficiency and accuracy in different training settings.

% \begin{itemize}
%     \item Both \textbf{DLoRA} and \textbf{DLoRA-FA} 
    
% \end{itemize}

\noindent\textbf{DeCAF for LLM.} Table~\ref{tab:evaluation_metrics} presents the F1 Score for various algorithms across two datasets: \textbf{WIC}~\cite{pilehvar2019wicwordincontextdatasetevaluating}, and \textbf{Boolq}~\cite{clark2019boolq}, using a network of 10 agents. A pre-trained \textbf{LLAMA2-7B} model~\cite{touvron2023llama2openfoundation} was fine-tuned with LoRA, using a rank of 4, scaling factor $\eta = 8$, dropout rate of 0.1, and a batch size of 32. Training was performed for 150 batches, with communication occurring every 3 batches in both federated and decentralized settings.
% Centralized experiments were omitted due to computational constraints. 
Similar to the results observed for VLMs, DeCAF achieves either the highest or second-highest score in all LLM tasks. Additional results on instruction tuning with LLMs are provided in the SM~\ref{additional_nu_results}.

\noindent\textbf{Ablation Studies.}
% In this section, we present comprehensive ablation studies on the proposed algorithms based on VLM fine-tuning to validate theoretical claims. 
\textbf{Different Topologies}: Fig.\ref{fig:topology_iid} and Fig.\ref{fig:topology_non_iid} present the impact of different network topologies on model performance using DeCAF under both IID and non-IID data distributions. In the IID setting, all topologies consistently yield strong performance. In contrast, under the more challenging non-IID setting, while bipartite and ring topologies introduce more noise and variability, they still achieve high accuracy. Notably, as shown in Fig.\ref{fig:algorithm}, DeCAF with a ring topology in the non-IID scenario—which represents an extremely difficult learning environment—still outperforms other decentralized learning algorithms. On the other hand, DLoRA with a ring topology fails to converge under non-IID conditions due to heightened data heterogeneity, which increases the error bound described in Theorem\ref{dlora-theo} and leads to divergence.
\textbf{Low Rank}: 
% The convergence rate is closely linked to the rank $r$. To support this, 
Fig.~\ref{fig:low_rank} presents test accuracy across ranks from 1 to 64, showing that higher ranks generally lead to better performance, consistent with Corollary~\ref{corollary_1}. We also observe that models converge to similar final accuracy levels, aligning with our theoretical insights.
\textbf{DeCAF vs. DLoRA}: As shown in Fig.~\ref{fig:diff_acc_dl}, the accuracy gap between DeCAF and DLoRA after 100 epochs of zero-shot training is more pronounced at lower ranks but narrows as the rank increases. This observation supports Theorem~\ref{theorem_2}, which predicts improved performance parity with higher ranks. Notably, this gap appears before full convergence—after convergence, even lower-rank models achieve accuracy close to higher-rank ones, consistent with our earlier observations.  
\textbf{Scalability}: As shown in Fig.~\ref{fig:scalability}, DeCAF's test accuracy remains largely consistent across different numbers of agents. Convergence occurs later in larger networks because each agent has fewer data samples when \( N \) is larger. These results suggest that DeCAF can be effectively deployed at various scales without significantly sacrificing performance.

\textbf{Limitations.}
While our results demonstrate strong performance, DeCAF warrants further evaluation across a wider range of tasks and application domains to ensure generalizability. Additionally, some observed phenomena—such as initial performance fluctuations or unexpected behaviors in ring topologies under non-IID data distributions—currently lack a theoretical explanation and present valuable directions for future study.

\section{Conclusions and Broader Impact}
\label{conclusions}
In this work, we introduced \textbf{DeCAF} and \textbf{DLoRA-FA}, two decentralized fine-tuning methods for vision-language and large language models that address key challenges in non-IID distributed learning. DeCAF mitigates consensus interference by applying truncated SVD (TSVD) during model merging, while DLoRA-FA improves efficiency by freezing the LoRA $\mathbf{A}$ matrix, reducing both computational and communication overhead. Both methods leverage low-rank adaptation for scalable, privacy-preserving learning and come with theoretical guarantees on convergence and low-rank error bounds. Our extensive experiments on large vision and language models confirm the effectiveness of these approaches across diverse datasets, agent configurations, and network topologies. Beyond academic benchmarks, DeCAF and DLoRA-FA hold strong potential for deployment in high-stakes real-world scenarios—such as healthcare, finance, and other privacy-sensitive domains—where decentralized, efficient, and secure model fine-tuning is essential.

%To address the smoothness condition issue in LoRA, we derive a new constant through establishing two mild assumptions. 
%Additionally, we leverage one mild assumption imposed to provide an approximate theoretical solution for mitigating the model consensus interference issue.  
%This approach demonstrates our models' competitiveness compared to federated learning and their superiority over local baselines. 

%%%%%%%%%%%%%%%%%%%%%%%%%%%%%%%%%%%%%%%%%%%%%%%%%%%%%%%%%%%%
\clearpage
\bibliographystyle{unsrt}
\bibliography{main}
\clearpage

\appendix
\section{Additional Analysis and Results}
\label{sec:additional_results}
This section presents the missing analysis and proof as well as the additional results. We start with the additional analysis to the main contents.
\subsection{Additional Related Work on Federated Learning with LoRA}\label{related_work_fl}
Parallel to decentralized learning, federated learning (FL) presents an alternative paradigm for distributed training with similar privacy and personalization goals but follows a different architectural approach. LoRA has recently been integrated into FL settings, where privacy concerns and decentralized data storage are addressed. One such work is Flora~\cite{hao2024floralowrankadapterssecretly}, which highlights LoRA's inherent ability to function as a gradient compressor, reducing the memory and communication costs of optimization. 
Another significant contribution in this area is the development of Federated LoRA with Sparse Communication (FLASC)~\cite{kuo2024federatedlorasparsecommunication}, which applies sparsity techniques to LoRA during federated learning to reduce communication costs. 
% FLASC allows clients to locally fine-tune the LoRA modules while decreasing the communication burden.
FedMS~\cite{wu2023fedms} introduces a two-stage federated training strategy combining global and local models with sparse adaptation (Sparsely Activated LoRA). Additionally, the Federated Instruction Tuning (FedIT) framework~\cite{zhang2024buildingfederatedgptfederated} demonstrates the effectiveness of federated approaches in instruction-tuning large language models (LLMs). This framework capitalizes on user-generated text data to train models while preserving privacy. OpenFedLLM~\cite{ye2024openfedllmtraininglargelanguage} further expands the federated learning landscape by offering a comprehensive framework for training LLMs on decentralized private data, showcasing superior performance compared to centralized approaches, particularly in privacy-sensitive domains like finance~\cite{liu2023differentially}.

Additionally, to address the issue of model averaging or consensus interference when optimizing low-rank matrices in a FL setting, recent works strategically developed methods freezing one low-rank matrix~\cite{sun2024improving} or only selectively sharing one matrix~\cite{guo2024selective}, leading to the higher communication and computational efficiency, but at the cost of model capabilities. Such a tradeoff has not yet theoretically been analyzed either. 
\subsection{Full Table for Comparison}
\begin{table*}[ht!]
 \caption{Comparison between different methods. $N$: \# of agents, $T$: the number of iterations, $r$: the low rank satisfying $r\ll\textnormal{min}\{d,k\}$, $d$: the input dimension, $k$: the output dimension, Comm.: cost per communication round, $p$: the partial participation percentage in $(0,1]$, $N_b$:  total number of non-zero elements in the mixing matrix $\mathbf{\Pi}$ (defined below), Smoo.Gua.: smoothness guarantee.}
  \label{tab:comparison_full}
  \centering
  \begin{tabular}{cccccc}
    \toprule
    Method & Rate & Low-rank & LLM/VLM & Comm. & Smoo.Gua.\\
    \midrule
    OpenFedLLM~\cite{ye2024openfedllm} & N/A & \xmark & \cmark/\xmark & $\mathcal{O}((pN+1)kd)$ & \xmark\\
    pFedLoRA~\cite{yi2023fedlora} & $\mathcal{O}(\frac{1}{\sqrt{T}})$ & \cmark & \xmark/\cmark & $\mathcal{O}((pN+1)(k+d)r)$ & \xmark\\
    FedIT~\cite{zhang2024towards}&N/A & \xmark & \cmark/\xmark & $\mathcal{O}((pN+1)kd)$ &\xmark\\
    FLoRA~\cite{wang2024flora} & N/A & \cmark & \cmark/\xmark & $\mathcal{O}((pN+1)(k+d)r)$ & \xmark\\
    FFA-LoRA~\cite{sun2024improving} & N/A & \cmark & \cmark/\xmark & $\mathcal{O}((pN+1)(k+d)r)$ & \cmark \\
    \hline
    Mix-of-Show~\cite{gu2024mix} & N/A & \cmark & \xmark/\cmark & $\mathcal{O}((pN+1)(k+d)r)$ & \xmark\\
    PCFT~\cite{wagner2024personalized}&N/A & \cmark & \cmark/\xmark & $\mathcal{O}(N_b(k+d)r)$ & \xmark\\
    Dec-LoRA~\cite{ghiasvand2025decentralized} & $\mathcal{O}(\frac{1}{T^{1/3}})$ & \cmark & \cmark/\xmark & $\mathcal{O}(N_b(k+d)r)$ & \xmark\\
    \hline
    \textbf{DLoRA} & $\mathcal{O}(\frac{1}{\sqrt{T}})$ & \cmark & \cmark/\cmark & $\mathcal{O}(N_b(k+d)r)$ &\cmark\\
    \textbf{DeCAF} & $\mathcal{O}(\frac{1}{\sqrt{T}})$ & \cmark & \cmark/\cmark & $\mathcal{O}(N_b(k+d)r)$ &\cmark\\
    \textbf{DLoRA-FA} & $\mathcal{O}(\frac{1}{\sqrt{T}})$ & \cmark & \cmark/\cmark & $\mathcal{O}(N_bdr)$ & \cmark\\
    % \textbf{DeCAF-FA} & $\mathcal{O}(\frac{1}{\sqrt{T}})$ & \cmark & \cmark/\cmark & $\mathcal{O}(N_bdr)$ & \cmark\\
    % \textsc{DIMAT-Adam} & $\mathcal{O}(\frac{d^{1.5}}{\sqrt{NK}}+\frac{dN}{(1-\sqrt{\rho'})^2K}+\frac{N^{1.5}\sqrt{d}}{K^{1.5}}+\frac{\sqrt{N}d^2}{(1-\sqrt{\rho'})K}+\frac{Nd^{1.5}}{(1-\sqrt{\rho'})K^{1.5}})$ \\
    \bottomrule
  \end{tabular}
\end{table*}
\subsection{Additional Analysis}\label{additional_analysis}
We first present a result regarding matrix product that serves to characterize the consensus analysis. To consider multiple agents, we will have the expanded mixing matrix of the Kronecker product between $\mathbf{\Pi}$ and $\mathbf{I}_n$, $\mathbf{P}=\mathbf{\Pi}\otimes \mathbf{I}_n$, but the magnitudes of eigenvalues of $\mathbf{P}$ remain the same through Theorem~\ref{kronecker_prod} presented in the following and the fact that eigenvalue of $\mathbf{I}_n$ is 1.
\begin{theorem}\cite{schacke2004kronecker}\label{kronecker_prod}
    Let $\mathbf{C}\in\mathbb{R}^{N\times N}$ and $\mathbf{D}\in\mathbb{R}^{n\times n}$, with eigenvalue $\lambda\in s(\mathbf{C})$ with corresponding eigenvector $x\in\mathbb{C}^{N}$, and $\mu\in s(\mathbf{D})$ with corresponding eigenvector $y\in\mathbb{C}^{n}$, where $s(\cdot)$ signifies the spectrum of a matrix. Then $\lambda\mu$ is an eigenvalue of $\mathbf{C}\otimes \mathbf{D}$ with corresponding eigenvector $x\otimes y\in\mathbb{C}^{nN}$. Any eigenvalue of $\mathbf{C}\otimes \mathbf{D}$ arises as such a product of eigenvalues of $\mathbf{C}$ and $\mathbf{D}$.
\end{theorem}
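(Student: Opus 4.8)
The plan is to prove the two assertions separately, both resting on the mixed-product property of the Kronecker product, namely $(\mathbf{M}_1\otimes\mathbf{M}_2)(\mathbf{M}_3\otimes\mathbf{M}_4)=(\mathbf{M}_1\mathbf{M}_3)\otimes(\mathbf{M}_2\mathbf{M}_4)$ whenever the block dimensions are compatible. For the first assertion I would treat the eigenvectors $x$ and $y$ as single-column matrices and compute directly $(\mathbf{C}\otimes\mathbf{D})(x\otimes y)=(\mathbf{C}x)\otimes(\mathbf{D}y)=(\lambda x)\otimes(\mu y)=\lambda\mu\,(x\otimes y)$, where the last step uses the bilinearity of $\otimes$. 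Since $x\neq 0$ and $y\neq 0$ force $x\otimes y\neq 0$, this exhibits $x\otimes y$ as a genuine eigenvector of $\mathbf{C}\otimes\mathbf{D}$ with eigenvalue $\lambda\mu$. This direction is routine.

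For the completeness claim I would invoke Schur triangularization over $\mathbb{C}$: there exist unitary $\mathbf{U}\in\mathbb{C}^{N\times N}$ and $\mathbf{V}\in\mathbb{C}^{n\times n}$ with $\mathbf{U}^*\mathbf{C}\mathbf{U}=\mathbf{T}_C$ and $\mathbf{V}^*\mathbf{D}\mathbf{V}=\mathbf{T}_D$ upper triangular, whose diagonals list the eigenvalues of $\mathbf{C}$ and of $\mathbf{D}$ with algebraic multiplicity. Applying the mixed-product property twice gives $(\mathbf{U}\otimes\mathbf{V})^*(\mathbf{C}\otimes\mathbf{D})(\mathbf{U}\otimes\mathbf{V})=(\mathbf{U}^*\mathbf{C}\mathbf{U})\otimes(\mathbf{V}^*\mathbf{D}\mathbf{V})=\mathbf{T}_C\otimes\mathbf{T}_D$, and the same property shows $\mathbf{U}\otimes\mathbf{V}$ is unitary since $(\mathbf{U}\otimes\mathbf{V})^*(\mathbf{U}\otimes\mathbf{V})=(\mathbf{U}^*\mathbf{U})\otimes(\mathbf{V}^*\mathbf{V})=\mathbf{I}_N\otimes\mathbf{I}_n=\mathbf{I}_{Nn}$. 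Hence $\mathbf{C}\otimes\mathbf{D}$ is unitarily similar to $\mathbf{T}_C\otimes\mathbf{T}_D$.

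It then remains to read off the spectrum of the triangular factor. I would verify that the Kronecker product of two upper-triangular matrices is again upper triangular under the standard block ordering, so that its eigenvalues are exactly its diagonal entries, which are precisely the products of a diagonal entry of $\mathbf{T}_C$ with one of $\mathbf{T}_D$, i.e. all $\lambda_i\mu_j$ as $i,j$ range over the eigenvalue indices. Because unitary similarity preserves the spectrum with multiplicity, the $Nn$ eigenvalues of $\mathbf{C}\otimes\mathbf{D}$ are exactly the collection $\{\lambda_i\mu_j\}$, establishing that every eigenvalue arises as such a product.

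The main obstacle is the completeness direction rather than the eigenvector construction: the forward computation alone produces $Nn$ candidate products but cannot by itself guarantee that these exhaust the spectrum with correct algebraic multiplicities, since distinct pairs may yield coinciding products and, a priori, some eigenvalue could fail to be of product form. The Schur-triangularization argument is precisely what closes this gap, effectively matching the full characteristic polynomial $\det(\mathbf{C}\otimes\mathbf{D}-z\mathbf{I}_{Nn})=\prod_{i,j}(\lambda_i\mu_j-z)$. I would take care that the triangularization is performed over $\mathbb{C}$ even though $\mathbf{C}$ and $\mathbf{D}$ are real, so that all eigenvalues are available, and that the block-triangular structure of $\mathbf{T}_C\otimes\mathbf{T}_D$ is correctly identified.
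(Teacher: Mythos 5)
Your proof is correct. The paper itself offers no proof of this statement---it is quoted verbatim from the cited reference \cite{schacke2004kronecker}---and your two-part argument (the mixed-product identity $(\mathbf{C}\otimes\mathbf{D})(x\otimes y)=(\mathbf{C}x)\otimes(\mathbf{D}y)$ for the eigenvector construction, then Schur triangularization of both factors to show $\mathbf{C}\otimes\mathbf{D}$ is unitarily similar to the upper-triangular matrix $\mathbf{T}_C\otimes\mathbf{T}_D$, whose diagonal exhausts the spectrum with multiplicity) is exactly the standard argument used in that reference, including the correct observation that the completeness direction is the nontrivial half.
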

Theorem~\ref{kronecker_prod} suggests that $\mathbf{P}$ follows Assumption~\ref{assum_5} to have the property of $\textnormal{max}\{|\lambda_2(\mathbf{P})|, |\lambda_{nN}(\mathbf{P})|\}\leq \sqrt{\rho}<1$. 

\subsubsection{Model consensus interference}\label{model_consensus_inter}
We include technical detail for the model consensus interference, which appears in both federated learning and decentralized learning. 

Let us consider a decentralized learning task involving only two clients, with the same size of datasets. If we apply LoRA to clients locally, the global update induced by the FedAvg algorithm at the server would be:
\begin{equation}\label{eq_29}
    \mathbf{W}^+=\mathbf{W}_0+\frac{1}{2}(\mathbf{B}^1+\mathbf{B}^2)\times \frac{1}{2}(\mathbf{A}^1+\mathbf{A}^2),
\end{equation}
which is not exactly the same as what it should be as follows:
\begin{equation}\label{eq_30}
    \hat{\mathbf{W}}=\mathbf{W}_0+\frac{1}{2}(\mathbf{B}^2\mathbf{A}^2+\mathbf{B}^1\mathbf{A}^1).
\end{equation}
To address this issue, LoRA-FA can be leveraged in this context such that
\begin{equation}\label{eq_31}
    \mathbf{W}^+=\mathbf{W}_0+\frac{1}{2}(\mathbf{B}^1+\mathbf{B}^2)\times \mathbf{A}_0,
\end{equation}
which is the same as
\begin{equation}\label{eq_32}
    \hat{\mathbf{W}}=\mathbf{W}_0+\frac{1}{2}(\mathbf{B}^2\mathbf{A}_0+\mathbf{B}^1\mathbf{A}_0).
\end{equation}
$\mathbf{A}_0$ is the frozen low-rank matrix.
It can be observed that if we tune $\mathbf{A}$ matrix alone, the discordance from model consensus is also mitigated. However, \cite{zhu2024asymmetry} has provably shown that tuning $\mathbf{A}$ alone yields a worse generalization bound, compared to tuning $\mathbf{B}$ alone. Additionally, the latter one would require us to strategically select another initialization for $\mathbf{B}$ as it was 0 in LoRA.

\subsubsection{Smoothness guarantee issue}\label{smooth_issue}
Next, we present a formal result to show the smoothness guarantee issue in the vanilla objective $f^i$ when adopting LoRA, $\mathbf{W}(\mathbf{A}, \mathbf{B})=\mathbf{W}_0+\mathbf{B}\mathbf{A}$, without any additional conditions. We use $\mathbf{W}(\mathbf{A}, \mathbf{B})$ to distinguish if the parameter $\mathbf{W}$ has been used with LoRA or not.

\begin{theorem}
    Denote by $f^i(\mathbf{W}^i,\mathcal{D}_i)$ the local loss function given the weight and dataset for agent $i$. For a low-rank decomposition on model parameter such that $\mathbf{W}^i(\mathbf{A}^i, \mathbf{B}^i)=\mathbf{W}_0+\mathbf{B}^i\mathbf{A}^i$. Then, if both $\mathbf{A}^i$ and $\mathbf{B}^i$ are trainable and $f^i(\mathbf{W}^i,\mathcal{D}_i)$ is smooth with modulus $L$, the loss function $f^i(\mathbf{W}^i(\mathbf{A}^i, \mathbf{B}^i))$ has no smoothness guarantees.
\end{theorem}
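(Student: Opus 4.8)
The plan is to exhibit an explicit counterexample rather than attempt a general argument, since the claim is a \emph{non}-existence statement (no smoothness guarantee), and the cleanest way to refute uniform smoothness is to produce a single agent's loss whose gradient with respect to $(\mathbf{A},\mathbf{B})$ has unbounded variation. First I would reduce to the smallest nontrivial setting: take scalar or $1\times 1$ factors, so $\mathbf{A}=a$, $\mathbf{B}=b$ are real numbers and $\mathbf{W}(a,b)=\mathbf{W}_0+ba$. The key observation is that even though $f^i$ is $L$-smooth as a function of the full matrix $\mathbf{W}$, the composition $g(a,b):=f^i(\mathbf{W}_0+ba)$ involves the \emph{bilinear} map $(a,b)\mapsto ba$, whose own gradient grows linearly in $(a,b)$, and smoothness is not preserved under composition with a map that is not itself globally Lipschitz-gradient.

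The central computation I would carry out is the Hessian (or the gradient's Lipschitz behavior) of $g$. Writing $\phi(W)=f^i(W)$ with derivative $\phi'$, chain rule gives $\nabla_a g = b\,\phi'(\mathbf{W}_0+ba)$ and $\nabla_b g = a\,\phi'(\mathbf{W}_0+ba)$. The mixed and pure second derivatives then contain terms such as $\phi'(\cdot)+ab\,\phi''(\cdot)$ and $b^2\phi''(\cdot)$, $a^2\phi''(\cdot)$. The decisive point is the appearance of the factor $\phi'(\mathbf{W}_0+ba)$ \emph{multiplied by nothing that decays}: choosing $\phi$ so that $\phi'$ stays bounded away from zero (e.g. a linear or mildly nonlinear loss with $\phi'(\cdot)\approx$ const) already forces the off-diagonal Hessian entry to be of order $\phi'\neq 0$ at points where $a,b\to\infty$ in a controlled way, and the $b^2\phi''$, $a^2\phi''$ terms blow up whenever $\phi''$ does not vanish. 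I would then select a concrete $\phi$ — for instance a quadratic loss $\phi(W)=\tfrac12\|W-\mathbf{T}\|_F^2$, which is globally $1$-smooth in $W$ — and evaluate the Hessian of $g$ along a ray $a=b=s\to\infty$, showing the spectral norm of $\nabla^2 g$ grows like $s^2$, hence unbounded. This directly contradicts the existence of any finite $\hat L$ with $\|\nabla g(a,b)-\nabla g(a',b')\|\le \hat L\|(a,b)-(a',b')\|$.

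To make the contradiction airtight I would verify two things: that the chosen $\phi$ genuinely satisfies Assumption~\ref{assum_1} (global $L$-smoothness in $\mathbf{W}$, immediate for the quadratic), and that the unbounded Hessian of $g$ indeed precludes global gradient-Lipschitzness (standard: a $C^2$ function is $\hat L$-smooth iff $\|\nabla^2 g\|\le \hat L$ everywhere, so an unbounded Hessian rules it out). Embedding the scalar example into the matrix setting is routine — place the scalar block in one coordinate and pad with zeros — so $\mathbf{A}^i,\mathbf{B}^i$ range over all of $\mathbb{R}^{r\times k},\mathbb{R}^{d\times r}$ and the construction still produces arbitrarily large curvature. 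The main obstacle I anticipate is purely presentational: choosing $\phi$ so that the example is both valid (truly $L$-smooth in $\mathbf{W}$) and transparent (the blow-up visible in one line of algebra), while making clear \emph{why} this does not contradict Lemma~\ref{lemma_1} — the resolution being that Lemma~\ref{lemma_1} buys smoothness only \emph{after} imposing the extra Assumptions~\ref{assum_2} and~\ref{assum_3}, precisely the bounded-singular-value and Lipschitz-value conditions that this counterexample violates by letting $a,b$ (hence the singular values) grow without bound.
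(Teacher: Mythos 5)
Your proposal is correct and follows essentially the same route as the paper: the paper's own proof also takes a quadratic loss $f^i(\mathbf{W})=\tfrac12\|\mathbf{W}\|_F^2$ (globally $1$-smooth in $\mathbf{W}$, with $\mathbf{W}_0=0$) and sends the factors to infinity along the ray $(\mathbf{A}^i_t,\mathbf{B}^i_t)=(t\mathbf{I}_d,t\mathbf{I}_d)$, showing the gradient difference quotient grows like $t^2$, which is the matrix-valued analogue of your scalar Hessian blow-up of order $s^2$. The only cosmetic differences are that you argue via unboundedness of $\nabla^2 g$ (equivalent for $C^2$ functions) rather than via the difference quotient directly, and you add a useful remark, absent from the paper's proof, on why the example does not contradict Lemma~\ref{lemma_1}.
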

\begin{proof}
    For the ease of notation, we introduce the variable $\bm{w}^i:=(\mathbf{A}^i,\mathbf{B}^i)$. To show the conclusion, we establish a counter-example such that the function is not smooth with respect to $\bm{w}^i$. Suppose that $\mathbf{W}^i, \mathbf{A}^i, \mathbf{B}^i\in\mathbb{R}^{d\times d}$ and that $f^i(\mathbf{W}^i)=\frac{1}{2}\|\mathbf{W}^i\|_F^2$ with $\mathbf{W}_0=0$. Therefore, we consider a sequence $\{\bm{w}^i_t\}_{t\in\mathbb{N}}$ such that $\bm{w}^i_t=(\mathbf{A}^i_t,\mathbf{B}^i_t)=(t\mathbf{I}_d,t\mathbf{I}_d)$. Thus, the following relationship holds
    \begin{equation}
    \begin{split}
        &\text{lim}_{t\to\infty}\frac{\|\nabla_{\bm{w}^i}\mathbf{W}^i(\mathbf{A}^i_t,\mathbf{B}^i_t)-\nabla_{\bm{w}^i}\mathbf{W}^i(\mathbf{A}^i_0,\mathbf{B}^i_0)\|_F}{\|\bm{w}^i_t-\bm{w}^i_0\|_F}\\&
        =\text{lim}_{t\to\infty}\frac{\|\nabla_{\mathbf{A}^i}\mathbf{W}^i(\mathbf{A}^i_t,\mathbf{B}^i_t)-\nabla_{\mathbf{A}^i}\mathbf{W}^i(\mathbf{A}^i_0,\mathbf{B}^i_0)\|_F}{\|\mathbf{A}^i_t-\mathbf{A}^i_0\|_F+\|\mathbf{B}^i_t-\mathbf{B}^i_0\|_F}\\&
        +\text{lim}_{t\to\infty}\frac{\|\nabla_{\mathbf{B}^i}\mathbf{W}^i(\mathbf{A}^i_t,\mathbf{B}^i_t)-\nabla_{\mathbf{B}^i}\mathbf{W}^i(\mathbf{A}^i_0,\mathbf{B}^i_0)\|_F}{\|\mathbf{A}^i_t-\mathbf{A}^i_0\|_F+\|\mathbf{B}^i_t-\mathbf{B}^i_0\|_F}\\&
        +\text{lim}_{t\to\infty}\frac{\|t^3\mathbf{I}_d\|_F+\|t^3\mathbf{I}_d\|_F}{\|t\mathbf{I}_d\|_F+\|t\mathbf{I}_d\|_F} =\infty
    \end{split}
    \end{equation}
    Therefore, based on the existing counter-example, we can see that although $f^i(\mathbf{W}^i, \mathcal{D}_i)$ is smooth with modulus 1, the function is not smooth with respect to $\bm{w}^i$. Equivalently, the original loss function is not smooth with the low-rank adapter.
\end{proof}
In the main content, we have proposed one therapy to fix the above issue such that the loss function $f^i$ still maintains the smoothness property, but with a new smoothness constant. The result is summarized in Lemma~\ref{lemma_1}.

\subsubsection{Proof for Lemma~\ref{lemma_1}}\label{proof_lemma_1}
\begin{proof}
        With $\bm{w}=(\mathbf{A}, \mathbf{B})$, and 
        recalling the gradient of $f^i$ on $\mathbf{W}$, $\nabla_\mathbf{W} f^i$, we can obtain the gradient on $\bm{w}$ being $[\nabla_{\mathbf{A}}f^i(\mathbf{W}),\nabla_{\mathbf{B}}f^i(\mathbf{W})]$. Based on the chain rule, we have $\nabla_{\mathbf{A}}f^i(\mathbf{W})=\frac{\eta}{r}\mathbf{B}^\top\nabla_{\mathbf{W}}f^i$ and $\nabla_{\mathbf{B}}f^i(\mathbf{W})=\frac{\eta}{r}\nabla_{\mathbf{W}}f^i\mathbf{A}^\top$. 
    Given this and the fact that $\mathbf{W} = \mathbf{W}_0 + \frac{\eta}{r}\mathbf{B}\mathbf{A}$ and $\mathbf{W}' = \mathbf{W}_0 + \frac{\eta}{r}\mathbf{B}'\mathbf{A}'$, we have the following relationship,
    \begin{equation}
        \begin{split}
            &\frac{\|\nabla_{\bm{w}} f^i(\bm{w},\mathbf{W}_0)-\nabla_{\bm{w}'} f^i(\bm{w}',\mathbf{W}_0)\|_F}{\|\bm{w}-\bm{w}'\|_F}\\&=\frac{\eta}{r}\frac{\|\mathbf{B}^\top\nabla_{\mathbf{W}}f^i-\mathbf{B}'^\top\nabla_{\mathbf{W}'}f^i\|_F+\|\nabla_{\mathbf{W}}f^i\mathbf{A}^\top-\nabla_{\mathbf{W}'}f^i\mathbf{A}'^\top\|_F}{\|\bm{w}-\bm{w}'\|_F}\\&=\frac{\eta}{r} \bigg(\frac{\|\mathbf{B}^\top\nabla_{\mathbf{W}}f^i-\mathbf{B}^\top\nabla_{\mathbf{W}'}f^i+\mathbf{B}^\top\nabla_{\mathbf{W}'}f^i-\mathbf{B}'^\top\nabla_{\mathbf{W}'}f^i\|_F}{\|\bm{w}-\bm{w}'\|_F}\\&+\frac{\|\nabla_{\mathbf{W}}f^i\mathbf{A}^\top-\nabla_{\mathbf{W}'}f^i\mathbf{A}^\top+\nabla_{\mathbf{W}'}f^i\mathbf{A}^\top-\nabla_{\mathbf{W}'}f^i\mathbf{A}'^\top\|_F}{\|\bm{w}-\bm{w}'\|_F}\bigg)
        \end{split}
    \end{equation}
As $\mathbf{B}^\top\nabla_{\mathbf{W}}f^i-\mathbf{B}^\top\nabla_{\mathbf{W}'}f^i+\mathbf{B}^\top\nabla_{\mathbf{W}'}f^i-\mathbf{B}'^\top\nabla_{\mathbf{W}'}f^i=\mathbf{B}^\top(\nabla_{\mathbf{W}}f^i-\nabla_{\mathbf{W}'}f^i)+(\mathbf{B}^\top-\mathbf{B}'^\top)\nabla_{\mathbf{W}'}f^i$
According to Triangle inequality and Cauchy-Schwartz inequality, we can obtain
\begin{equation}
    \begin{split}
        &\|\mathbf{B}^\top\nabla_{\mathbf{W}}f^i-\mathbf{B}^\top\nabla_{\mathbf{W}'}f^i+\mathbf{B}^\top\nabla_{\mathbf{W}'}f^i-\mathbf{B}'^\top\nabla_{\mathbf{W}'}f^i\|_F\\&\leq \|\mathbf{B}^\top(\nabla_{\mathbf{W}}f^i-\nabla_{\mathbf{W}'}f^i)\|_F+\|(\mathbf{B}^\top-\mathbf{B}'^\top)\nabla_{\mathbf{W}'}f^i\|_F\\&\leq\|\mathbf{B}^\top\|_F\|\nabla_{\mathbf{W}}f^i-\nabla_{\mathbf{W}'}f^i\|_F+\|\mathbf{B}^\top-\mathbf{B}'^\top\|_F\|\nabla_{\mathbf{W}'}f^i\|_F.
    \end{split}
\end{equation}
Similarly, we have $\|\nabla_{\mathbf{W}}f^i\mathbf{A}^\top-\nabla_{\mathbf{W}'}f^i\mathbf{A}^\top+\nabla_{\mathbf{W}'}f^i\mathbf{A}^\top-\nabla_{\mathbf{W}'}f^i\mathbf{A}'^\top\|_F\leq\|\nabla_{\mathbf{W}}f^i-\nabla_{\mathbf{W}'}f^i\|_F\|\mathbf{A}^\top\|_F+\|\nabla_{\mathbf{W}'}f^i\|_F\|\mathbf{A}^\top-\mathbf{A}'^\top\|_F$. With these relationships in hand, we have
\begin{equation}
    \begin{split}
        &\frac{\|\nabla_{\bm{w}} f^i(\bm{w},\mathbf{W}_0)-\nabla_{\bm{w}'} f^i(\bm{w}',\mathbf{W}_0)\|_F}{\|\bm{w}-\bm{w}'\|_F}\\&\leq\frac{\eta}{r}\bigg(\frac{\|\nabla_{\mathbf{W}}f^i-\nabla_{\mathbf{W}'}f^i\|_F(\|\mathbf{A}^\top\|_F+\|\mathbf{B}^\top\|_F)}{\|\mathbf{A}-\mathbf{A}'\|_F+\|\mathbf{B}-\mathbf{B}'\|_F}\\&+
        \frac{\|\nabla_{\mathbf{W}'}f^i\|_F(\|\mathbf{A}^\top-\mathbf{A}'^\top\|_F+\|\mathbf{B}^\top-\mathbf{B}'^\top\|_F)}{\|\mathbf{A}-\mathbf{A}'\|_F+\|\mathbf{B}-\mathbf{B}'\|_F}\bigg)
        \\&
        \leq\frac{\eta}{r}\bigg(\frac{L\|\mathbf{W}-\mathbf{W}'\|_F(\|\mathbf{A}^\top\|_F+\|\mathbf{B}^\top\|_F)}{\|\mathbf{A}-\mathbf{A}'\|_F+\|\mathbf{B}-\mathbf{B}'\|_F}+G\bigg)\\&\leq\frac{ 2LC\sqrt{r}\sigma+G}{r},
    \end{split}
\end{equation}
where the second inequality follows from Assumption~\ref{assum_1} and the third inequality follows from Assumption~\ref{assum_2} and Assumption~\ref{assum_3}.
\end{proof}
\begin{remark}
    From Lemma~\ref{lemma_1}, it can be observed that all the losses $f^i$ by using LoRA is \textit{smooth} with a new constant $2LC\sqrt{r}\sigma+G$. Compared to the original smoothness constant $L$, with the low-rank decomposition and tuning both $\mathbf{A}^i$ and $\mathbf{B}^i$ yields a larger smoothness constant if the rank $r$ is upper bound by $4C^2c^2\eta^2$. This intuitively implies that the loss $f^i$ with low-rank decomposition becomes less smooth, but instead the gradient may change faster. This can assist in convergence for the early phase of the optimization, particularly if the gradient is large at a point. Since a larger smoothness constant implies more dramatic changes in gradients, which benefits the gradient decaying. However, less smooth objective function may slow down the convergence as well, especially in the later phase, thus negatively affecting the convergence error. 
\end{remark}
We have now obtained a new smoothness constant for the objective function $f^i$ with respect to LoRA matrices $(\mathbf{A},\mathbf{B})$, which assists in the proof in the main result.

\subsubsection{Proof for Theorem~\ref{dlora-theo}}
With abuse of notation, we use some upper bold characters to represent vectors after they are expanded. 
Define
\[\mathbf{V}_t=[\bm{v}^1_t;\bm{v}^2_t;...;\bm{v}^N_t]^\top\in\mathbb{R}^{nN},\]\[\mathbf{G}_t=[\bm{g}^1_t;\bm{g}^2_t;...;\bm{g}^N_t]^\top\in\mathbb{R}^{nN},\]\[\mathbf{H}_t=[\nabla f^1(\bm{v}^1_t,\bm{\theta}_0);\nabla f^2(\bm{v}^2_t,\bm{\theta}_0);...;\nabla f^N(\bm{v}^N_t,\bm{\theta}_0)]^\top\in\mathbb{R}^{nN},\]\[\mathbf{Q}=\frac{1}{mN}\mathbf{1}\mathbf{1}^\top_{nN}\in\mathbb{R}^{nN\times nN}\]
Without loss of generality, suppose that the initialization of $\mathbf{V}$ is $\mathbf{V}_0$ (which is not 0 due to the non-zero initialization of $\mathbf{A}$) throughout the rest of analysis. For \textsc{DLoRA}, we have
\begin{equation}
    \mathbf{V}_t = \mathbf{P}^{t}\mathbf{V}_0-\alpha\sum_{\tau=1}^{t-1}\mathbf{P}^{t-1-\tau}\mathbf{G}_\tau
\end{equation}
Left multiplying the above equation by $\mathbf{I-Q}$ yields the following relationship
\begin{equation}\label{eq_14}
    (\mathbf{I-Q})\mathbf{V}_t = (\mathbf{I}-\mathbf{Q})\mathbf{P}^{t}\mathbf{V}_0-\alpha\sum_{\tau=1}^{t-1}(\mathbf{I-Q})\mathbf{P}^{t-1-\tau}\mathbf{G}_\tau,
\end{equation}
which will serve to characterize the optimal error bound. By taking the squared Frobenius norm and expectation on both sides, we have
\begin{equation}\label{consensus}
    \mathbb{E}[\|(\mathbf{I-Q})\mathbf{V}_t\|_F^2]\leq 2\mathbb{E}[\|(\mathbf{I}-\mathbf{Q})\mathbf{P}^{t}\mathbf{V}_0\|_F^2]+2\alpha^2\mathbb{E}[\|\sum_{\tau=1}^{t-1}(\mathbf{I-Q})\mathbf{P}^{t-1-\tau}\mathbf{G}_\tau\|_F^2].
\end{equation}
The left side of above equation is equivalent to $\mathbb{E}[\frac{1}{N}\sum_{i=1}^N\|\bm{v}^i_t-\bm{\bar{v}}_t\|^2]$. To further analyze the Eq.~\ref{consensus}, we investigate the second term of its right side in the following.
\begin{equation}\label{eq_16}
\begin{split}
    &\alpha^2\mathbb{E}[\|\sum_{\tau=1}^{t-1}(\mathbf{I-Q})\mathbf{P}^{t-1-\tau}\mathbf{G}_\tau\|_F^2]\leq \\&2\alpha^2\underbrace{\mathbb{E}[\|\sum_{\tau=1}^{t-1}(\mathbf{I-Q})\mathbf{P}^{t-1-\tau}(\mathbf{G}_\tau-\mathbf{H}_\tau)\|_F^2]}_{T_1}\\&
    +2\alpha^2\underbrace{\mathbb{E}[\|\sum_{\tau=1}^{t-1}(\mathbf{I-Q})\mathbf{P}^{t-1-\tau}\mathbf{H}_\tau\|_F^2]}_{T_2},
\end{split}
\end{equation}
which follows by using the basic inequality $\|\mathbf{a}+\mathbf{b}\|^2\leq 2\|\mathbf{a}\|^2+2\|\mathbf{b}\|^2$. We will next study the upper bounds for $T_1$ and $T_2$, respectively. Before that, we state two key lemmas to manipulate $\mathbf{G}_t - \mathbf{H}_t$.

\begin{lemma}\label{lemma_2}
    Let Assumption~\ref{assum_4} hold. Suppose that $\mathbb{E}[\bm{g}^i] = \nabla f^i(\bm{v}^i,\bm{\theta}_0), \forall i\in\mathcal{V}$. Then, we have the following relationship
    \begin{equation}
        \mathbb{E}[\|\frac{1}{N}\sum_{i=1}^N\bm{g}^i\|^2]\leq\frac{1}{N}\zeta^2+\mathbb{E}[\|\frac{1}{N}\sum_{i=1}^N\nabla f^i(\bm{v}^i,\bm{\theta}_0)\|^2].
    \end{equation}
\end{lemma}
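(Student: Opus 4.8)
The plan is to isolate the zero-mean stochastic noise in each local gradient and then exploit cross-agent independence to recover the $1/N$ variance-reduction factor. First I would introduce the per-agent noise term $\bm{\xi}^i := \bm{g}^i - \nabla f^i(\bm{v}^i,\bm{\theta}_0)$, so that by the unbiasedness hypothesis $\mathbb{E}[\bm{\xi}^i]=0$ and, by the variance bound in \cref{assum_4}, $\mathbb{E}[\|\bm{\xi}^i\|^2]\leq \zeta^2$. Writing $\frac{1}{N}\sum_i \bm{g}^i = \frac{1}{N}\sum_i \nabla f^i(\bm{v}^i,\bm{\theta}_0) + \frac{1}{N}\sum_i \bm{\xi}^i$, I would expand the squared norm into three pieces: the deterministic-gradient term (which is exactly the right-most term of the claim), a cross term, and the pure-noise term.

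Next I would take expectations piece by piece. The cross term $\mathbb{E}[\langle \frac{1}{N}\sum_i \nabla f^i, \frac{1}{N}\sum_j \bm{\xi}^j\rangle]$ vanishes: conditioning on the current iterates $\{\bm{v}^i\}$, the gradients $\nabla f^i$ are deterministic while $\mathbb{E}[\bm{\xi}^j \mid \{\bm{v}^i\}]=0$, so the tower property eliminates it. For the remaining noise term I would expand $\mathbb{E}[\|\frac{1}{N}\sum_i \bm{\xi}^i\|^2] = \frac{1}{N^2}\sum_{i,j}\mathbb{E}[\langle \bm{\xi}^i, \bm{\xi}^j\rangle]$ and invoke the fact that the mini-batches $\mathcal{S}_i$ are drawn independently across agents, so that the noises are uncorrelated and $\mathbb{E}[\langle \bm{\xi}^i, \bm{\xi}^j\rangle]=0$ for $i\neq j$. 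Only the diagonal survives, and $\frac{1}{N^2}\sum_i \mathbb{E}[\|\bm{\xi}^i\|^2] \leq \frac{1}{N^2}\cdot N\zeta^2 = \frac{\zeta^2}{N}$, which combines with the deterministic term to give exactly the claimed inequality.

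The main obstacle is the cross-agent independence: obtaining the $\zeta^2/N$ factor rather than a crude $\zeta^2$ hinges on the off-diagonal covariances vanishing, whereas \cref{assum_4} only controls the marginal variances. I would therefore make explicit that each $\bm{g}^i$ is computed from an independently sampled local mini-batch $\mathcal{S}_i\subset\mathcal{D}_i$, so the noise terms $\bm{\xi}^i$ are mutually independent once conditioned on the iterates; this is the implicit sampling model of decentralized SGD and is precisely what converts the naive bound into the variance-reduced one. Care is also needed with the conditioning order — one should first take the conditional expectation given $\{\bm{v}^i_t\}$ and then apply the tower property — so that both the cross term and the off-diagonal covariances are handled rigorously rather than heuristically.
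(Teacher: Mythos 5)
Your proof is correct and is essentially the paper's own argument: the paper skips the proof by deferring to Lemma~1 of Yu et al.\ (cited as \texttt{yu2019linear}), which is exactly this bias--variance decomposition with vanishing cross terms and cross-agent independence yielding the $\zeta^2/N$ factor. Your explicit attention to the conditioning order and the independent mini-batch sampling model fills in precisely the details that the cited reference supplies.
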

The proof for the Lemma~\ref{lemma_1} follows similarly Lemma 1 in~\cite{yu2019linear} and we skip it in this context.
We are now able to bound $T_1$ and $T_2$. By following the similar proof techniques and adapting the analysis in~\cite{yu2019linear}, the following bounds are obtained accordingly.
\begin{equation}\label{eq_21}
    T_1\leq \frac{N\zeta^2}{1-\rho}
\end{equation}
\begin{equation}\label{eq_22}
\begin{split}
    &T_2\leq \frac{1}{1-\sqrt{\rho}}[8\hat{L}^2\sum^t_{\tau=1}\rho^{(t-\tau)/2}\mathbb{E}[\|\mathbf{(I-Q)\mathbf{V}_\tau}\|^2]\\&
    +4N\sum_{\tau=1}^t\rho^{(t-\tau)/2}\mathbb{E}[\|\frac{1}{N}\sum_{i=1}^N\nabla f^i(\bm{v}^i_\tau,\bm{\theta}_0)\|^2]]\\&+\frac{4N\kappa^2}{1-\sqrt{\rho}}
\end{split}
\end{equation}
Based on the upper bounds for $T_1$ and $T_2$, we can obtain the upper bound for $\frac{1}{N}\sum_{i=1}^N\mathbb{E}[\|\bm{v}^i_k-\bm{\bar{v}}_t\|^2]$.
\begin{lemma}\label{lemma_3}
    Let Assumptions~\ref{assum_4} and~\ref{assum_5} hold. For $\bm{v}_t^i$ defined by Algorithm~\ref{alg:dlora},
    if step size $\alpha\leq \frac{1-\sqrt{\rho}}{4\sqrt{2}\hat{L}}$, then $\forall T\geq 1$, the following relationship holds true
    \begin{equation}
    \begin{split}
        &\sum_{t=1}^T\frac{1}{N}\sum_{i=1}^N\mathbb{E}[\|\bm{v}^i_t-\bm{\bar{v}}_t\|^2]\leq \sum_{t=1}^T\rho^{t}\|(\mathbf{I-Q})\mathbf{V}_0\|_F^2+\frac{4T\alpha^2\zeta^2}{1-\rho}\\&+\frac{16\alpha^2}{(1-\sqrt{\rho})^2}\sum^T_{t=1}\mathbb{E}[\|\frac{1}{N}\sum_{i=1}^N\nabla f^i(\bm{v}^i_t,\bm{\theta}_0)\|^2]\\&+\frac{16T\alpha^2\kappa^2}{(1-\sqrt{\rho})^2},
    \end{split}
    \end{equation}
where $\bar{\bm{v}}_t=\frac{1}{N}\sum_{i=1}^N\bm{v}^i_t$.
\end{lemma}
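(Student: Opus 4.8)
The plan is to treat the statement as a cumulative bound on the network (consensus) error and to resolve the self-referential inequality it hides. Write $\Omega_t := \mathbb{E}[\|(\mathbf{I}-\mathbf{Q})\mathbf{V}_t\|_F^2]$, which the text already identifies with $\frac{1}{N}\sum_{i=1}^N\mathbb{E}[\|\bm{v}_t^i-\bar{\bm{v}}_t\|^2]$, so the quantity to be bounded is exactly $\sum_{t=1}^T\Omega_t$. Starting from the per-iterate bound \eqref{consensus} obtained by left-multiplying the iterate expansion by $\mathbf{I}-\mathbf{Q}$, I would first control the initialization term. Since $\mathbf{P}$ is doubly stochastic with $\mathbf{P}\mathbf{1}=\mathbf{1}$, we have $\mathbf{Q}\mathbf{P}=\mathbf{P}\mathbf{Q}=\mathbf{Q}$, hence $(\mathbf{I}-\mathbf{Q})\mathbf{P}^t=(\mathbf{P}^t-\mathbf{Q})(\mathbf{I}-\mathbf{Q})$, and Assumption~\ref{assum_5} (via Theorem~\ref{kronecker_prod} for the Kronecker-lifted $\mathbf{P}$) gives the operator-norm decay $\|\mathbf{P}^t-\mathbf{Q}\|_2\le \rho^{t/2}$. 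This yields $\mathbb{E}[\|(\mathbf{I}-\mathbf{Q})\mathbf{P}^t\mathbf{V}_0\|_F^2]\le \rho^t\|(\mathbf{I}-\mathbf{Q})\mathbf{V}_0\|_F^2$, which after summing over $t$ produces the $\sum_t\rho^t\|(\mathbf{I}-\mathbf{Q})\mathbf{V}_0\|_F^2$ term in the claim.

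Next I would substitute the established bounds on $T_1$ from \eqref{eq_21} (using the variance bound of Assumption~\ref{assum_4}a) and on $T_2$ from \eqref{eq_22} (using the diversity bound of Assumption~\ref{assum_4}b together with the smoothness constant $\hat{L}$ from Lemma~\ref{lemma_1}) into the per-iterate inequality. The $T_1$ contribution is a clean constant of order $\alpha^2\zeta^2/(1-\rho)$ per step, which sums directly to the $\frac{4T\alpha^2\zeta^2}{1-\rho}$ term. The crux is the $T_2$ contribution, which is not constant: it carries the two convolution kernels $\sum_{\tau\le t}\rho^{(t-\tau)/2}\Omega_\tau$ and $\sum_{\tau\le t}\rho^{(t-\tau)/2}\Phi_\tau$, where $\Phi_\tau:=\mathbb{E}[\|\frac{1}{N}\sum_i\nabla f^i(\bm{v}_\tau^i,\bm{\theta}_0)\|^2]$, plus a residual $\kappa^2$ constant.

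The key manipulation is to sum over $t=1,\dots,T$ and swap the order of summation in each kernel: for a fixed $\tau$, $\sum_{t=\tau}^{T}\rho^{(t-\tau)/2}\le \frac{1}{1-\sqrt{\rho}}$ as a geometric series. This collapses both convolutions, converting the $\Omega$-kernel into a term of the form $\frac{C\alpha^2\hat{L}^2}{(1-\sqrt{\rho})^2}\sum_{\tau}\Omega_\tau$, the $\Phi$-kernel into the $\frac{16\alpha^2}{(1-\sqrt{\rho})^2}\sum_t\Phi_t$ term of the claim, and the residual into $\frac{16T\alpha^2\kappa^2}{(1-\sqrt{\rho})^2}$. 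All factor-$2$ blow-ups from the repeated use of $\|a+b\|^2\le 2\|a\|^2+2\|b\|^2$ are absorbed into these constants.

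The final, and main, obstacle is that the inequality is now self-referential: $\sum_t\Omega_t$ appears on both sides. I would resolve this by invoking the step-size condition $\alpha\le \frac{1-\sqrt{\rho}}{4\sqrt{2}\hat{L}}$, which is calibrated precisely so that the coefficient $\frac{C\alpha^2\hat{L}^2}{(1-\sqrt{\rho})^2}$ of the self-reference is at most $\tfrac12$; subtracting that term from the left side and rescaling leaves exactly the stated bound. The one delicate point to watch is keeping the geometric-series index bookkeeping exact, so that the absorption coefficient genuinely lands below one rather than at the boundary — this is where the explicit constant $4\sqrt{2}$ in the step-size bound, rather than a looser value, is what makes the subtraction valid.
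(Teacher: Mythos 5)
Your proposal is correct and takes essentially the same route as the paper's own proof: the paper likewise substitutes the $T_1$ and $T_2$ bounds into the per-iterate consensus inequality, sums over $t\in\{1,\dots,T\}$, swaps the order of summation in the geometric convolution kernels, and absorbs the self-referential consensus term via the step-size condition, deferring the remaining bookkeeping to \cite{yu2019linear}. One caveat: with the paper's stated constants the absorption coefficient works out to $32\alpha^2\hat{L}^2/(1-\sqrt{\rho})^2\le 1$ at the stated step size (not $\le 1/2$ as you claim), so the subtract-and-rescale step does not land strictly below one and does not cleanly reproduce the lemma's exact constants --- but this looseness is equally present in the paper's own sketch, which does not track these constants either.
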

Combining Eqs~\ref{eq_16},~\ref{eq_21}, and~\ref{eq_22} and summing $t$ over $\{1,2,...,T\}$, and following the proof techniques from~\cite{yu2019linear} can complete the proof for Lemma~\ref{lemma_3}.
With Lemma~\ref{lemma_3} in hand, we are ready to give the proof of Theorem~\ref{dlora-theo} in the following.
\begin{proof}
    We start with the descent inequality given by the smoothness of $f$ such that
    \begin{equation}\label{eq_24}
    \begin{split}
        &\mathbb{E}[f(\bar{\bm{v}}_{t+1},\bm{\theta}_0)]\leq \mathbb{E}[f(\bar{\bm{v}}_t,\bm{\theta}_0)] + \mathbb{E}[\langle\nabla f(\bar{\bm{v}}_t,\bm{\theta}_0),\bar{\bm{v}}_{t+1}-\bar{\bm{v}}_t\rangle]\\&+\frac{\hat{L}}{2}\mathbb{E}[\|\bar{\bm{v}}_{t+1}-\bar{\bm{v}}_t\|^2]
    \end{split}
    \end{equation}
This is due to the fact that for all $i$, $f^i$ is $\hat{L}$-smooth.
We first process the second term on the right side of above inequality. Replacing $\bar{\bm{v}}_{t+1}-\bar{\bm{v}}_t$ with $-\alpha\frac{1}{N}\sum_{i=1}^N\bm{g}^i(\bm{v}^i_t,\bm{\theta}_0)$ allows us to study $-\alpha\mathbb{E}[\langle\nabla f(\bar{\bm{v}}_k,\bm{\theta}_0),\frac{1}{N}\sum_{i=1}^N\bm{g}^i(\bm{v}^i_t,\bm{\theta}_0)\rangle]$.
Thus, we have
\begin{equation}
\begin{split}
&\langle\nabla f(\bar{\bm{v}}_t,\bm{\theta}_0),\frac{1}{N}\sum_{i=1}^N\bm{g}^i(\bm{v}^i_t,\bm{\theta}_0)\rangle=\frac{1}{2}(\|\nabla f(\bar{\bm{v}}_t,\bm{\theta}_0)\|^2+\\&\|\frac{1}{N}\sum_{i=1}^N\nabla f^i(\bm{v}^i_t,\bm{\theta}_0)\|^2-\|\nabla f(\bar{\bm{v}}_t,\bm{\theta}_0)-\frac{1}{N}\sum_{i=1}^N\nabla f^i(\bm{v}_t^i,\bm{\theta}_0)\|^2)\\&\geq \frac{1}{2}(\|\nabla f(\bar{\bm{v}}_t,\bm{\theta}_0)\|^2+\|\frac{1}{N}\sum_{i=1}^N\nabla f^i(\bm{v}^i_t,\bm{\theta}_0)\|^2\\&-\hat{L}^2\frac{1}{N}\sum_{i=1}^N\|\bar{\bm{v}}_t-\bm{v}^i_t\|^2).
\end{split}
\end{equation}
The last inequality follows from the smoothness assumption. Therefore, we have
\begin{equation}
\begin{split}
&\mathbb{E}[\langle\nabla f(\bar{\bm{v}}_t,\bm{\theta}_0),\bar{\bm{v}}_{t+1}-\bar{\bm{v}}_t\rangle]\leq-\frac{\alpha}{2}\mathbb{E}[\|\nabla f(\bar{\bm{v}}_t,\bm{\theta}_0)\|^2\\&+\|\frac{1}{N}\sum_{i=1}^N\nabla f^i(\bm{v}^i_t,\bm{\theta}_0)\|^2]+\frac{\alpha \hat{L}^2}{2}\frac{1}{N}\sum_{i=1}^N\mathbb{E}[\|\bar{\bm{v}}_t-\bm{v}^i_t\|^2].
\end{split}
\end{equation}
With Eq.~\ref{eq_24}, the following relationship can be obtained.
\begin{equation}
\begin{split}
&\mathbb{E}[f(\bar{\bm{v}}_{t+1},\bm{\theta}_0)]\leq \mathbb{E}[f(\bar{\bm{v}}_{t},\bm{\theta}_0)]-\frac{\alpha}{2}\mathbb{E}[\|\nabla f(\bar{\bm{v}}_t,\bm{\theta}_0)\|^2\\&+\|\frac{1}{N}\sum_{i=1}^N\nabla f^i(\bm{v}^i_t,\bm{\theta}_0)\|^2]+\frac{\alpha \hat{L}^2}{2}\frac{1}{N}\sum_{i=1}^N\mathbb{E}[\|\bar{\bm{v}}_t-\bm{v}^i_t\|^2]\\&+\frac{\hat{L}}{2}\mathbb{E}[\|\frac{1}{N}\sum_{i=1}^N\bm{g}^i_t\|^2].
\end{split}
\end{equation}
The last inequality holds due to 
% the approximate equivalence between $\frac{1}{N}\sum_{i=1}^N\sum_{j\in Nb(i)}\pi_{ij}\mathbf{P}^{ij}_k\mathbf{x}^j_k$ and $\bar{\mathbf{x}}_k$ such that 
$\bar{\bm{v}}_{t+1}-\bar{\bm{v}}_t=\frac{1}{N}\sum_{i=1}^N\bm{g}^i_t$.
With Lemma~\ref{lemma_1} and some mathematical manipulations, we have 
\begin{equation}
    \begin{split}
&\mathbb{E}[f(\bar{\bm{v}}_{t+1},\bm{\theta}_0)]\leq\mathbb{E}[f(\bar{\bm{v}}_t,\bm{\theta}_0)]-\frac{\alpha}{2}\mathbb{E}[\|\nabla f(\bar{\bm{v}}_t,\bm{\theta}_0)\|^2\\&+\|\frac{1}{N}\sum_{i=1}^N\nabla f^i(\bm{v}^i_t,\bm{\theta}_0)\|^2]+\frac{\alpha \hat{L}^2}{2}\frac{1}{N}\sum_{i=1}^N\mathbb{E}[\|\bar{\bm{v}}_t-\bm{v}^i_t\|^2]\\&+\frac{\hat{L}\alpha^2}{2}(\frac{\zeta^2}{N}+\mathbb{E}[\frac{1}{N}\sum_{i=1}^N\|\nabla f^i(\bm{v}^i_t,\bm{\theta}_0)\|^2])\\&=\mathbb{E}[f(\bar{\bm{v}}_t,\bm{\theta}_0)]-\frac{\alpha}{2}\mathbb{E}[\|\nabla f(\bar{\bm{v}}_t,\bm{\theta}_0)\|^2]-\\&(\frac{\alpha}{2}-\frac{\hat{L}\alpha^2}{2})\mathbb{E}[\|\frac{1}{N}\sum_{i=1}^N\nabla f^i(\bm{v}^i_t,\bm{\theta}_0)\|^2]\\& + \frac{\alpha \hat{L}^2}{2N}\sum_{i=1}^N\mathbb{E}[\|\bar{\bm{v}}_t-\bm{v}^i_t\|^2]+\frac{\hat{L}\alpha^2\zeta^2}{2N},
    \end{split}
\end{equation}
which implies the following inequality
\begin{equation}\label{eq_29}
\begin{split}
&\mathbb{E}[\|\nabla f(\bar{\bm{v}}_t,\bm{\theta}_0)\|^2]\leq\frac{2}{\alpha}(\mathbb{E}[f(\bar{\bm{v}}_t,\bm{\theta}_0)]-\mathbb{E}[f(\bar{\bm{v}}_{t+1},\bm{\theta}_0)])\\&-(1-\hat{L}\alpha)\mathbb{E}[\|\frac{1}{N}\sum_{i=1}^N\nabla f^i(\bm{v}^i_t,\bm{\theta}_0)\|^2]+\frac{\hat{L}^2}{N}\mathbb{E}[\|\bar{\bm{v}}_t-\bm{v}^i_t\|^2]\\&+\frac{\hat{L}\alpha\zeta^2}{N}.
\end{split}
\end{equation}
The above relationship is obtained by dividing $\alpha/2$ on both sides.
Summing $t$ over $\{1,2,...,T\}$ yields
\begin{equation}
\begin{split}
    &\sum_{t=1}^T\mathbb{E}[\|\nabla f(\bar{\bm{v}}_t,\bm{\theta}_0)\|^2]\leq \frac{2}{\alpha}(f(\bar{\bm{v}}_0,\bm{\theta}_0)-\mathbb{E}[f(\bar{\bm{v}}_T,\bm{\theta}_0)])\\&-(1-\hat{L}\alpha)\sum_{t=1}^T\mathbb{E}[\|\frac{1}{N}\sum_{i=1}^N\nabla f^i(\bm{v}^i_t,\bm{\theta}_0)\|^2]\\&+\hat{L}^2(\sum_{t=1}^T\rho^{t}\|(\mathbf{I-Q})\mathbf{V}_0\|_F^2+\frac{4T\alpha^2\sigma^2}{1-\rho}+\frac{16\alpha^2}{(1-\sqrt{\rho})^2}\sum_{t=1}^T\mathbb{E}[\|\frac{1}{N}\sum_{i=1}^N\nabla f^i(\bm{v}^i_t,\bm{\theta}_0)\|^2]\\&+\frac{16T\alpha^2\kappa^2}{(1-\sqrt{\rho})^2})+\frac{\hat{L}\alpha\zeta^2T}{N}\\&\leq \frac{2}{\alpha}(f(\bar{\bm{v}}_0,\bm{\theta}_0)-f^*)-(1-\hat{L}\alpha-\frac{16\hat{L}^2\alpha^2}{(1-\sqrt{\rho})^2})\sum_{t=1}^T\mathbb{E}[\|\frac{1}{N}\sum_{i=1}^N\nabla f^i(\bm{v}^i_t,\bm{\theta}_0)\|^2]\\&
    +\frac{4T\hat{L}^2\alpha^2\zeta^2}{1-\rho}+\frac{16T\hat{L}^2\alpha^2\kappa^2}{(1-\sqrt{\rho})^2}+\frac{\hat{L}T\alpha\zeta^2}{N} +\frac{\hat{L}^2\sum_{i=1}^N\|\bm{v}_0^i\|^2}{TN(1-\rho)}
\end{split}
\end{equation}
The last inequality is attained by substituting the conclusion from Lemma~\ref{lemma_3} into Eq.~\ref{eq_29}. 
Due to the condition for the step size $\alpha$, we know that $1-\hat{L}\alpha-\frac{16\hat{L}^2\alpha^2}{(1-\sqrt{\rho})^2}\geq0$, which would simplify the right side in the last inequality. As $\|(\mathbf{I-Q})\mathbf{V}_0\|_F^2= \frac{1}{N}\sum_{i=1}^N\|\bm{v}_0^i\|^2$,  hence,
\begin{equation}
\begin{split}
&\sum_{t=1}^T\mathbb{E}[\|\nabla f(\bar{\bm{v}}_t,\bm{\theta}_0)\|^2]\leq \frac{2}{\alpha}(f(\bar{\bm{v}}_0,\bm{\theta}_0)-f^*)+\frac{4T\hat{L}^2\alpha^2\zeta^2}{1-\rho}\\&+\frac{16T\hat{L}^2\alpha^2\kappa^2}{(1-\sqrt{\rho})^2}+\frac{T\hat{L}\alpha\zeta^2}{N}+\frac{\hat{L}^2\sum_{i=1}^N\|\bm{v}_0^i\|^2}{N(1-\rho)}
\end{split}
\end{equation}
The desirable result is obtained by dividing $T$ on both sides.
\end{proof}

\subsubsection{Proof for Corollary~\ref{corollary_1}}
\begin{proof}
    Substituting $\alpha\lesssim\sqrt{\frac{N}{T}
    }$ and $\hat{L} = \frac{\eta(2LC\sqrt{r}c+G)}{r}$ into the conclusion of Theorem~\ref{dlora-theo} yields the conclusion.
\end{proof}
\noindent\textbf{Decentralized vs. Local.} For DLoRA, once $\bm{v}^i_*$ is obtained, each agent has the \textit{personalized} optimal model parameters as $\mathbf{W}_*^i=\mathbf{W}_0+\mathbf{B}^i_*\mathbf{A}^i_*$. $\mathbf{B}^i_*\mathbf{A}^i_*$ has been optimized by communication and computation, and the neighborhood communication equips each agent with extra knowledge and abilities learned from other agents, enabling the collaborative learning to reduce the negative impact of data distribution diversity. Instead, one can perform local LoRA for all $N$ agents, and in this scenario there is no communication to ensure in-time consensus among different agents, inevitably resulting in a poor solution. Thereby, DLoRA strategically outperforms the local fine-tuning strategy, which will be validated by the empirical evidence in the next section.

\subsubsection{Analysis for DLoRA-FA}\label{analysis_dlora_fa}
In this subsection, we present the detailed analysis for the DLoRA with freezing $\mathbf{A}$ matrix, as shown in Figure~\ref{fig:dlora_fa_diagram}. We start with an algorithm framework for DLoRA-FA as follows. Note that now $\bm{w}$ is only $\mathbf{B}$.

\begin{figure*}
    \centering
    \includegraphics[width=\linewidth]{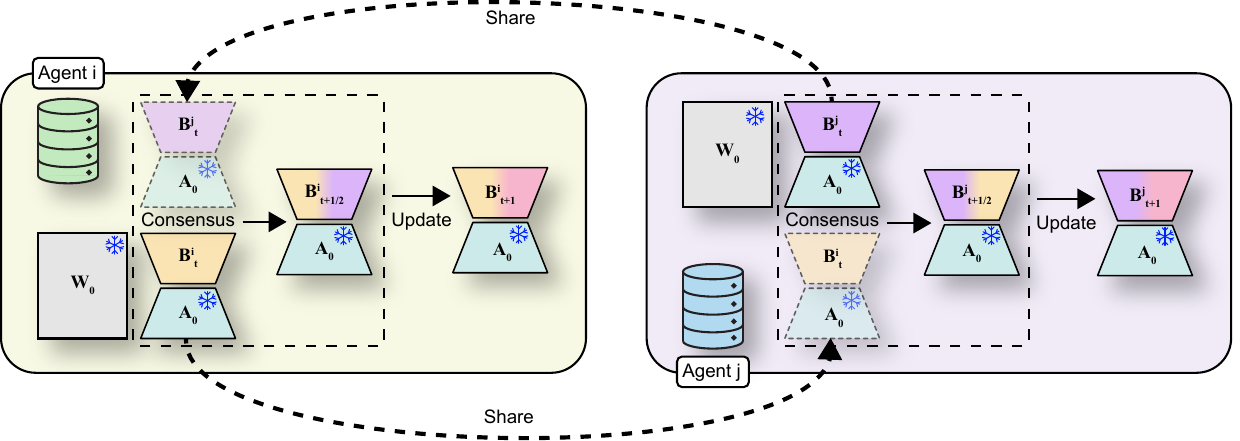}
    \caption{Illustration of DLoRA-FA by using two agents: agents $i$ and $j$ share their low-rank matrices (only $\mathbf{B}$) separately during the communication step, $t$, and conduct consensus on the received matrices from neighbors with its own. Once the consensus is done, each agent use the locally owned data to update the low-rank matrix $\mathbf{B}$. During this process, the pre-trained weights $\mathbf{W}_0$ and one low-rank matrix $\mathbf{A}_0$ are frozen. In DLoRA-FA, there is no matrix factorization as in DeCAF, which has been shown in Theorem~\ref{model_interference_theo}.}
    \label{fig:dlora_fa_diagram}
\end{figure*}

\begin{algorithm}
  \caption{\textsc{DLoRA-FA}}
  \label{alg:dlora_fa}
  \SetKwInOut{Input}{Input}
  \SetKwInOut{Output}{Output}
  \Input{mixing matrix $\mathbf{\Pi}=[\pi_{ij}]_{i,j\in N}, \pi_{ij}\in[0,1]$, the \# of iterations $T$, initialization of low rank adapters $\mathbf{B}_1^i, \forall i\in\mathcal{V}$, step size $\alpha$, frozen parameters $\mathbf{W}_0, \mathbf{A}_0$, $\mathcal{S}_i, i\in\mathcal{V}$, communication frequency $\tau$}
  \Output{$\{\mathbf{B}^i_T\}_{i=1}^N$}  
  \BlankLine
  \For{ $t$ in $1:T$ }
  { 
    \For{each agent $i\in\mathcal{V}$}
    { 
    Calculate the stochastic gradients $\bm{g}^{i,\mathbf{\mathbf{B}}}_t=\frac{1}{|\mathcal{S}_i|}\sum_{s\in\mathcal{S}_i}\nabla_\mathbf{\mathbf{B}} f^i_s(\bm{w}^i_t,\mathbf{A}_0,\mathbf{W}_0)$\;
    \eIf{$t$ mod $\tau$=0}
    {     Broadcast the low-rank matrices $\mathbf{B}^i_t$ to and receive $\mathbf{B}^j_t$ from all nodes in $Nb(i)$\;
    $\mathbf{B}^i_{t+1/2}=\sum_{j\in Nb(i)}\pi_{ij}\mathbf{B}^j_{t}$\;
    }{$\mathbf{B}^i_{t+1/2}=\mathbf{B}^i_{t}$\;}
    $\mathbf{B}^i_{t+1}=\mathbf{B}^i_{t+1/2}-\alpha \bm{g}^{i,\mathbf{B}}_t$\;    }
  }
\end{algorithm}
To characterize the analysis, we derive a new smoothness constant due to freezing $\mathbf{A}$ matrix. The following lemma states the upper bound of the Frobenius norm of $\mathbf{A}_0$ matrix.
\begin{lemma}\label{lemma_4}
    Let $\mathbf{A}_0$ be a Gaussian random matrix, where each entry is sampled independently from a normal distribution $\mathcal{N}(0,1)$. Then $\|\mathbf{A}_0\|_F\leq r+\sqrt{kr}$. 
\end{lemma}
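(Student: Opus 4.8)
The plan is to reduce the Frobenius norm to the largest singular value and then import a standard non-asymptotic estimate for the spectral norm of a Gaussian ensemble. Since $\mathbf{A}_0\in\mathbb{R}^{r\times k}$ with $r\ll\min\{d,k\}\leq k$, the matrix has at most $\min\{r,k\}=r$ nonzero singular values $\sigma_1(\mathbf{A}_0)\geq\cdots\geq\sigma_r(\mathbf{A}_0)\geq 0$. Expressing the Frobenius norm through these singular values, exactly as in the computation used to fix the constant $C$ after Assumption~\ref{assum_3}, gives
\[
\|\mathbf{A}_0\|_F=\sqrt{\sum_{l=1}^{r}\sigma_l^2(\mathbf{A}_0)}\leq\sqrt{r}\,\sigma_1(\mathbf{A}_0).
\]
Thus the entire problem reduces to controlling $\sigma_1(\mathbf{A}_0)=\|\mathbf{A}_0\|_2$, the operator norm of a matrix with i.i.d.\ $\mathcal{N}(0,1)$ entries.

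Next I would invoke the classical bound on the largest singular value of a Gaussian matrix: for an $r\times k$ matrix whose entries are i.i.d.\ standard normal, $\sigma_1(\mathbf{A}_0)\leq\sqrt{r}+\sqrt{k}$ (the Davidson--Szarek / Gordon estimate in expectation, with a high-probability version following from Gaussian concentration, since $\mathbf{A}\mapsto\|\mathbf{A}\|_2$ is $1$-Lipschitz in the entries). Substituting into the previous display yields
\[
\|\mathbf{A}_0\|_F\leq\sqrt{r}\left(\sqrt{r}+\sqrt{k}\right)=r+\sqrt{kr},
\]
which is precisely the claimed bound, and closes the argument.

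The main obstacle is the second step: the inequality $\sigma_1(\mathbf{A}_0)\leq\sqrt{r}+\sqrt{k}$ is not elementary and must be imported as a known result on extreme singular values of Gaussian ensembles rather than derived in passing. If a self-contained derivation is preferred, the cleanest route is an $\varepsilon$-net over the unit spheres in $\mathbb{R}^{r}$ and $\mathbb{R}^{k}$ combined with a union bound and a Gaussian tail estimate for $\sup_{\|x\|=\|y\|=1}x^\top\mathbf{A}_0\,y$; the delicate point there is tracking the net cardinality so that the constants still collapse to the clean $\sqrt{r}+\sqrt{k}$ form. One should also make explicit whether the bound is intended in expectation or with high probability, since for a single random draw the deterministic inequality can hold only up to an exponentially small failure probability.
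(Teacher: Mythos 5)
Your proposal follows essentially the same route as the paper's own proof: reduce $\|\mathbf{A}_0\|_F$ to $\sqrt{r}\,\sigma_1(\mathbf{A}_0)$ via the singular-value expansion, then invoke the Gordon / Davidson--Szarek bound $\sigma_1(\mathbf{A}_0)\leq\sqrt{r}+\sqrt{k}$ and multiply out. Your closing caveat is well taken --- that bound holds in expectation (or with high probability after Gaussian concentration), not deterministically for a single draw --- and this imprecision is present in the paper's proof as well, so your version is, if anything, slightly more careful about the statement's probabilistic status.
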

\begin{proof}
    Since $\|\mathbf{A}_0\|_F=\sqrt{\sum_{l}\sigma_l(\mathbf{A}_0)^2}\leq\sqrt{r}\sigma_1$ as $r\ll\textnormal{min}\{d,k\}$, where $\sigma_l(\mathbf{A}_0)$ indicates the $i$-th singular value of $\mathbf{A}_0$. Based on Gordon's Theorem for Gaussian matrices~\cite{davidson2001local}, we have $\sigma_1(\mathbf{A}_0)\leq \sqrt{r}+\sqrt{k}$, which enables us to get the desirable result.
\end{proof}
In the next, we show the smoothness guarantee for LoRA-FA.
\begin{lemma}\label{lemma_5}
    Let Assumption~\ref{assum_1} hold. Suppose that $\mathbf{W}$ satisfies LoRA-FA, i.e., $\mathbf{W}=\mathbf{W}_0+\frac{\eta}{r}\mathbf{B}\mathbf{A}_0$. Then, we have the following relationship:
    \begin{equation}
    \begin{split}
        &\|\nabla f(\mathbf{B}^1,\mathbf{A}_0,\mathbf{W}_0)-\nabla f(\mathbf{B}^2,\mathbf{A}_0,\mathbf{W}_0)\|_F\\&\leq \frac{\eta L}{r}\|\mathbf{A}_0\|_F^2\|\mathbf{B}^1-\mathbf{B}^2\|_F,
    \end{split}
    \end{equation}
for any given $\mathbf{B}^1$ and $\mathbf{B}^2$.
\end{lemma}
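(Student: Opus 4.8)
The plan is to follow the same chain-rule argument used in the proof of Lemma~\ref{lemma_1}, but specialized to the LoRA-FA setting, which is actually \emph{simpler} because only $\mathbf{B}$ varies while $\mathbf{A}_0$ stays frozen. First I would record the gradient of the composite loss with respect to the trainable factor. Writing $\mathbf{W}(\mathbf{B})=\mathbf{W}_0+\frac{\eta}{r}\mathbf{B}\mathbf{A}_0$ and applying the chain rule exactly as in Lemma~\ref{lemma_1}, we get $\nabla_{\mathbf{B}}f(\mathbf{B},\mathbf{A}_0,\mathbf{W}_0)=\frac{\eta}{r}\big(\nabla_{\mathbf{W}}f\big)\mathbf{A}_0^\top$. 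For the two inputs $\mathbf{B}^1,\mathbf{B}^2$ I then introduce the associated full weights $\mathbf{W}^1=\mathbf{W}_0+\frac{\eta}{r}\mathbf{B}^1\mathbf{A}_0$ and $\mathbf{W}^2=\mathbf{W}_0+\frac{\eta}{r}\mathbf{B}^2\mathbf{A}_0$, so that the gradient difference factors as $\frac{\eta}{r}\big(\nabla_{\mathbf{W}}f(\mathbf{W}^1)-\nabla_{\mathbf{W}}f(\mathbf{W}^2)\big)\mathbf{A}_0^\top$.

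The core estimate is then two successive applications of sub-multiplicativity of the Frobenius norm, $\|\mathbf{X}\mathbf{Y}\|_F\le\|\mathbf{X}\|_F\|\mathbf{Y}\|_F$ (Cauchy--Schwarz), interleaved with Assumption~\ref{assum_1}. Peeling off the frozen $\mathbf{A}_0^\top$ gives a factor $\|\mathbf{A}_0\|_F$ and leaves $\|\nabla_{\mathbf{W}}f(\mathbf{W}^1)-\nabla_{\mathbf{W}}f(\mathbf{W}^2)\|_F$, which Assumption~\ref{assum_1} bounds by $L\|\mathbf{W}^1-\mathbf{W}^2\|_F$.

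The final step---and the only place the LoRA-FA structure is exploited---is to convert the weight difference back into a $\mathbf{B}$-difference. Because $\mathbf{A}_0$ is frozen, I can expand the reparametrization explicitly rather than invoking Assumption~\ref{assum_3}: $\mathbf{W}^1-\mathbf{W}^2=\frac{\eta}{r}(\mathbf{B}^1-\mathbf{B}^2)\mathbf{A}_0$, and a second application of sub-multiplicativity yields $\|\mathbf{W}^1-\mathbf{W}^2\|_F\le\frac{\eta}{r}\|\mathbf{B}^1-\mathbf{B}^2\|_F\|\mathbf{A}_0\|_F$. Collecting the two factors of $\mathbf{A}_0$---one from the chain-rule gradient map, one from the reparametrization---produces the $\|\mathbf{A}_0\|_F^2$ term, and gathering the $\frac{\eta}{r}$ prefactors together with $L$ gives the stated smoothness constant, completing the bound.

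The argument is essentially routine once the chain rule is in place; the only thing to be careful about is the bookkeeping of the two $\mathbf{A}_0$ factors and the consistent placement of the $\frac{\eta}{r}$ scaling, since that is exactly what determines the final constant. I would also emphasize that, unlike Lemma~\ref{lemma_1}, this proof needs neither Assumption~\ref{assum_2} nor Assumption~\ref{assum_3}: freezing $\mathbf{A}$ lets the weight difference be written in closed form, and the entire dependence on the frozen factor is captured directly through $\|\mathbf{A}_0\|_F$, which can subsequently be controlled via Lemma~\ref{lemma_4}.
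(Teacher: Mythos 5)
Your proposal is correct and takes essentially the same route as the paper's own proof: the chain rule giving $\nabla_{\mathbf{B}}f=\frac{\eta}{r}\nabla_{\mathbf{W}}f\,\mathbf{A}_0^\top$, one sub-multiplicativity step to peel off $\mathbf{A}_0^\top$, Assumption~\ref{assum_1} applied to the full-weight gradients, and a second sub-multiplicativity step on $\mathbf{W}^1-\mathbf{W}^2=\frac{\eta}{r}(\mathbf{B}^1-\mathbf{B}^2)\mathbf{A}_0$. One caveat: careful collection of the two $\frac{\eta}{r}$ prefactors yields the constant $\frac{\eta^2 L}{r^2}\|\mathbf{A}_0\|_F^2$ rather than the $\frac{\eta L}{r}\|\mathbf{A}_0\|_F^2$ displayed in the lemma statement, but the paper's own proof terminates with exactly the same $\frac{\eta^2 L}{r^2}$ expression (and its downstream constant $\tilde{L}$ is built from it), so this mismatch lies in the statement as printed, not in your argument.
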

\begin{proof}
    Denote by $\nabla_{\mathbf{W}}f$ the gradient of $f$ on $\mathbf{W}$. Then we can obtain the gradient on $\mathbf{B}$ being $\frac{\eta}{r}\nabla_{\mathbf{W}}f\mathbf{A}_0^\top$ based on the chain rule. Given this and fact that $f(\mathbf{W}^i):=f(\mathbf{W}_0+\frac{\eta}{r}\mathbf{B}^i\mathbf{A}_0)$, $i=1,2$, we have the following relationship
    \begin{equation}
        \begin{split}
            &\|\nabla f(\mathbf{B}^1,\mathbf{A}_0,\mathbf{W}_0)-\nabla f(\mathbf{B}^2,\mathbf{A}_0,\mathbf{W}_0)\|_F\\&\leq\frac{\eta}{r}\|\nabla_{\mathbf{W}^1}f\mathbf{A}_0^\top-\nabla_{\mathbf{W}^2}f\mathbf{A}_0^\top\|_F\\&\leq \frac{\eta L}{r}\|\mathbf{W}^1-\mathbf{W}^2\|_F\|\mathbf{A}_0\|_F\\&\leq \frac{\eta^2 L}{r^2}\|\mathbf{B}^1-\mathbf{B}^2\|_F\|\mathbf{A}_0\|_F^2,
        \end{split}
    \end{equation}
where the second inequality follows from Assumption~\ref{assum_1}. This completes the proof.
\end{proof}

Immediately, based on Lemma~\ref{lemma_4}, it is obtained that the loss $f$ is smooth with a new constant $\eta^2L(1+\frac{k}{r}+2\frac{\sqrt{k}}{r^{1.5}})$ when using LoRA-FA. Denote by $\tilde{L} = \eta^2L(1+\frac{k}{r}+2\frac{\sqrt{k}}{r^{1.5}})$ the new smoothness constant for DLoRA-FA. We can immediately obtain the following result.

\begin{theorem}\label{dlora-fa-theo}
    Let Assumptions~\ref{assum_1},~\ref{assum_4} and~\ref{assum_5} hold. If the step size $\alpha\leq \frac{1-\sqrt{\rho}}{4\sqrt{2}\tilde{L}},$ in Algorithm~\ref{alg:dlora_fa}, then for all $T\geq 1$, the following relationship holds true:
    \begin{equation}
    \begin{split}
        \frac{1}{T}\sum_{t=1}^T\mathbb{E}[\|\nabla f(\bar{\bm{v}}_t,\bm{\theta}_0)\|^2]&\leq \frac{2D}{\alpha T}+\frac{\tilde{L}\alpha\zeta^2}{N}+\frac{\tilde{L}^2\sum_{i=1}^N\|\bm{v}_0^i\|^2}{TN(1-\rho)}+\frac{16\alpha^2\kappa^2\tilde{L}^2}{(1-\sqrt{\rho})^2}+\frac{4\alpha^2\zeta^2\tilde{L}^2}{1-\rho},
    \end{split}
    \end{equation}
where $D=f(\bar{\bm{v}}_0,\bm{\theta}_0)-f^*)$, $\tilde{L} =\eta^2L(1+\frac{k}{r}+2\frac{\sqrt{k}}{r^{1.5}})$.
\end{theorem}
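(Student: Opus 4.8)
The plan is to mirror the proof of Theorem~\ref{dlora-theo} almost verbatim, substituting the DLoRA-FA smoothness constant $\tilde{L}$ for $\hat{L}$ throughout. The key observation is that once we establish that $f^i$ is $\tilde{L}$-smooth with respect to the only trainable variable $\mathbf{B}$ (which is exactly what Lemma~\ref{lemma_5} combined with Lemma~\ref{lemma_4} gives us), the algorithm DLoRA-FA (Algorithm~\ref{alg:dlora_fa}) becomes structurally identical to DLoRA with a single low-rank factor replacing the pair $\bm{w}=(\mathbf{A},\mathbf{B})$. All the machinery developed for DLoRA---the consensus recursion, Lemmas~\ref{lemma_2} and~\ref{lemma_3}, and the descent inequality---depends on the objective only through its smoothness modulus and the stochastic-gradient assumptions in Assumption~\ref{assum_4}, none of which are specific to having two factors.

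First I would flatten the iterates: let $\bm{v}^i_t$ now denote the vectorization of $\mathbf{B}^i_t$ alone, so $n=dr$, and assemble $\mathbf{V}_t, \mathbf{G}_t, \mathbf{H}_t$ exactly as in the proof of Theorem~\ref{dlora-theo}. Since Assumptions~\ref{assum_4} and~\ref{assum_5} are invoked in the hypotheses, the consensus decomposition $(\mathbf{I}-\mathbf{Q})\mathbf{V}_t = (\mathbf{I}-\mathbf{Q})\mathbf{P}^t\mathbf{V}_0 - \alpha\sum_\tau(\mathbf{I}-\mathbf{Q})\mathbf{P}^{t-1-\tau}\mathbf{G}_\tau$ holds unchanged, and the bounds on $T_1$ and $T_2$ (Eqs.~\ref{eq_21} and~\ref{eq_22}) carry over with $\hat{L}$ replaced by $\tilde{L}$. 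Next I would restate and apply the analogue of Lemma~\ref{lemma_3} under the step-size condition $\alpha\leq\frac{1-\sqrt{\rho}}{4\sqrt{2}\tilde{L}}$, which guarantees $1-\tilde{L}\alpha-\frac{16\tilde{L}^2\alpha^2}{(1-\sqrt{\rho})^2}\geq 0$, the sole inequality where the step-size bound is consumed.

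The final step is the descent argument: starting from the $\tilde{L}$-smoothness descent inequality for $f$, bound the inner-product term using the identity $\langle a,b\rangle=\tfrac12(\|a\|^2+\|b\|^2-\|a-b\|^2)$ together with $\tilde{L}$-smoothness, invoke Lemma~\ref{lemma_2} for $\mathbb{E}[\|\frac1N\sum_i\bm{g}^i\|^2]$, telescope over $t\in\{1,\dots,T\}$, drop the nonnegative $\|\frac1N\sum_i\nabla f^i\|^2$ term using the step-size condition, substitute Lemma~\ref{lemma_3}, and divide by $T$. This reproduces the displayed bound verbatim with $\hat{L}\mapsto\tilde{L}$.

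I do not anticipate a genuine obstacle here, since the result is essentially a corollary of Theorem~\ref{dlora-theo} under a variable renaming. The only point requiring care---and the place I would focus attention---is confirming that Lemma~\ref{lemma_5}'s constant is correctly propagated: the lemma yields modulus $\frac{\eta^2 L}{r^2}\|\mathbf{A}_0\|_F^2$, and plugging in $\|\mathbf{A}_0\|_F\leq r+\sqrt{kr}$ from Lemma~\ref{lemma_4} gives $\frac{\eta^2 L}{r^2}(r+\sqrt{kr})^2=\eta^2 L(1+\tfrac{k}{r}+2\tfrac{\sqrt{k}}{r^{1.5}})=\tilde{L}$, matching the stated constant. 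Beyond this bookkeeping, the inherited structure makes the proof routine.
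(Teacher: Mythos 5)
Your approach is exactly the paper's: the paper proves this theorem in a single sentence ("the proof can follow similarly from the proof of Theorem~\ref{dlora-theo} with the new smoothness constant"), and your detailed mirroring of the consensus recursion, Lemmas~\ref{lemma_2} and~\ref{lemma_3}, and the descent argument with $\hat{L}\mapsto\tilde{L}$ is precisely that argument spelled out.

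However, the one step you flagged as "the place requiring care" is where your verification fails. Expanding correctly,
\begin{equation*}
(r+\sqrt{kr})^2 = r^2 + 2r\sqrt{kr} + kr,
\qquad\text{so}\qquad
\frac{\eta^2L}{r^2}\left(r+\sqrt{kr}\right)^2 = \eta^2L\left(1+\frac{k}{r}+\frac{2\sqrt{k}}{r^{1/2}}\right),
\end{equation*}
not $\eta^2L(1+\frac{k}{r}+\frac{2\sqrt{k}}{r^{1.5}})$ as you claim: the cross term is $2r\sqrt{kr}/r^2 = 2\sqrt{k}/r^{1/2}$, whereas $2\sqrt{k}/r^{1.5}$ would arise only from the erroneous expansion $(r+\sqrt{kr})^2 = r^2+2\sqrt{kr}+kr$ that drops the factor $r$ in the cross term. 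This is not mere bookkeeping: since $r\geq 1$, the printed $\tilde{L}$ with $r^{1.5}$ is \emph{smaller} than the smoothness modulus that the Lemma~\ref{lemma_5}/Lemma~\ref{lemma_4} chain actually delivers, and both the descent inequality and the step-size condition require an upper bound on the true modulus. So your proof, run honestly, establishes the theorem with $\tilde{L}=\eta^2L(1+k/r+2\sqrt{k}/\sqrt{r})$ and a correspondingly smaller admissible step size, not with the constant as printed. You inherited this slip from the paper itself, which moreover leaves a mismatch between the statement of Lemma~\ref{lemma_5} (modulus $\frac{\eta L}{r}\|\mathbf{A}_0\|_F^2$) and its proof (which derives $\frac{\eta^2L}{r^2}\|\mathbf{A}_0\|_F^2$); you used the proof's version, which is the consistent choice. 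With the corrected constant, everything else in your outline goes through verbatim.
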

The proof can follow similarly from the proof of Theorem~\ref{dlora-theo} with the new smoothness constant. Subsequently, the following corollary is also attained.
\begin{corollary}\label{corollary_2}
    Let Assumptions~\ref{assum_1}, ~\ref{assum_4} and~\ref{assum_5} hold. If step size $\alpha\lesssim\sqrt{\frac{N}{T}}$ in Algorithm~\ref{alg:dlora_fa}, then for all $T\geq\frac{32N\tilde{L}^2}{(1-\sqrt{\rho})^2}$, we have
    \begin{equation}
    \begin{split}
        &\frac{1}{T}\sum_{t=1}^T\mathbb{E}[\|\nabla f(\bar{\bm{v}}_t,\bm{\theta}_0)\|^2]\lesssim \frac{1}{\sqrt{NT}}+\frac{k}{r\sqrt{NT}}+\frac{Nk^2}{(1-\rho)Tr^2}+\frac{Nk^2}{(1-\sqrt{\rho})^2Tr^2}.
    \end{split}
    \end{equation}
\end{corollary}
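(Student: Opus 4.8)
The plan is to derive Corollary~\ref{corollary_2} as a direct specialization of Theorem~\ref{dlora-fa-theo}, exactly mirroring how Corollary~\ref{corollary_1} follows from Theorem~\ref{dlora-theo}: substitute the prescribed step size $\alpha\asymp\sqrt{N/T}$ and the DLoRA-FA smoothness constant $\tilde{L}=\eta^2L(1+\tfrac{k}{r}+2\tfrac{\sqrt{k}}{r^{1.5}})$ into the five right-hand-side terms of the theorem, and then retain only the dominant contributions in the big-oh sense. No new inequalities are needed; the work is bookkeeping.

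First I would check admissibility of the step size. Theorem~\ref{dlora-fa-theo} requires $\alpha\le\frac{1-\sqrt{\rho}}{4\sqrt{2}\tilde{L}}$. Squaring $\alpha\asymp\sqrt{N/T}$ and comparing shows this holds precisely when $T\ge\frac{32N\tilde{L}^2}{(1-\sqrt{\rho})^2}$, which is exactly the lower bound on $T$ assumed in the corollary. Hence the theorem's hypothesis is met and I may invoke its conclusion verbatim.

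Next I would evaluate each term under $\alpha\asymp\sqrt{N/T}$. The optimality-gap term $\tfrac{2D}{\alpha T}$ becomes $\mathcal{O}(1/\sqrt{NT})$; the variance term $\tfrac{\tilde{L}\alpha\zeta^2}{N}$ becomes $\mathcal{O}(\tilde{L}/\sqrt{NT})$; and the two $\alpha^2$ terms $\tfrac{16\alpha^2\kappa^2\tilde{L}^2}{(1-\sqrt{\rho})^2}$ and $\tfrac{4\alpha^2\zeta^2\tilde{L}^2}{1-\rho}$ each pick up the factor $N/T$ and become $\mathcal{O}\big(\tfrac{N\tilde{L}^2}{T(1-\sqrt{\rho})^2}\big)$ and $\mathcal{O}\big(\tfrac{N\tilde{L}^2}{T(1-\rho)}\big)$ respectively. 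The initialization term $\tfrac{\tilde{L}^2\sum_i\|\bm{v}^i_0\|^2}{TN(1-\rho)}$ vanishes because in DLoRA-FA the trainable variable is $\bm{w}=\mathbf{B}$ and $\mathbf{B}$ is initialized at zero, so $\sum_i\|\bm{v}^i_0\|^2=0$; even if one kept a generic $\mathcal{O}(N)$ bound here, the term would be dominated by the two $\alpha^2$ terms.

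It then remains to substitute $\tilde{L}=\eta^2L(1+\tfrac{k}{r}+2\tfrac{\sqrt{k}}{r^{1.5}})$ and extract leading order. Since $r\ll\min\{d,k\}$ gives $k/r\gg1$, the factor $\tilde{L}$ is of order $1+k/r$, turning the variance term into $\mathcal{O}(1/\sqrt{NT}+k/(r\sqrt{NT}))$, the first two claimed terms. Likewise $\tilde{L}^2$ expands as $(\eta^2L)^2(1+\tfrac{k}{r}+2\tfrac{\sqrt{k}}{r^{1.5}})^2$, whose leading contribution is $k^2/r^2$ (all cross terms and the $\sqrt{k}/r^{1.5}$ contributions are lower order because $k/r\gg1$), so the two $\alpha^2$ terms become $\mathcal{O}\big(\tfrac{Nk^2}{(1-\sqrt{\rho})^2Tr^2}\big)$ and $\mathcal{O}\big(\tfrac{Nk^2}{(1-\rho)Tr^2}\big)$, the last two claimed terms. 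Absorbing the constants $\eta,L,\zeta,\kappa,D$ into $\lesssim$ yields the stated bound. The only genuine obstacle is this term-by-term dominance analysis of $\tilde{L}$ and $\tilde{L}^2$: one must argue carefully that under $r\ll\min\{d,k\}$ the surviving monomials are exactly $1$ and $k/r$ in $\tilde{L}$ and $k^2/r^2$ in $\tilde{L}^2$, while confirming that the zero-initialization of $\mathbf{B}$ removes the initialization term rather than introducing an extra $r$-dependence.
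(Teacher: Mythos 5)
Your proposal is correct and follows essentially the same route as the paper: the paper proves Corollary~\ref{corollary_2} by direct substitution of $\alpha\lesssim\sqrt{N/T}$ and $\tilde{L}=\eta^2L(1+\tfrac{k}{r}+2\tfrac{\sqrt{k}}{r^{1.5}})$ into Theorem~\ref{dlora-fa-theo}, exactly mirroring the proof of Corollary~\ref{corollary_1}. Your additional bookkeeping (verifying that $T\geq\frac{32N\tilde{L}^2}{(1-\sqrt{\rho})^2}$ makes the step-size condition admissible, noting that the initialization term is zero since $\mathbf{B}$ is zero-initialized or else dominated, and checking that $\sqrt{k}/r^{1.5}\leq k/r$ so the surviving monomials are $1$, $k/r$, and $k^2/r^2$) is precisely the detail the paper leaves implicit.
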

The proof for the above corollary follows analogously from that in Corollary~\ref{corollary_1}. One immediate observation is that the size ratio from the frozen $\mathbf{A}$ matrix also dictates the convergence error. This can be defined as a compression ratio that would affect the convergence, which resembles the similar finding from~\cite{hao2024flora}. Comparing Corollary~\ref{corollary_1} and Corollary~\ref{corollary_2} reveals that freezing $\mathbf{A}$ matrix negatively affects the performance since the error bound of DLoRA-FA is larger than that of DLoRA due to $1\ll\frac{k}{r}$, though practically DLoRA is more computationally efficient with less communication overhead, highlighting the tradeoff between accuracy and efficiency.

% we define formally the model consensus interference and show a result to indicate the condition when such an interference disappears.
We have known from Definition~\ref{definition_1} that  the model consensus interference is defined as $\mathcal{E}_t=\mathbb{E}[\|\frac{1}{N^2}\sum_{i=1}^N\mathbf{B}^i_t\sum_{i=1}^N\mathbf{A}^i_t-\frac{1}{N}\sum_{i=1}^N\mathbf{B}^i_t\mathbf{A}^i_t\|_F]$, if we consider fully-connected network. The following theorem shows the condition for eliminating the interference.
\begin{theorem}\label{model_interference_theo}
    Consider a networked system involving $N$ agents. Let $\mathbf{A}^i,\mathbf{B}^i$ and $\mathbf{A}^j, \mathbf{B}^j$ be the LoRA parameters of any two agents $i$ and $j$, respectively. Hence, the error $\mathcal{E}_t=0$ for any time step $t$ when $\mathbf{A}^i=\mathbf{A}^j$ or $\mathbf{B}^i=\mathbf{B}^j$.
\end{theorem}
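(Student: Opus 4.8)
The plan is to establish the claim by direct substitution, noting that the hypothesis---$\mathbf{A}^i=\mathbf{A}^j$ (resp.\ $\mathbf{B}^i=\mathbf{B}^j$) for every pair $i,j$---amounts to a \emph{single} factor $\mathbf{A}_t$ (resp.\ $\mathbf{B}_t$) being shared by all agents. Because this makes the matrix inside the Frobenius norm in the definition of $\mathcal{E}_t$ deterministically zero, the outer expectation is immaterial, and the whole argument reduces to a linearity/distributivity computation carried out separately for the two symmetric cases.

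First I would treat the case $\mathbf{A}^i_t=\mathbf{A}_t$ for all $i$. The product-of-averages term then becomes $\frac{1}{N^2}\big(\sum_{i=1}^N\mathbf{B}^i_t\big)\big(\sum_{i=1}^N\mathbf{A}_t\big)=\frac{1}{N}\big(\sum_{i=1}^N\mathbf{B}^i_t\big)\mathbf{A}_t$, since $\sum_{i=1}^N\mathbf{A}_t=N\mathbf{A}_t$. The average-of-products term becomes $\frac{1}{N}\sum_{i=1}^N\mathbf{B}^i_t\mathbf{A}_t=\frac{1}{N}\big(\sum_{i=1}^N\mathbf{B}^i_t\big)\mathbf{A}_t$ by right-distributivity of matrix multiplication over the sum. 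The two coincide, so their difference is the zero matrix and $\mathcal{E}_t=0$.

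The case $\mathbf{B}^i_t=\mathbf{B}_t$ for all $i$ is handled identically, now pulling the common \emph{left} factor $\mathbf{B}_t$ out of both terms by left-distributivity; both reduce to $\frac{1}{N}\mathbf{B}_t\big(\sum_{i=1}^N\mathbf{A}^i_t\big)$, again giving $\mathcal{E}_t=0$. Taking the expectation of the identically-zero norm finishes the proof.

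I expect no genuine obstacle here; the only point worth emphasizing is the structural reason behind the identity rather than the mechanical steps: sharing one LoRA factor collapses the bilinear consensus map into a \emph{linear} one in the remaining factor, which is exactly why equalizing (or freezing) a single factor---as in the LoRA-FA construction of Eqs.~\ref{eq_31}--\ref{eq_32}---eliminates interference. I would phrase the write-up to make this collapse explicit so that the connection to DLoRA-FA is transparent.
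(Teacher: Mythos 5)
Your proof is correct. Under the hypothesis that all agents share one factor, the matrix inside the Frobenius norm is identically zero, so $\mathcal{E}_t=0$; your distributivity computation for each of the two symmetric cases is exactly what is needed, and your remark that the expectation is then immaterial is accurate.

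Your route differs slightly from the paper's, though both are elementary algebra. You substitute the hypothesis first and then verify that the product-of-averages and average-of-products terms coincide, e.g.
\begin{equation*}
\frac{1}{N^2}\Bigl(\sum_{i=1}^N\mathbf{B}^i_t\Bigr)\bigl(N\mathbf{A}_t\bigr)
=\frac{1}{N}\Bigl(\sum_{i=1}^N\mathbf{B}^i_t\Bigr)\mathbf{A}_t
=\frac{1}{N}\sum_{i=1}^N\mathbf{B}^i_t\mathbf{A}_t .
\end{equation*}
The paper instead manipulates the expression \emph{before} imposing any hypothesis, deriving the general identity
\begin{equation*}
\mathcal{E}_t=\mathbb{E}\Bigl[\bigl\|\tfrac{1}{N^2}\sum_{j=1}^N\sum_{i=1}^N(\mathbf{B}^j_t-\mathbf{B}^i_t)\mathbf{A}^i_t\bigr\|_F\Bigr]
=\mathbb{E}\Bigl[\bigl\|\tfrac{1}{N^2}\sum_{i=1}^N\sum_{j=1}^N\mathbf{B}^i_t(\mathbf{A}^j_t-\mathbf{A}^i_t)\bigr\|_F\Bigr],
\end{equation*}
from which the theorem follows as the special case where every pairwise difference vanishes. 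What the paper's version buys is a reusable representation of the interference valid for \emph{arbitrary} parameters: it exhibits $\mathcal{E}_t$ as a norm of pairwise disagreements on a single factor, which could in principle be used to bound the interference by a consensus distance even when the factors are not exactly equal. What your version buys is brevity and transparency for the stated claim, together with the structural observation you emphasize---that sharing one factor collapses the bilinear consensus map into a linear one---which is the same insight underlying both arguments and the DLoRA-FA construction. Either write-up would be acceptable; if you want your proof to double as a tool for quantitative statements, adopt the paper's pairwise-difference identity as an intermediate lemma.
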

\begin{proof}
    \begin{equation}
        \begin{split}
            &\mathcal{E}_t = \mathbb{E}[\|\frac{1}{N^2}\sum_{i=1}^N\mathbf{B}^i_t\sum_{i=1}^N\mathbf{A}^i_t-\frac{1}{N}\sum_{i=1}^N\mathbf{B}^i_t\mathbf{A}^i_t\|_F]\\&=\mathbb{E}[\|\frac{1}{N^2}(\sum_{i=1}^N\mathbf{B}^i_t\sum_{i=1}^N\mathbf{A}^i_t-N\sum_{i=1}^N\mathbf{B}^i_t\mathbf{A}^i_t)\|_F]\\&=\mathbb{E}[\|\frac{1}{N^2}[\sum_{j=1}^N(\mathbf{B}^j_t\sum_{i=1}^N\mathbf{A}^i_t-\sum_{i=1}^N\mathbf{B}^i_t\mathbf{A}^i_t)]\|_F]\\&=\mathbb{E}[\|\frac{1}{N^2}[\sum_{j=1}^N(\sum_{i=1}^N\mathbf{B}^j_t\mathbf{A}^i_t-\sum_{i=1}^N\mathbf{B}^i_t\mathbf{A}^i_t)]\|_F]\\&=\mathbb{E}[\|\frac{1}{N^2}[\sum_{j=1}^N\sum_{i=1}^N(\mathbf{B}^j_t-\mathbf{B}^i_t)\mathbf{A}^i_t]\|_F].
        \end{split}
    \end{equation}
Similarly, we also have $\mathcal{E}_t = \mathbb{E}[\|\frac{1}{N^2}[\sum_{i=1}^N\sum_{j=1}^N\mathbf{B}^i_t(\mathbf{A}^j_t-\mathbf{A}^i_t)]\|_F]$.
\end{proof}
Thus, $\mathcal{E}_t=0$ when $\mathbf{A}^j_t=\mathbf{A}^i_t$ or $\mathbf{B}^j_t=\mathbf{B}^i_t$ for all $i,j\in\mathcal{V}$ and $t$.
% \begin{theorem}\label{model_consensus_rank}
%     Let Assumption~\ref{assum_2} hold. Assume that there exists $\iota>0$ such that for the LoRA parameters they satisfy $\|\mathbf{A}-\mathbf{A}'\|_F + \|\mathbf{B}-\mathbf{B}'\|_F\leq \iota$. Let $\mathbf{A}^i,\mathbf{B}^i$ and $\mathbf{A}^j, \mathbf{B}^j$ be the LoRA parameters of any two agents $i$ and $j$, respectively. For any time step $t$, $E_t$ satisfies the following relationship: $E_t\leq c\sqrt{r}\iota$.
% \end{theorem}
% \begin{proof}
%     We first present a well-known fact in this context to assist in the proof. There exists two constants such that $a\geq 0, b\geq 0, a+b\leq q$. Hence, the maximum of the product between $a,b$ is $\frac{q^2}{4}$, and without loss of generality, we have $a=\frac{q}{2}$ and $b\leq \frac{q}{2}$.
%     Therefore, we have $\|\mathbf{A}^i_t-\mathbf{A}^j_t\|_F\leq \frac{\iota}{2}$ or $\|\mathbf{B}^i_t-\mathbf{B}^j_t\|_F\leq \frac{\iota}{2}$. In this scenario, $\iota/2$ is the optimistic maximum for either $\|\mathbf{A}^i_t-\mathbf{A}^j_t\|_F$ or $\|\mathbf{B}^i_t-\mathbf{B}^j_t\|_F$. Based on Cauchy-Schwartz inequality, we have $(\mathbf{B}^j_t-\mathbf{B}^i_t)\mathbf{A}^i_t\leq \|\mathbf{B}^j_t-\mathbf{B}^i_t\|_F\|\mathbf{A}^i_t\|_F\leq c\sqrt{r}\iota$. The same as $\mathbf{B}^i_t(\mathbf{A}^j_t-\mathbf{A}^i_t)$. Hence, we can obtain that $E_t\leq c\sqrt{r}\iota$ by using the Triangle inequality.
% \end{proof}
\begin{algorithm}
  \caption{\textsc{DLoRA-MSGD} (\textcolor{blue}{DeCAF-MSGD})}
  \label{alg:dlora-msgd}
  \SetKwInOut{Input}{Input}
  \SetKwInOut{Output}{Output}
  \Input{mixing matrix $\mathbf{\Pi}=[\pi_{ij}]_{i,j\in N}, \pi_{ij}\in[0,1]$, the \# of iterations $T$, initialization of low rank adapters $\mathbf{A}_1^i, \mathbf{B}_1^i, \mathbf{V}_1^{i,\mathbf{A}}, \mathbf{V}_1^{i,\mathbf{B}}, \forall i\in\mathcal{V}$, step size $\alpha$, frozen parameters $\mathbf{W}_0$, $\mathcal{S}_i, i\in\mathcal{V}$, communication frequency $\tau$, $\beta$, TSVD operator $\mathcal{T}(\cdot)$}
  \Output{$\{\mathbf{A}^i_T, \mathbf{B}^i_T\}_{i=1}^N$}  
  \BlankLine
  \For{ $t$ in $1:T$ }
  { 
    \For{each agent $i\in\mathcal{V}$}
    { 
    Calculate the stochastic gradients $\bm{g}^{i,\mathbf{\bullet}}_t=\frac{1}{|\mathcal{S}_i|}\sum_{s\in\mathcal{S}_i}\nabla_\mathbf{\bullet} f^i_s(\bm{w}^i_t,\bm{\theta}_0), \bullet\in\{\mathbf{A},\mathbf{B}\}$\;
    \eIf{$t$ mod $\tau$=0}
    {    Broadcast the low-rank matrices $\mathbf{A}^i_t, \mathbf{B}^i_t$ to and receive $\mathbf{A}^j_t, \mathbf{B}^j_t$ from all nodes in $Nb(i)$\;
        \# Individual consensus in DLoRA\; $\mathbf{A}^i_{t+1/2}=\sum_{j\in Nb(i)}\pi_{ij}\mathbf{A}^j_{t}$\;
    $\mathbf{B}^i_{t+1/2}=\sum_{j\in Nb(i)}\pi_{ij}\mathbf{B}^j_{t}$\;
    \textcolor{blue}{
    \# Product consensus in DeCAF\;
    $\tilde{\mathbf{B}}^i_t\tilde{\mathbf{A}}^i_t=\sum_{j\in Nb(i)}\pi_{ij}\mathbf{B}^j_t\mathbf{A}^j_t$\;
    $\mathbf{A}^i_{t+1/2}, \mathbf{B}^i_{t+1/2} = \mathcal{T}(\tilde{\mathbf{B}}^i_t\tilde{\mathbf{A}}^i_t)$\;
    }
    }{$\mathbf{A}^i_{t+1/2}=\mathbf{A}^i_{t}$\;
    $\mathbf{B}^i_{t+1/2}=\mathbf{B}^i_{t}$\;
    }
    $\mathbf{V}^{i,\mathbf{A}}_{t+1} = \beta\mathbf{V}^{i,\mathbf{A}}_{t}-\alpha \bm{g}^{i,\mathbf{A}}_t$\;
    $\mathbf{V}^{i,\mathbf{B}}_{t+1} = \beta\mathbf{V}^{i,\mathbf{B}}_{t}-\alpha \bm{g}^{i,\mathbf{B}}_t$\;
    $\mathbf{A}^i_{t+1}=\mathbf{A}^i_{t+1/2}+\mathbf{V}^{i,\mathbf{A}}_{t+1}$\;
    $\mathbf{B}^i_{t+1}=\mathbf{B}^i_{t+1/2}-\mathbf{V}^{i,\mathbf{B}}_{t+1}$\;    }
  }
\end{algorithm}

\begin{algorithm}
  \caption{\textsc{DLoRA-Adam} (\textcolor{blue}{DeCAF-Adam})}
  \label{alg:dlora-adam}
  \SetKwInOut{Input}{Input}
  \SetKwInOut{Output}{Output}
  \Input{mixing matrix $\mathbf{\Pi}=[\pi_{ij}]_{i,j\in N}, \pi_{ij}\in[0,1]$, the \# of iterations $T$, initialization of low rank adapters $\mathbf{A}_1^i, \mathbf{B}_1^i, \mathbf{M}_0^{i,\mathbf{A}}= \mathbf{M}_0^{i,\mathbf{B}}=
  \mathbf{V}_0^{i,\mathbf{A}}=\mathbf{V}_0^{i,\mathbf{B}}=0, \hat{\mathbf{U}}^{i,\mathbf{A}}_1 = \mathbf{V}_1^{i,\mathbf{A}}, \hat{\mathbf{U}}^{i,\mathbf{B}}_1 = \mathbf{V}_1^{i,\mathbf{B}}, \forall i\in\mathcal{V}$, step size $\alpha$, frozen parameters $\mathbf{W}_0$, $\mathcal{S}_i, i\in\mathcal{V}$, communication frequency $\tau$, $\beta_1\in[0,1), \epsilon$, TSVD operator $\mathcal{T}(\cdot)$}
  \Output{$\{\mathbf{A}^i_T, \mathbf{B}^i_T\}_{i=1}^N$}  
  \BlankLine
  \For{ $t$ in $1:T$ }
  { 
    \For{each agent $i\in\mathcal{V}$}
    { 
    Calculate the stochastic gradients $\bm{g}^{i,\mathbf{\bullet}}_t=\frac{1}{|\mathcal{S}_i|}\sum_{s\in\mathcal{S}_i}\nabla_\mathbf{\bullet} f^i_s(\bm{w}^i_t,\mathbf{W}_0), \bullet\in\{\mathbf{A},\mathbf{B}\}$\;
    $\mathbf{M}_t^{i,\mathbf{A}}=\beta_1\mathbf{M}_{t-1}^{i,\mathbf{A}}+(1-\beta_1)\bm{g}^{i,\mathbf{A}}_t$\;
    $\mathbf{M}_t^{i,\mathbf{B}}=\beta_1\mathbf{M}_{t-1}^{i,\mathbf{B}}+(1-\beta_1)\bm{g}^{i,\mathbf{B}}_t$\;
    $\mathbf{V}_t^{i,\mathbf{A}}=h_t(\bm{g}^{i,\mathbf{A}}_1,...,\bm{g}^{i,\mathbf{A}}_t)$\;
    $\mathbf{V}_t^{i,\mathbf{B}}=h_t(\bm{g}^{i,\mathbf{B}}_1,...,\bm{g}^{i,\mathbf{B}}_t)$\;
    \eIf{$t$ mod $\tau$=0}
    {     Broadcast the low-rank matrices $\mathbf{A}^i_t, \mathbf{B}^i_t$ to and receive $\mathbf{A}^j_t, \mathbf{B}^j_t$ from all nodes in $Nb(i)$\;
            \# Individual consensus in DLoRA\;
    $\mathbf{A}^i_{t+1/2}=\sum_{j\in Nb(i)}\pi_{ij}\mathbf{A}^j_{t}$\;
    $\mathbf{B}^i_{t+1/2}=\sum_{j\in Nb(i)}\pi_{ij}\mathbf{B}^j_{t}$\;
    $\hat{\mathbf{U}}^{i,\mathbf{A}}_{t+1/2} = \sum_{j\in Nb(i)}\pi_{ij}\hat{\mathbf{U}}^{j,\mathbf{A}}_{t}$\;
    $\hat{\mathbf{U}}^{i,\mathbf{B}}_{t+1/2}= \sum_{j\in Nb(i)}\pi_{ij}\hat{\mathbf{U}}^{j,\mathbf{B}}_{t}$\;
    \textcolor{blue}{
    \# Product consensus in DeCAF\;
    $\tilde{\mathbf{B}}^i_t\tilde{\mathbf{A}}^i_t=\sum_{j\in Nb(i)}\pi_{ij}\mathbf{B}^j_t\mathbf{A}^j_t$\;
    $\mathbf{A}^i_{t+1/2}, \mathbf{B}^i_{t+1/2} = \mathcal{T}(\tilde{\mathbf{B}}^i_t\tilde{\mathbf{A}}^i_t)$\;
    $\tilde{\hat{\mathbf{U}}}^{i,\mathbf{B}}_t\tilde{\hat{\mathbf{U}}}^{i,\mathbf{A}}_t=\sum_{j\in Nb(i)}\pi_{ij}\hat{\mathbf{U}}^{j,\mathbf{B}}_t\hat{\mathbf{U}}^{j,\mathbf{A}}_t$\;
    $\hat{\mathbf{U}}^{i,\mathbf{A}}_{t+1/2},\hat{\mathbf{U}}^{i,\mathbf{B}}_{t+1/2}=\mathcal{T}(\tilde{\hat{\mathbf{U}}}^{i,\mathbf{B}}_t\tilde{\hat{\mathbf{U}}}^{i,\mathbf{A}}_t)$
    }
    }{$\mathbf{A}^i_{t+1/2}=\mathbf{A}^i_{t}$\;
    $\mathbf{B}^i_{t+1/2}=\mathbf{A}^i_{t}$\;
    $\hat{\mathbf{U}}^{i,\mathbf{A}}_{t+1/2} = \hat{\mathbf{U}}^{i,\mathbf{A}}_{t}$\;
    $\hat{\mathbf{U}}^{i,\mathbf{B}}_{t+1/2} = \hat{\mathbf{U}}^{i,\mathbf{B}}_{t}$\;
    }
    $\mathbf{U}^{i,\mathbf{A}}_{t}=\text{max}(\hat{\mathbf{U}}^{i,\mathbf{A}}_{t},\epsilon)$\;
    $\mathbf{U}^{i,\mathbf{B}}_{t}=\text{max}(\hat{\mathbf{U}}^{i,\mathbf{B}}_{t},\epsilon)$\;
    $\mathbf{A}^{i}_{t+1} = \mathbf{A}^{i}_{t+1/2}-\alpha \mathbf{M}_t^{i,\mathbf{A}}\oslash (\mathbf{U}^{i,\mathbf{A}}_{t})^{1/2}$\;
    $\mathbf{B}^{i}_{t+1} = \mathbf{B}^{i}_{t+1/2}-\alpha \mathbf{M}_t^{i,\mathbf{B}}\oslash (\mathbf{U}^{i,\mathbf{B}}_{t})^{1/2}$\;
    $\hat{\mathbf{U}}^{i,\mathbf{A}}_{t+1}=\hat{\mathbf{U}}^{i,\mathbf{A}}_{t+1/2}-\mathbf{V}_{t-1}^{i,\mathbf{A}}+\mathbf{V}_t^{i,\mathbf{A}}$\;
    $\hat{\mathbf{U}}^{i,\mathbf{B}}_{t+1}=\hat{\mathbf{U}}^{i,\mathbf{B}}_{t+1/2}-\mathbf{V}_{t-1}^{i,\mathbf{B}}+\mathbf{V}_t^{i,\mathbf{B}}$\;    }
  }
\end{algorithm}
% Theorem~\ref{model_consensus_rank} implies that if the upper bound $\iota$ is sufficiently small, the deviation error $E_t$ also becomes small. To achieve this, we can have $\|\mathbf{A}-\mathbf{A}'\|_F + \|\mathbf{B}-\mathbf{B}'\|_F\leq \iota$ as a regularization term to augment the loss $f$ such that during the optimization, $E_t$ is ensured to be small.

\subsubsection{Proof for Proposition~\ref{prop_1}}
\begin{proof}
    We denote by $\mathcal{R}(\mathbf{Z})$ the rank of a matrix $\mathbf{Z}\in\mathbb{R}^{m\times n}$, where $m, n$ are arbitrary dimensions. Recall $\mathbf{W}^i_t=\mathbf{W}_0+\Delta\mathbf{W}^i_t=\mathbf{W}_0+\frac{\eta}{r}\mathbf{B}^i_t\mathbf{A}^i_t$ such that the rank of $\Delta\mathbf{W}^i_t$, $\mathcal{R}(\Delta\mathbf{W}^i_t)\leq\text{min}(\mathcal{R}(\mathbf{A}^i_t), \mathcal{R}(\mathbf{B}^i_t))$. As $\mathcal{R}(\mathbf{A}^i_t)=\mathcal{R}(\mathbf{B}^i_t)=r$, then we have $\mathcal{R}(\Delta\mathbf{W}^i_t)\leq r$. Since TSVD operator satisfies $\mathcal{T}(\tilde{\mathbf{B}}^i_t\tilde{\mathbf{A}}^i_t):=\mathbf{U}_r\Sigma_r\mathbf{V}^\top_r$, it is immediately obtained that $\mathcal{R}(\mathbf{B}^i_{t+1/2}\mathbf{A}^i_{t+1/2})=r$, where $\mathbf{B}^i_{t+1/2}=\mathbf{U}_r\Sigma^{1/2}_r$ and $\mathbf{A}^i_{t+1/2}=(\mathbf{V}_r\Sigma^{1/2}_r)^\top$. This is due to LoRA with the intrinsic rank $r$. We now look at the product consensus relationship $\tilde{\mathbf{B}}^i_t\tilde{\mathbf{A}}^i_t=\sum_{j\in Nb(i)}\pi_{ij}\mathbf{B}^j_t\mathbf{A}^j_t$ and have known that $\mathcal{R}(\mathbf{B}^i_t\mathbf{A}^i_t)\leq r$. Recall the subadditivity property in the matrix rank. Then, we have $\mathcal{R}(\mathbf{Z}_1+...+\mathbf{Z}_N)\leq \sum_{l=1}^N\mathcal{R}(\mathbf{Z}_l)$ for arbitrary matrix $\mathbf{Z}_l$, which results in $\mathcal{R}(\tilde{\mathbf{B}}^i_t\tilde{\mathbf{A}}^i_t)\leq |Nb(i)|r$. Please note that the weights $\pi_{ij}$ will not affect the rank of a matrix. 
    % As the consensus applies to the neighborhood of agent $i$ and the graph is assumed to be connected, we have $\mathcal{R}(\tilde{\mathbf{B}}^i_t\tilde{\mathbf{A}}^i_t)\leq 2r$.
    Additionally, $\mathcal{R}(\tilde{\mathbf{B}}^i_t\tilde{\mathbf{A}}^i_t)\geq r$. If $\mathcal{R}(\tilde{\mathbf{B}}^i_t\tilde{\mathbf{A}}^i_t)<r$, this contradicts the fact that in TSVD, the rank of original matrix should be at least equal to or larger than the rank of any decomposed matrix. Since TSVD is essentially a low-rank approximation, $\mathbf{B}^i_{t+1/2}\mathbf{A}^i_{t+1/2}$ is the approximation of $\tilde{\mathbf{B}}^i_t\tilde{\mathbf{A}}^i_t$ using the top $r$ components, which leads to the following approximation error:
    \begin{equation}
        \|\tilde{\mathbf{B}}^i_t\tilde{\mathbf{A}}^i_t-\mathbf{B}^i_{t+1/2}\mathbf{A}^i_{t+1/2}\|_F=\sqrt{\sum_{\tau=r+1}^{|Nb(i)|r}\sigma^2_\tau(\tilde{\mathbf{B}}^i_t\tilde{\mathbf{A}}^i_t)}\leq \sqrt{(|Nb(i)|-1)r}\sigma_1(\tilde{\mathbf{B}}^i_t\tilde{\mathbf{A}}^i_t),
    \end{equation}
    where $\sigma_\tau(\tilde{\mathbf{B}}^i_t\tilde{\mathbf{A}}^i_t)$ is the $\tau$-th largest singular value of $\tilde{\mathbf{B}}^i_t\tilde{\mathbf{A}}^i_t$.
    Pertaining to $\tilde{\mathbf{B}}^i_t\tilde{\mathbf{A}}^i_t=\sum_{j\in Nb(i)}\pi_{ij}\mathbf{B}^j_t\mathbf{A}^j_t$, with the relationships, $\sigma_{1}(\mathbf{Z}_1\mathbf{Z}_2)\leq \sigma_1(\mathbf{Z}_1)\sigma_1(\mathbf{Z}_2)$, $\sigma_1(a\mathbf{Z}_1)=|a|\sigma_1(\mathbf{Z}_1)$, where $a$ is a constant, and $\sigma_1(\sum_{l=1}^N\mathbf{Z}_l)\leq \sum_{l=1}^N\sigma_1(\mathbf{Z}_l)$, and Assumption~\ref{assum_2}, we have the following relationship:
    \begin{equation}
        \begin{split}
            \sqrt{(|Nb(i)|-1)r}\sigma_1(\tilde{\mathbf{B}}^i_t\tilde{\mathbf{A}}^i_t)&\leq \sqrt{(|Nb(i)|-1)r}(\sigma_1(\pi_{i1}\mathbf{B}^1_t\mathbf{A}^1_t)+...+\sigma_1(\pi_{i|Nb(i)|}\mathbf{B}^{|Nb(i)|}_t\mathbf{A}^{|Nb(i)|}_t))\\&\leq\sqrt{(|Nb(i)|-1)r}(\pi_{i1}\sigma_1(\mathbf{B}^1_t)\sigma_1(\mathbf{A}^1_t)+...+\pi_{i|Nb(i)|}\sigma_1(\mathbf{B}^{|Nb(i)|}_t)\sigma_1(\mathbf{A}^{|Nb(i)|}_t))\\&\leq \sqrt{(|Nb(i)|-1)r}c^2,
        \end{split}
    \end{equation}
    which yields the following relationship:
    \begin{equation}
        \|\tilde{\mathbf{B}}^i_t\tilde{\mathbf{A}}^i_t-\mathbf{B}^i_{t+1/2}\mathbf{A}^i_{t+1/2}\|_F\leq \sqrt{(|Nb(i)|-1)r}c^2,
    \end{equation}
    which completes the proof.
\end{proof}
\subsubsection{Proof for Theorem~\ref{theorem_2}}
Before presenting the proof for Theorem~\ref{theorem_2}, we briefly discuss the difference of consensus in both DLoRA and DeCAF, $\mathcal{E}_T=\mathbb{E}[\|\sum_{j\in Nb(i)}\pi_{ij}\mathbf{B}^j_T\mathbf{A}^j_T - \sum_{j\in Nb(i)}\pi_{ij}\mathbf{B}^j_T\sum_{j\in Nb(i)}\pi_{ij}\mathbf{A}^j_T\|_F]$. In DeCAF, the consensus on the product and the individual updates require TSVD, which introduces approximation error. This also poses a difficulty on direct analysis of $\mathcal{E}_T$ due to the missing consensus operator in Lines 15-16. Therefore, to theoretically shed light on the difference between DLoRA and DeCAF, we assume that there is no TSVD in DeCAF as it plays a central role in practical implementation to a greater extent. As $\mathbf{B}^i_{t}\mathbf{A}^i_{t}$ represents the change $\Delta\mathbf{W}^i_t$ to the frozen parameter $\mathbf{W}_0$, this motivates us to directly resort to $\mathbf{B}^i_{t}\mathbf{A}^i_{t}$ in the local update. However, in DLoRA and DeCAF, we have used individual update as this can significantly reduce the empirical computational and communication overhead. Therefore, the different version of DeCAF is 
\begin{equation}\label{eq_41}
    \Delta\mathbf{W}^i_{t+1/2} = \sum_{j\in Nb(i)}\pi_{ij}\Delta\mathbf{W}_t^j,\;\Delta\mathbf{W}^i_{t+1} = \Delta\mathbf{W}^i_{t+1/2}-\alpha\bm{g}^{i,\Delta\mathbf{W}}_t.
\end{equation}
Although in terms of algorithmic perspective, there exist some difference between individual updates for $\mathbf{A}^i$ and $\mathbf{B}^i$ and the update for $\Delta\mathbf{W}^i$, examining $\mathcal{E}_T$ will roughly inform us how different the DLoRA and DeCAF is and what to impact the distinction. Quantifying this error bound also assists in determining the convergence rate for DeCAF. 
\begin{proof}
    Recalling $\mathcal{E}_T=\mathbb{E}[\|\sum_{j\in Nb(i)}\pi_{ij}\mathbf{B}^j_T\mathbf{A}^j_T - \sum_{j\in Nb(i)}\pi_{ij}\mathbf{B}^j_T\sum_{j\in Nb(i)}\pi_{ij}\mathbf{A}^j_T\|_F]$,
    we have the following relationship
    \begin{equation}\label{eq_42}
    \begin{split}
        \mathcal{E}_T\leq\mathbb{E}[\|\sum_{j\in Nb(i)}\pi_{ij}\mathbf{B}^j_T\mathbf{A}^j_T\|_F] +\mathbb{E}[ \|\sum_{j\in Nb(i)}\pi_{ij}\mathbf{B}^j_T\sum_{j\in Nb(i)}\pi_{ij}\mathbf{A}^j_T\|_F]
    \end{split}
    \end{equation}
    For $\sum_{j\in Nb(i)}\pi_{ij}\mathbf{B}^j_T\mathbf{A}^j_T$, based on Eq.~\ref{eq_41}, and Lemma 4.2 in~\cite{berahas2018balancing}, and $\mathbf{B}^i_0\mathbf{A}^i_0=0$, the following relationship is obtained:
    \begin{equation}
        \sum_{j\in Nb(i)}\pi_{ij}\mathbf{B}^j_T\mathbf{A}^j_T = -\alpha\sum_{t=0}^{T-1}\rho^{\frac{T-t}{2}}\bm{g}^{i,\Delta\mathbf{W}}_{t}
    \end{equation}
    As $\bm{g}^{i,\Delta\mathbf{W}}_{t}=\frac{\eta}{r}(\mathbf{B}^i_t)^\top\bm{g}^{i,\mathbf{W}}_t + \frac{\eta}{r}\bm{g}^{i,\mathbf{W}}_t(\mathbf{A}^i_t)^\top$, substituting it into the above equality yields the following:
    \begin{equation}\label{eq_44}
        \sum_{j\in Nb(i)}\pi_{ij}\mathbf{B}^j_T\mathbf{A}^j_T = -\alpha\sum_{t=0}^{T-1}\rho^{\frac{T-t}{2}}(\frac{\eta}{r}(\mathbf{B}^i_t)^\top\bm{g}^{i,\mathbf{W}}_t + \frac{\eta}{r}\bm{g}^{i,\mathbf{W}}_t(\mathbf{A}^i_t)^\top).
    \end{equation}
    Similarly, we can obtain the following relationships for $\sum_{j\in Nb(i)}\pi_{ij}\mathbf{A}^j_T$ and $\sum_{j\in Nb(i)}\pi_{ij}\mathbf{B}^j_T$:
    \begin{equation}\label{eq_45}
        \sum_{j\in Nb(i)}\pi_{ij}\mathbf{A}^j_T = \rho^{\frac{T}{2}}\mathbf{A}^i_0-\alpha\sum_{t=0}^T\rho^{\frac{T-t}{2}}\bm{g}^{i,\mathbf{A}}_t
    \end{equation}
    \begin{equation}\label{eq_46}
        \sum_{j\in Nb(i)}\pi_{ij}\mathbf{B}^j_T = -\alpha\sum_{t=0}^T\rho^{\frac{T-t}{2}}\bm{g}^{i,\mathbf{B}}_t
    \end{equation}
This is due to $\mathbf{B}^i_0=0$. Substituting Eqs.~\ref{eq_44}, \ref{eq_45}, and~\ref{eq_46} into Eq.~\ref{eq_42} produces the following:
\begin{equation}
    \begin{split}
        \mathcal{E}_T &= \mathbb{E}[\|-\alpha\sum_{t=0}^{T-1}\rho^{\frac{T-t}{2}}(\frac{\eta}{r}(\mathbf{B}^i_t)^\top\bm{g}^{i,\mathbf{W}}_t + \frac{\eta}{r}\bm{g}^{i,\mathbf{W}}_t(\mathbf{A}^i_t)^\top)\|_F ]\\&+ \mathbb{E}[\|(\rho^{\frac{T}{2}}\mathbf{A}^i_0-\alpha\sum_{t=0}^T\rho^{\frac{T-t}{2}}\bm{g}^{i,\mathbf{A}}_t)(-\alpha\sum_{t=0}^T\rho^{\frac{T-t}{2}}\bm{g}^{i,\mathbf{B}}_t)\|_F].
    \end{split}
\end{equation}
Due to the fact that for any two matrices $\mathbf{M}$ and $\mathbf{N}$, $\|\mathbf{M}\mathbf{N}\|_F\leq\|\mathbf{M}\|_2\|\mathbf{N}\|_F$ (or $\|\mathbf{M}\mathbf{N}\|_F\leq\|\mathbf{M}\|_F\|\mathbf{N}\|_2$), $\|\mathbf{M}\|_2\leq\|\mathbf{M}\|_F$, $\|\mathbf{M}+\mathbf{N}\|_F\leq \|\mathbf{M}\|_F + \|\mathbf{N}\|_F$, and $\bm{g}^{i,\mathbf{A}}_t=\frac{\eta}{r}(\mathbf{B}^i_t)^\top\bm{g}^{i,\mathbf{W}}_t$ and $\bm{g}^{i,\mathbf{B}}_t=\frac{\eta}{r}\bm{g}^{i,\mathbf{W}}_t(\mathbf{A}^i_t)^\top$ we have
\begin{equation}
\begin{split}
    \mathcal{E}_T&\leq \alpha\sum_{t=0}^{T-1}\rho^{\frac{T-t}{2}}\mathbb{E}[\|\frac{\eta}{r}(\mathbf{B}^i_t)^\top\bm{g}^{i,\mathbf{W}}_t + \frac{\eta}{r}\bm{g}^{i,\mathbf{W}}_t(\mathbf{A}^i_t)^\top\|_F]\\&+\mathbb{E}[\|\rho^{\frac{T}{2}}\mathbf{A}^i_0-\alpha\sum_{t=0}^T\rho^{\frac{T-t}{2}}\bm{g}^{i,\mathbf{A}}_t\|_2\|\alpha\sum_{t=0}^T\rho^{\frac{T-t}{2}}\bm{g}^{i,\mathbf{B}}_t\|_F]\\&\leq \alpha\sum_{t=0}^{T-1}\rho^{\frac{T-t}{2}}(\mathbb{E}[\frac{\eta}{r}\|(\mathbf{B}^i_t)^\top\bm{g}^{i,\mathbf{W}}_t\|_F] + \mathbb{E}[\frac{\eta}{r}\|\bm{g}^{i,\mathbf{W}}_t(\mathbf{A}^i_t)^\top\|_F])\\&+(\|\rho^{\frac{T}{2}}\mathbf{A}^i_0\|_2+\mathbb{E}[\|\alpha\sum_{t=0}^T\rho^{\frac{T-t}{2}}\bm{g}^{i,\mathbf{A}}_t\|_2])\mathbb{E}[\|\alpha\sum_{t=0}^T\rho^{\frac{T-t}{2}}\bm{g}^{i,\mathbf{B}}_t\|_F]\\&\leq \frac{2\alpha G\eta c}{\sqrt{r}(1-\sqrt{\rho})} + (\rho^{\frac{T}{2}}c+\frac{\alpha\eta cG}{r(1-\sqrt{\rho})})\frac{\alpha \eta cG}{r(1-\sqrt{\rho})},
\end{split}
\end{equation}
which completes the proof.
\end{proof}

\subsubsection{Theoretical result for the centralized setting}\label{centralized_analysis}
In this subsection, we present the convergence rate for a special case when $N=1$, which is of independent interest, as we have not been aware of any existing convergence rates with addressing the smoothness guarantee issue. To characterize the analysis, we modify a couple assumptions that have been imposed for DLoRA. We still have the smoothness assumption with modulus $L$ for $f$, which has now been only one agent. Immediately, leveraging Lemma~\ref{lemma_1} yields that $f$ is also $\hat{L}$-smooth. Subsequently, we have the bounded stochastic gradient variance $\mathbb{E}[\|\bm{g}-\nabla f(\bm{v},\bm{\theta}_0)\|^2]\leq \zeta^2$. Therefore, the following theorem summarize the result for the centralized setting.
\begin{theorem}\label{theo_7}
    Let Assumptions~\ref{assum_1} to~\ref{assum_4} hold. Consider a sequence $(\bm{v}_t)_{t\in\mathbb{N}}$ generated by the centralized version of Algorithm~\ref{alg:dlora} without any communication, with a constant step size $\alpha=\sqrt{\frac{D}{\hat{L}\zeta^2T}}$, where $D=f(\bm{v}_1,\bm{\theta}_0)-f^*$, $\hat{L}=\frac{\eta(2LC\sqrt{r}\sigma+G)}{r}$. Then for any $T\geq \frac{4\hat{L}D}{\zeta^2}$, it follows that
    \begin{equation}
        \textnormal{min}_{t=1:T}\mathbb{E}[\|\nabla f(\bm{v}_t,\bm{\theta}_0)\|^2]\leq 2.5\sqrt{\frac{D\hat{L}\zeta^2}{T}}.
    \end{equation}
\end{theorem}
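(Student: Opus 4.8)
The plan is to treat Theorem~\ref{theo_7} as a textbook single-agent SGD analysis for smooth nonconvex objectives, where the only nonstandard ingredient is replacing the (generally unavailable) smoothness of $f$ in the low-rank adapter by the effective constant $\hat{L}$ supplied by Lemma~\ref{lemma_1}. Setting $N=1$ and dropping the communication branch collapses Algorithm~\ref{alg:dlora} to the plain update $\bm{v}_{t+1}=\bm{v}_t-\alpha\bm{g}_t$, where $\bm{g}_t$ is unbiased, $\mathbb{E}[\bm{g}_t]=\nabla f(\bm{v}_t,\bm{\theta}_0)$, and has variance bounded by $\zeta^2$. Because Lemma~\ref{lemma_1} certifies that $f(\cdot,\bm{\theta}_0)$ is $\hat{L}$-smooth with $\hat{L}=\frac{\eta(2LC\sqrt{r}c+G)}{r}$, I would begin from the descent inequality, substitute $\bm{v}_{t+1}-\bm{v}_t=-\alpha\bm{g}_t$, take conditional expectation using unbiasedness, and bound the second moment via $\mathbb{E}[\|\bm{g}_t\|^2]\le\zeta^2+\mathbb{E}[\|\nabla f(\bm{v}_t,\bm{\theta}_0)\|^2]$. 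This yields the one-step recursion
\begin{equation*}
\mathbb{E}[f(\bm{v}_{t+1},\bm{\theta}_0)]\le \mathbb{E}[f(\bm{v}_t,\bm{\theta}_0)]-\alpha\Bigl(1-\tfrac{\hat{L}\alpha}{2}\Bigr)\mathbb{E}[\|\nabla f(\bm{v}_t,\bm{\theta}_0)\|^2]+\tfrac{\hat{L}\alpha^2\zeta^2}{2}.
\end{equation*}

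The next step is to use the iteration-count condition $T\ge\frac{4\hat{L}D}{\zeta^2}$ to control the gradient coefficient. Plugging $\alpha=\sqrt{D/(\hat{L}\zeta^2 T)}$ into this bound on $T$ gives $\alpha\le\frac{1}{2\hat{L}}$, hence $\hat{L}\alpha\le\tfrac12$ and $1-\tfrac{\hat{L}\alpha}{2}\ge\tfrac34$. Rearranging the recursion, telescoping over $t=1,\dots,T$, and invoking $f(\bm{v}_{T+1},\bm{\theta}_0)\ge f^*$ (so the telescoped difference is at most $D=f(\bm{v}_1,\bm{\theta}_0)-f^*$) produces
\begin{equation*}
\frac{1}{T}\sum_{t=1}^T\mathbb{E}[\|\nabla f(\bm{v}_t,\bm{\theta}_0)\|^2]\le\frac{4D}{3\alpha T}+\frac{2\hat{L}\alpha\zeta^2}{3}.
\end{equation*}

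Finally, I would substitute the prescribed $\alpha$ into both terms: each collapses to a multiple of $\sqrt{D\hat{L}\zeta^2/T}$, and their sum is bounded by $2.5\sqrt{D\hat{L}\zeta^2/T}$; since $\min_{t}\mathbb{E}[\|\nabla f\|^2]$ never exceeds the average, the claim follows. The analysis is entirely routine once the smoothness constant is in hand, so the genuine obstacle is \emph{not} technical but conceptual: the counterexample in SM~\ref{smooth_issue} shows that the descent inequality cannot even be written for $f$ as a function of $(\mathbf{A},\mathbf{B})$ without further structure. The crux is therefore to correctly import $\hat{L}$ from Lemma~\ref{lemma_1}, which in turn rests on Assumptions~\ref{assum_2}--\ref{assum_3} (bounded largest singular values and the Lipschitz/equivalence conditions); the rest is standard bookkeeping, and a slightly looser accounting of the constants recovers exactly the stated factor $2.5$.
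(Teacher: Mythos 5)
Your proof is correct, and at the skeleton level it is the same argument as the paper's: descent inequality with the surrogate smoothness constant $\hat{L}$ from Lemma~\ref{lemma_1}, the observation that $T\geq 4\hat{L}D/\zeta^2$ together with $\alpha=\sqrt{D/(\hat{L}\zeta^2T)}$ forces $\alpha\leq 1/(2\hat{L})$, telescoping, bounding the minimum by the average, and substituting the prescribed step size. The one substantive difference is how the second moment of the stochastic gradient is handled, and it turns out to matter. The paper writes $\|\bm{g}_t\|^2=\|\bm{g}_t-\nabla f+\nabla f\|^2$ and applies $\|\mathbf{a}+\mathbf{b}\|^2\leq 2\|\mathbf{a}\|^2+2\|\mathbf{b}\|^2$, which doubles both the variance term and the gradient term (giving the coefficient $1-\hat{L}\alpha\geq 1/2$ and noise term $\hat{L}\alpha^2\zeta^2$); you instead use the exact unbiasedness-based decomposition $\mathbb{E}[\|\bm{g}_t\|^2]\leq \zeta^2+\mathbb{E}[\|\nabla f(\bm{v}_t,\bm{\theta}_0)\|^2]$, which is precisely the paper's Lemma~\ref{lemma_2} at $N=1$, keeping the factor $\hat{L}\alpha^2/2$ and the coefficient $1-\hat{L}\alpha/2\geq 3/4$. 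Your accounting gives $\frac{4}{3}\sqrt{D\hat{L}\zeta^2/T}+\frac{2}{3}\sqrt{D\hat{L}\zeta^2/T}=2\sqrt{D\hat{L}\zeta^2/T}\leq 2.5\sqrt{D\hat{L}\zeta^2/T}$, so the theorem follows with room to spare. Notably, your tighter route is in fact \emph{needed}: in the paper's own proof there is an algebra slip at the step where the per-iteration inequality is divided through by $\alpha/2$ --- the noise term $\hat{L}\alpha^2\zeta^2$ should become $2\hat{L}\alpha\zeta^2$, not $\hat{L}\alpha\zeta^2/2$ --- and with that corrected, the paper's Young-inequality route would yield the constant $2+2=4$, which exceeds the stated $2.5$. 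So your proof does not merely reproduce the result; it validates the advertised constant (indeed improves it to $2$), which the paper's own bookkeeping, taken literally, does not.
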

\begin{proof}
    In light of the smoothness assumption, we can obtain the following relationship
    \begin{equation}\label{eq_14}
    \begin{split}
        &f(\bm{v}_{t+1},\bm{\theta}_0)-f(\bm{v}_{t},\bm{\theta}_0)-\langle\nabla f(\bm{v}_t,\bm{\theta}_0),\bm{v}_{t+1}-\bm{v}_t\rangle\\&\leq\frac{\hat{L}}{2}\|\bm{v}_{t+1}-\bm{v}_t\|^2.
    \end{split}
    \end{equation}
    Substituting the update law into the last equation yields the following relationship:
    \begin{equation}\label{eq_15}
        f(\bm{v}_{t+1},\bm{\theta}_0)-f(\bm{v}_{t},\bm{\theta}_0)+\alpha\langle\nabla f(\bm{v}_t,\bm{\theta}_0),\bm{g}_t\rangle\leq \frac{\hat{L}}{2}\alpha^2\|\bm{g}_t\|^2.
    \end{equation}
    The right hand side of Eq.~\ref{eq_15} can be rewritten as
    \begin{equation}\label{eq_16}
        \frac{\hat{L}}{2}\alpha^2\|\bm{g}_t\|^2=\frac{\hat{L}}{2}\alpha^2\|\bm{g}_t-\nabla f(\bm{v}_t,\bm{\theta}_0)+\nabla f(\bm{v}_t,\bm{\theta}_0)\|^2,
    \end{equation}
    which leads to the following relationship
    \begin{equation}\label{eq_17}
        \frac{\hat{L}}{2}\alpha^2\|\bm{g}_t\|^2\leq \hat{L}\alpha^2(\|\bm{g}_t-\nabla f(\bm{v}_t,\bm{\theta}_0)\|^2+\|\nabla f(\bm{v}_t,\bm{\theta}_0)\|^2)
    \end{equation}
    It follows from a basic inequality $\|\bm{a}+\bm{b}\|^2\leq 2\|\bm{a}\|^2 + 2\|\bm{b}\|^2$. Substituting Eq.~\ref{eq_17} to Eq.~\ref{eq_15} and taking expectation on both sides produces:
    \begin{equation}\label{eq_18}
    \begin{split}
        &\mathbb{E}[f(\bm{v}_{t+1},\bm{\theta}_0)-f(\bm{v}_{t},\bm{\theta}_0)]+\alpha\mathbb{E}[\|\nabla f(\bm{v}_t,\bm{\theta}_0)\|^2]\leq \\&\hat{L}\alpha^2\mathbb{E}[\|\bm{g}_t-\nabla f(\bm{v}_t,\bm{\theta}_0)\|^2+\|\nabla f(\bm{v}_t,\bm{\theta}_0)\|^2].
    \end{split}
    \end{equation}
    By leveraging Assumption~\ref{assum_4}, Eq.~\ref{eq_18} becomes
    \begin{equation}\label{eq_19}
    \begin{split}
        &\mathbb{E}[f(\bm{v}_{t+1},\bm{\theta}_0)-f(\bm{v}_{t},\bm{\theta}_0)]+\alpha\mathbb{E}[\|\nabla f(\bm{v}_t,\bm{\theta}_0)\|^2]\\&\leq \hat{L}\alpha^2\zeta^2+\hat{L}\alpha^2\mathbb{E}[\|\nabla f(\bm{v}_t,\bm{\theta}_0)\|^2]
    \end{split}
    \end{equation}
    As the step size $\alpha=\sqrt{\frac{D}{\hat{L}\zeta^2T}}$ and $T\geq \frac{4\hat{L}D}{\zeta^2}$, we can obtain that $\alpha\leq \frac{1}{2\hat{L}}$. We then have:
    \begin{equation}\label{eq_20}
        \mathbb{E}[f(\bm{v}_{t+1},\bm{\theta}_0)-f(\bm{v}_{t},\bm{\theta}_0)]\leq \hat{L}\alpha^2\zeta^2 - \frac{\alpha}{2}\mathbb{E}[\|\nabla f(\bm{v}_t,\bm{\theta}_0)\|^2].
    \end{equation}
    With some simple mathematical manipulations, the following inequality is obtained:
    \begin{equation}\label{eq_21}
        \mathbb{E}[\|\nabla f(\bm{v}_t,\bm{\theta}_0)\|^2]\leq \frac{2(\mathbb{E}[f(\bm{v}_{t},\bm{\theta}_0)-f(\bm{v}_{t+1},\bm{\theta}_0)])}{\alpha}+\frac{\hat{L}\alpha\zeta^2}{2}.
    \end{equation}
    Now we sum the above equation over 1 to $T$ such that
    \begin{equation}\label{eq_22}
        \sum_{t=1}^T\mathbb{E}[\|\nabla f(\bm{v}_t,\bm{\theta}_0)\|^2]\leq\frac{2(f(\bm{v}_{1},\bm{\theta}_0)-f^*)}{\alpha}+\frac{T\hat{L}\alpha\zeta^2}{2}.
    \end{equation}
    Dividing both sides by $T$, we have:
    \begin{equation}\label{eq_23}
    \begin{split}
    \textnormal{min}_{t=1:T}\mathbb{E}[\|\nabla f(\bm{v}_t,\bm{\theta}_0)\|^2]&\leq \frac{\sum_{t=1}^T\mathbb{E}[\|\nabla f(\bm{v}_t,\bm{\theta}_0)\|^2]}{T}\\&\leq \frac{2D}{\alpha T}+\frac{\hat{L}\alpha\zeta^2}{2}.
    \end{split}
    \end{equation}
    Substituting the step size $\alpha=\sqrt{\frac{D}{\hat{L}\zeta^2T}}$ into the last inequality yields the desirable result.
\end{proof}
From the above conclusion, the initialization error $D$ and the variance of $\bm{g}$ also influence the error bound. Both a good initialization and a large batch are able to reduce the convergence error bound, making it reach an "approximate" critical point faster. 

\subsubsection{Algorithm frameworks for Momentum-based SGD and Adam}
Algorithm~\ref{alg:dlora-msgd} shows DLoRA-MSGD and DeCAF-MSGD, while Algorithm~\ref{alg:dlora-adam} presents DLoRA-Adam and DeCAF-Adam. $h_t$ in Algorithm~\ref{alg:dlora-adam} can take different forms, leading to different variants such as AMSGrad.

\subsection{Additional Numerical Results}\label{additional_nu_results}

In this subsection, we provide a detailed explanation of our mixing matrix. We also include the results from instruction tuning and present a comparative analysis of the computation and communication overhead associated with different ranks. This analysis will compare DeCAF, DLoRA-FA, and DLoRA.

\subsubsection{Mixing Matrix}
Our mixing matrix is a doubly stochastic matrix that contains values (\(\pi\)) representing the inter-agent influences in collaborative learning. For simplicity, in a fully connected topology, all elements are uniform (for example, 0.1 for 10 agents). In a ring topology, each agent equally influences its two neighbors as well as itself (for instance, 0.333 for three adjacent agents)~\cite{jiang2017collaborative,esfandiari2021cross}.

\subsubsection{Instruction Tuning Results for Large Language Models (LLMs)}

To further investigate the effectiveness of instruction tuning in collaborative learning environments, we implemented instruction tuning in a decentralized setting for the first time. While we have not yet integrated the DeCAF algorithm into this setup (ongoing work), we evaluated our proposed DLoRA-FA algorithm, which consistently ranks as either the best or second-best performer across most metrics. Table~\ref{tab:init_llm} summarizes the performance of various algorithms under the Alpaca-GPT4 IID\cite{peng2023instructiontuninggpt4} setting, involving a network of 10 decentralized agents. The base model used was the pre-trained LLaMA2-7B\cite{touvron2023llama2openfoundation}, fine-tuned using LoRA with a rank of 16, a scaling factor $\eta$ of 64, and a dropout rate of 0.1. Training was conducted with a batch size of 16, using a cosine learning rate scheduler over 2000 batches. Communication between agents occurred every 10 batches in both federated and decentralized configurations. Due to computational constraints, we did not include centralized training baselines in this evaluation. We report results using three key metrics:

MTBench\cite{zheng2023judgingllmasajudgemtbenchchatbot} (MT), which assesses the model’s ability to handle multi-turn, open-ended conversations judged by GPT-4o\cite{openai2024gpt4ocard}. Higher scores indicate better conversational coherence and contextual understanding.

AdvBench~\cite{zou2023universaltransferableadversarialattacks} (Adv), which evaluates the model’s robustness against adversarial prompts designed to elicit unsafe or misaligned responses. Higher scores reflect stronger alignment and safety.

Vicuna~\cite{zheng2023judgingllmasajudgemtbenchchatbot} (Vic), which benchmarks the model's performance specifically within Vicuna-style evaluation frameworks. Higher values indicate better alignment with Vicuna-optimized architectures.
% \noindent
% The results indicate that DLoRA outperforms baseline methods in both ring and fully connected (FC) topologies across most metrics, demonstrating the effectiveness of decentralized training. Notably, DLoRA-FC consistently achieves the highest MTBench and Vicuna scores in all setups, indicating superior conversational coherence and architectural performance. For instance, in the Alpaca-GPT4 IID setup, DLoRA-FC records the top MTBench score of 4.2 and the highest Vicuna score of 7.1. Additionally, DLoRA-FC exhibits strong robustness against adversarial attacks, reflected by competitive AdvBench scores. 
% DLoRA-Ring demonstrates strong performance, especially in the Dolly IID and Dolly Non-IID setups. It outperforms the local fine-tuning algorithm but performs slightly worse than DLoRA-FC. This is likely because a sparser network tends to perform less effectively than a denser network, which aligns with the conclusion drawn from Corollary~\ref{corollary_1}. In contrast, FedAvg generally underperforms relative to the decentralized approaches, underscoring the benefits of decentralized training in enhancing model robustness and conversational abilities.

\begin{table*}[htbp]
    \centering
    \caption{\small{Different evaluation metrics on Alpaca-GPT4 IID using LLAMA2-7B, trained with 10 agents over 2000 batches on a Fully Connected topology. Best and second-best scores are in bold and underlined.}}
    \label{tab:init_llm}
    \begin{tabular}{lccccccc}
        \toprule
        \textbf{Data} &
        \textbf{Local} & \textbf{Local-FA} & \textbf{FedAvg} 
        & \textbf{FedAvg-FA} & \textbf{DLoRA} & \textbf{DLoRA-FA} 
        % & \textbf{DeCAF} 
        \\
        \midrule
        \textbf{MT}   & $3.5$  & \underline{$3.8$} & $3.7$  & $3.7$  & \boldmath{$4.1$} & \underline{$3.8$} 
        % & $2.1$ 
        \\
        \textbf{Adv}  & $44.8$  & \boldmath{$56.3$} & \underline{51.7}  & $43.1$  & $48.1$ & $50.8$ 
        % & $7.8$ 
        \\
        \textbf{Vic}  & $6.2$   & \underline{$6.8$} & $6.7$ & $6.6$  & \underline{$6.8$} & \boldmath{$6.9$} 
        % & $3.0$ 
        \\
        \bottomrule
    \end{tabular}
\end{table*}
\begin{table}[htbp]
    \centering
    \caption{Training Time (TT) in seconds and Number of Trainable Parameters (NTP) for both VLMs and LLMs using 10 agents in a fully connected topology under an IID setting, evaluated with the DeCAF algorithm. VLM experiments used the Flower dataset with the CLIP model, while LLM experiments used the WIC dataset with the LLAMA2-7B model.}
    \label{tab:com_results}
    \begin{tabular}{lcccc}
        \toprule
        \multirow{2}{*}{\textbf{Low Rank}} & \multicolumn{2}{c}{\textbf{VLMs}} & \multicolumn{2}{c}{\textbf{LLMs}} \\
        \cmidrule(lr){2-3} \cmidrule(lr){4-5}
        & \textbf{TT} & \textbf{NTP} & \textbf{TT} & \textbf{NTP}  \\
        \midrule
        \textbf{1} & 15.62 & 92,160 & 2241.75 & 524,288 \\
        \textbf{4} & 15.64 & 368,640 & 2520.46 & 2,097,152 \\
        \textbf{16} & 15.70 & 1,474,560 & 3367.96 & 8,388,608 \\
        \textbf{64} & 15.58 & 5,898,240 & 4088.95 & 33,554,432 \\
        \bottomrule
    \end{tabular}
\end{table}

\begin{table}[htbp]
    \centering
    \caption{Training Time (TT) in seconds and Number of Trainable Parameters (NTP) for both VLMs and LLMs using 10 agents in a fully connected topology under an IID setting. The VLM evaluation used the Flower dataset with the CLIP model (Low Rank: 2), and the LLM used the WIC dataset with the LLAMA2-7B model (Low Rank: 4).}
    \label{tab:com_results_fa}
    \begin{tabular}{lcccc}
        \toprule
        \multirow{2}{*}{\textbf{Method}} & \multicolumn{2}{c}{\textbf{VLMs}} & \multicolumn{2}{c}{\textbf{LLMs}} \\
        \cmidrule(lr){2-3} \cmidrule(lr){4-5}
        & \textbf{TT} & \textbf{NTP} & \textbf{TT} & \textbf{NTP}  \\
        \midrule
        \textbf{DeCAF} & 15.21 & 184,320 & 2520.46 & 2,097,152 \\
        \textbf{DLoRA-FA} & 14.99 & 92,160 & 248.63 & 1,048,576 \\
        \textbf{DLoRA} & 15.22 & 184,320 & 251.82 & 2,097,152 \\
        \bottomrule
    \end{tabular}
\end{table}

\subsubsection{Computation and Communication Overhead}
We present the computation and communication overhead results of our proposed algorithm, evaluated on both VLMs and LLMs using 10 agents in a fully connected topology under an IID setting. The VLM experiments used the Flower dataset with the CLIP model, while the LLM experiments used the WIC dataset with the LLAMA2-7B model. For each setup, we report the Training Time (TT) in seconds for a single communication and training step, as well as the Number of Trainable Parameters (NTP). All experiments were conducted on NVIDIA A100 GPUs.

\noindent
\textbf{Low Rank Configuration:}  
Table~\ref{tab:com_results} shows the impact of varying the LoRA rank on computation and communication costs in the DeCAF setup. While training time remains almost constant for VLMs due to their smaller model size, LLMs show increasing training time with higher ranks. This is attributed to the increase in the number of trainable parameters and additional computations such as TSVD, confirming that computational overhead scales with model complexity and rank.

\noindent
\textbf{Comparison of Decentralized Methods:}  
Table~\ref{tab:com_results_fa} compares three decentralized LoRA methods: DeCAF, DLoRA-FA, and DLoRA. Among these, DLoRA-FA shows the lowest communication cost, making it more efficient in bandwidth-constrained settings. However, this reduction in communication comes at the expense of accuracy, aligning with our theoretical insights about the trade-offs in communication-efficient learning.

In terms of computation, the differences are negligible for VLMs due to their small size. However, for LLMs, DeCAF incurs significantly higher training time, primarily due to the additional TSVD step. Despite this overhead, DeCAF often yields better robustness and accuracy, as demonstrated in our other experiments, making it a compelling choice for high-stakes applications. Further optimization of the TSVD step remains an open area for future work to enhance the practical usability of DeCAF in large-scale scenarios.

\end{document}